\documentclass[conference]{IEEEtran}

\usepackage{graphicx}
\usepackage{caption}
\captionsetup{size=footnotesize,
    %justification=centering, %% not needed
    skip=5pt, position = bottom}
    
\graphicspath{
    {figures/}
}

%%%%%%%%%%%%%%%%%%%%%%%%%%%%t%%%%%%%%%%%%%%%%%%%%%%%%%%%%%%%%%%%%%%%%%
\usepackage{amsmath,amssymb,enumerate}

\usepackage{amsthm}
\usepackage{setspace}
\usepackage{booktabs}
\usepackage[usenames,dvipsnames,svgnames,table]{xcolor}
\usepackage{mathtools}
\usepackage{blindtext}
\usepackage{gensymb}
\usepackage{xparse}
\usepackage{lipsum}
\usepackage{mathrsfs}
\usepackage[mathscr]{euscript}
\usepackage{times}
\usepackage[numbers,sort&compress]{natbib}
\usepackage{multicol}
\definecolor{darkgreen}{rgb}{0,0.6,0}
\usepackage[bookmarks=true,colorlinks=true,citecolor=black,pdfpagemode=UseNone,linkcolor=black,urlcolor=BrickRed,pagebackref]{hyperref}
\usepackage[caption=false,font=footnotesize]{subfig}
\usepackage{amsfonts}
\usepackage{cleveref}
\usepackage[utf8]{inputenc}
\usepackage[T1]{fontenc}
\usepackage{textcomp}
\usepackage{arydshln}
\usepackage{tabu}
\usepackage{cuted}
\usepackage{flushend}
\usepackage{balance}

%%%%%%%%%%%%%%%%%%%%%%%%%%%%%%%%%%%%%%%%%%%%%%%%%%%%%%%%%%%%%%%%%%%%%%%%5
\usepackage{thmtools,thm-restate}

\newtheorem{problem}{Problem}
\newtheorem{theorem}{Theorem}

\newtheorem{proposition}[theorem]{Proposition}

\newtheorem{definition}{Definition}

\newtheorem{assumption}{Assumption}

\DeclareMathOperator*{\argmin}{arg\,min}

 % to use: \hl{this is some highlighted text}
\definecolor{note}{rgb}{0.1,0.1,1}
\definecolor{rephase}{rgb}{0.15,0.7,0.15}
\definecolor{bag}{rgb}{0.6,0.6,0.2}

\makeatletter
\renewcommand*\env@matrix[1][c]{\hskip -\arraycolsep
  \let\@ifnextchar\new@ifnextchar
  \array{*\c@MaxMatrixCols #1}}
\makeatother

 % Big O
 % Expected Value
 % Variance
 % Covariance

% \newcommand{\Acal}{\mathcal{A}}
% \newcommand{\Bcal}{\mathcal{B}}
% \newcommand{\Ccal}{\mathcal{C}}
% \newcommand{\Dcal}{\mathcal{D}}
% \newcommand{\Fcal}{\mathcal{F}}
% \newcommand{\Gcal}{\mathcal{G}}
% \newcommand{\Hcal}{\mathcal{H}}
% \newcommand{\Ical}{\mathcal{I}}
% \newcommand{\Kcal}{\mathcal{K}}
% \newcommand{\Lcal}{\mathcal{L}}
% \newcommand{\Mcal}{\mathcal{M}}
% \newcommand{\Ncal}{\mathcal{N}}
% \newcommand{\Ocal}{\mathcal{O}}
% \newcommand{\Scal}{\mathcal{S}}
% \newcommand{\Xcal}{\mathcal{X}}
% \newcommand{\Ycal}{\mathcal{Y}}
% \newcommand{\Zcal}{\mathcal{Z}}

\newcommand{\transpose}{\mathsf{T}}

\makeatletter
\newcommand{\mathleft}{\@fleqntrue\@mathmargin0pt}
\newcommand{\mathcenter}{\@fleqnfalse}
\makeatother

\usepackage{bm}
\usepackage{cleveref}

\usepackage{amsfonts}
\usepackage{amssymb}
\usepackage{amsbsy}
\usepackage{amsmath}
\usepackage{graphicx}
\usepackage{subcaption}
\usepackage{algorithm}
\usepackage{tablefootnote}
\usepackage{algorithm}
\usepackage{multirow}
\usepackage{algpseudocode}

\newcommand{\M}[1]{{\bm #1}} 
% vectors

\newcommand{\vb}{\boldsymbol{b}}

\newcommand{\ve}{\boldsymbol{e}}

\newcommand{\vv}{\boldsymbol{v}}

\newcommand{\vw}{\boldsymbol{w}}

\IEEEoverridecommandlockouts

                            % This command is only needed if 
                                                          % you want to use the \thanks command
% \pagenumbering{gobble}

% \mgj already defined in the template. 
% \usepackage{thmtools}
% \usepackage{thm-restate}
% \newtheorem{lemmma}[theorem]{Lemma}
% \newtheorem{proposition}[theorem]{Proposition}
% \usepackage[toc,page]{appendix}
\newcommand{\GMKF}{GMKF\xspace}
\newcommand{\GMKFlong}{Generalized Moment Kalman Filter\xspace}
\newcommand{\BPUE}{{BPUE}\xspace} % \color{orange}
\newcommand{\BLUE}{{\color{blue}BLUE}\xspace}

% \title{Geometric Kinodynamic Robot Motion Planning via Tight Moment Relaxation}
\title{GMKF: \GMKFlong for Polynomial Systems with Arbitrary Noise}

\author{Author Names Omitted for Anonymous Review. Submission Number: 236}
\author{Sangli Teng,
Harry Zhang\textsuperscript{*}, 
David Jin\textsuperscript{*}, 
Ashkan Jasour, 
Maani Ghaffari,
Luca Carlone\\

\thanks{S.~Teng and M.~Ghaffari are with the University of Michigan, Ann Arbor, MI 48109, USA. {\tt\small\{sanglit,maanigj\} @umich.edu}}
\thanks{H.~Zhang, D.~Jin and L.~Carlone are with the Massachusetts Institute of Technology, Cambridge, MA, 02139, USA. {\tt\small\{harryz,jindavid,lcarlone\} @mit.edu}}
\thanks{A.~Jasour is with Team 347T-Robotic Aerial Mobility, Jet Propulsion Lab, Pasadena, CA, 91109, USA. {\tt\small jasour@jpl.caltech.edu}}%
\thanks{* Equal contribution.}
}
\begin{document}

\maketitle
\thispagestyle{plain}
\pagestyle{plain}

% \input{logic}
% \clearpage

%%% To addressed the issue mentioned in the previous files
% \harry{author list order TBD, equal advising equal contribution stuff idk...}
\begin{abstract}
    % We introduce a novel Generalized Moment Kalman Filter (GMKF) for polynomial systems corrupted by \emph{arbitrary} noise only assuming its moments are known. We first integrate the information in the higher-order moments of the noise by generating the polynomial dynamical system. We then extend the classical Best Linear Unbiased Estimator to the Best Polynomial Unbiased Estimator (BPUE) as a polynomial optimization problem (POP).  We show that the globally optimal solution of the BPUE can be obtained via the moment relaxation of POP. Under the assumption of rank-one solution of the moment relaxation, we derive the global optimality condition and find that the dual sum-of-square polynomial is an equivalence of uncertainty in moment space. Based on the theoretical analysis, we formulate the GMKF as a Weighted Least Square problem and derive the recursive solutions. We show that GMKF performs well under high non-normal noises. The proposed filter is evaluated on various robotics applications and outperforms the SOTA methods.
    % \tsl{Too hard to read}

    This paper develops a new filtering approach for state estimation in polynomial systems corrupted by arbitrary noise, which commonly arise in robotics. We first consider a batch setup where we perform state estimation using all data collected from the initial to the current time. We formulate the batch state estimation problem as a Polynomial Optimization Problem (POP) and relax the assumption of Gaussian noise by specifying a finite number of moments of the noise. 
    We solve the resulting POP using a moment relaxation and prove that under suitable conditions on the rank of the relaxation, (i) we can extract a provably optimal estimate from the moment relaxation, and (ii) we can obtain a belief representation from the dual (sum-of-squares) relaxation.
    We then turn our attention to the filtering setup and apply similar insights to develop a \emph{\GMKFlong} (\GMKF) for recursive state estimation in polynomial systems with arbitrary noise. The GMKF formulates the prediction and update steps as POPs and solves them using moment relaxations, carrying over a possibly non-Gaussian belief. In the linear-Gaussian case, GMKF reduces to the standard Kalman Filter. We demonstrate that GMKF performs well under highly non-Gaussian noise and outperforms common alternatives, including the Extended and Unscented Kalman Filter, and their variants on matrix Lie group.

\end{abstract}
\section{Introduction}
The Kalman Filter (KF) is an optimal estimator for linear systems with Gaussian noise in the sense that it provably computes a minimum-variance estimate of the system's state. 
%It is statistically optimal in the sense that it gives the minimum variance estimation. 
However, the optimality of the KF breaks when dealing with nonlinear systems or non-Gaussian noise. 
% to deal with those systems, the research community has developed 
A plethora of extensions of the KF has been developed to deal with nonlinearity and non-Gaussianity.
For example, linearization-based methods, such as the Extended Kalman Filter (EKF)~\cite{song1992extended}, linearize the process and measurement models and propagate the covariance using a standard KF. Other techniques, such as the Unscented Kalman Filter (UKF)~\cite{wan2000unscented}, rely on sampling to better capture the nonlinearities of the system but then still fit a Gaussian distribution to the belief.
Unfortunately, these extensions do not enjoy the desirable theoretical properties of the KF: 
they are not guaranteed to produce an optimal estimate and
---in the nonlinear or non-Gaussian case--- their mean and covariance might be far from describing the actual belief distribution. 
Consider for example a system with non-Gaussian noise sampled from a discrete distribution over $\{-1, 1\}$; clearly, approximating this noise as a zero-mean Gaussian fails to capture the bimodal nature of the distribution. For this reason, existing nonlinear filters, such as the EKF and the UKF, can hardly capture the noise distribution precisely using only the mean and the covariance of the noise. 

\begin{figure}
    \centering
    \includegraphics[width=0.95\columnwidth]{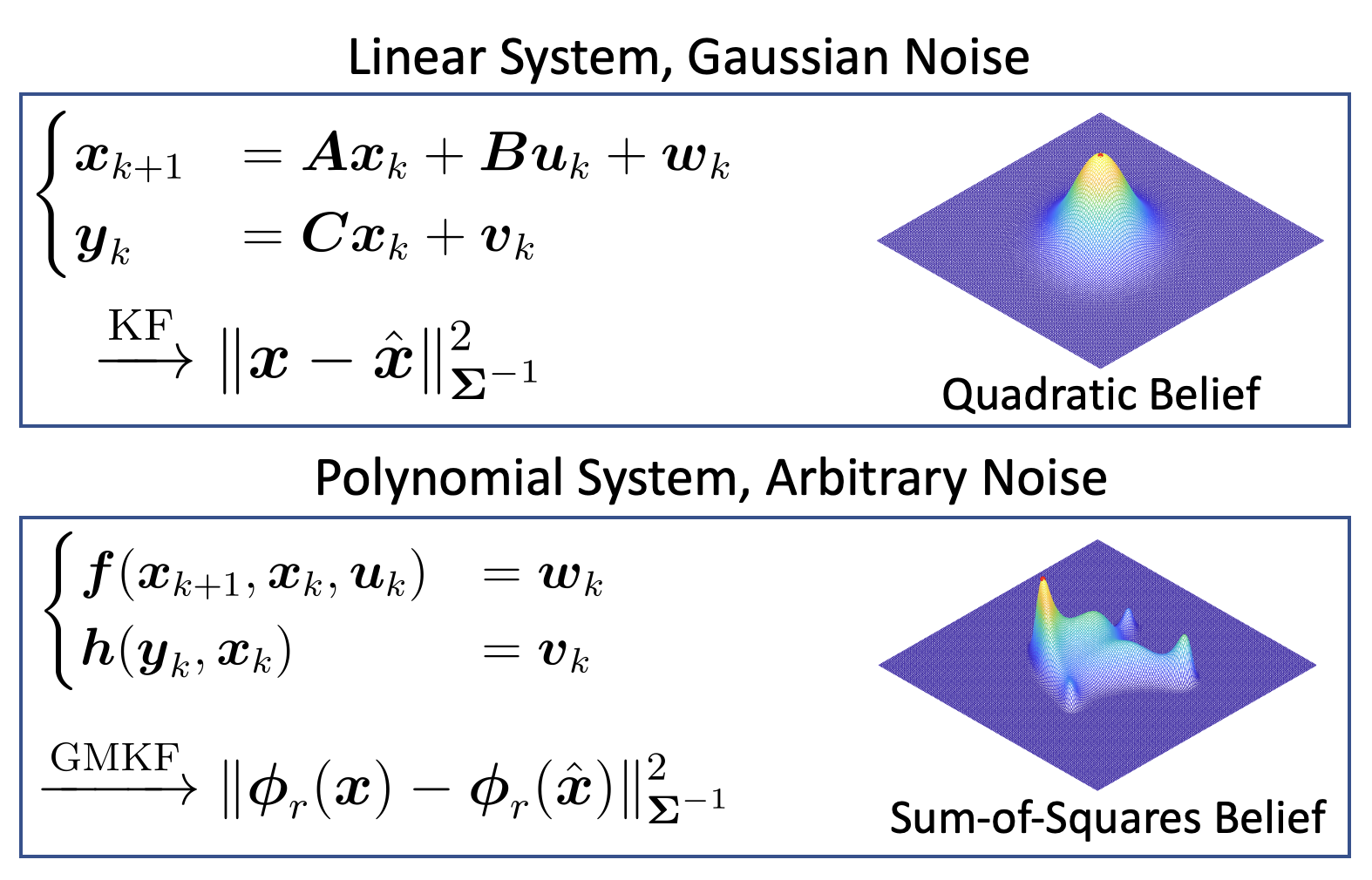}
    \caption{
    We generalize the classical Kalman Filter (KF) to operate on polynomial systems with arbitrary noise. 
    In linear Gaussian systems, the noise is described by its mean and covariance and the KF ``summarizes'' past measurements
    into a Gaussian belief, whose negative log-likelihood takes the form $\|\M x - \hat{\M  x}\|^2_{\M \Sigma^{-1}}$ (up to constants). 
    The proposed \GMKF generalizes the KF by considering an arbitrary number of moments of the noise, and summarizes the measurements into a \emph{Sum-of-Squares} belief, which generalizes the quadratic form of the KF to include higher-order moments (the vector-valued function $\M\phi_r(\M{x})$ contains monomials of degree up to $r$).     
    \label{fig:mkf-image3}
    }
\end{figure}

Thus, a natural question is, \emph{how can we provide reliable state estimation for nonlinear systems corrupted by arbitrary noise?} When the noise is non-Gaussian, and the system is non-linear, the estimation problem becomes less structured for analysis, and traditional methods only focus on the first two moments (i.e., mean and covariance) to describe the noise and the estimator belief, which are too crude to capture the true underlying distributions. We argue that by accounting for higher-order moments, we can recover the state distribution more precisely. In this work, we focus on \emph{polynomial systems}, a broad class of potentially nonlinear systems described by polynomial equations. 
These systems can model dynamics over Lie groups~\cite{Teng-RSS-23, ghaffari2022progress, teng2022error, teng2022lie, 10301632} and a broad set of measurement models~\cite{yang2022certifiably}. To handle arbitrary noise, we let the user specify an arbitrary number
% number \david{is arbitrary number accurate? or arbitrary degree of...} 
of moments to better describe the distribution of the process and measurement noise.

{\bf Contribution.} 
To address the above question, we provide two key contributions. 
Our first contribution (\Cref{sec:BPUE}) is to consider a batch setup, where one has to estimate the state of the polynomial system from the initial to the current time given all the measurements available until that time. 
In the batch case, we show that one can formulate optimal estimation as a Polynomial Optimization Problem (POP) and relax the assumption of Gaussian noise by instead requiring the user to specify a finite set of moments of the noise. 
    We solve the resulting POP using a semidefinite (SDP) moment relaxation and prove that under suitable conditions on the rank of the relaxation  
    (i) we can extract a provably optimal estimate from the moment relaxation, and (ii) we can obtain a belief representation from the dual (sum-of-squares) relaxation. We call the resulting estimator \emph{\BPUE} (\emph{Best Polynomial Unbiased Estimator}) and prove its theoretical properties in \Cref{sec:theory}.
    While in the batch setup we can compute a provably optimal estimate (whenever the SDP produced a rank-1 solution), such setup is mostly of theoretical interest, since the sizes of the to-be-estimated state and the resulting SDP grow over time and quickly become unmanageable.

Therefore, our second contribution is to apply similar insights to develop a \emph{\GMKFlong} (\GMKF) for recursive state estimation in polynomial systems with arbitrary noise (\Cref{sec:GMKF}).
The GMKF formulates the prediction and update steps as POPs and solves them using moment relaxations, 
carrying over a possibly non-Gaussian belief representation.
In the Linear Gaussian case, GMKF reduces to the standard Kalman Filter.
 While currently we cannot prove that the GMKF is an optimal estimator, we empirically show it
performs well under highly non-Gaussian noise and largely
outperforms common alternatives including the Extended, Unscented, and Invariant Kalman Filter.

% \subsection{Outline} 
Before delving into our contributions, we briefly review preliminaries on polynomial optimization, moment relaxations, and sum-of-squares programming in \Cref{sec:preliminaries}, and provide a succinct problem statement in \Cref{sec:formulation}, while we postpone 
discussion and a broader literature review to Sections~\ref{sec:discussion}-\ref{sec:relatedWork}. 
\Cref{sec:conclusion} concludes the paper.

\section{Preliminaries}
\label{sec:preliminaries}

We review key facts about polynomials, moment relaxations, and sum-of-squares programming. We also provide introductory examples for the non-expert reader in Appendix~\ref{apx:B_mat_example}. % \tsl{Cite the specific one.}
% . Albeit being mathematically dense, these preliminaries serve as a quick introduction for non-expert readers.
\subsection{Polynomials and Moment Matrix}
Let $\mathbb{R}[\M x]$ be the ring of polynomials with real coefficients where $\M x:= \begin{bmatrix}x_1, x_2, \dots, x_n  \end{bmatrix}^\transpose$. 
Given an integer $r $, we define the set $ \mathbb{N}_r^n := \{ \alpha \in \mathbb{N}^n | \sum_i \alpha_i \le r \}$ (i.e., $\alpha$ is a vector of integers that sums up to $r$). 
A monomial of degree up to $r$ is denoted as $\M x^{\alpha}:=x_1^{\alpha_1}x_2^{\alpha_2}\dots x_n^{\alpha_n}$, $ \alpha \in \mathbb{N}_r^n$.  
For a polynomial $p(\M x):=\sum_\gamma c_{\gamma}\M x^{\gamma} $, its degree $\deg p$ is defined as the largest $\|\gamma_i\|_1$ with $c_{\gamma} \neq 0$. 

% The canonical basis for polynomials  of degree up to $r$ is defined as:
% \begin{equation}
% \label{eq:basis}
%     \vv_r(\M x):= \begin{bmatrix}1, x_1, x_2,\dots,x_n, x_1^2,x_1x_2,\dots,x_n^2,x_1^r,x_2^r,\dots,x_n^r\end{bmatrix}^{\transpose}. 
% \end{equation}
% Let $s(r, n):=\begin{pmatrix}
%     n+r \\ 
%     n
% \end{pmatrix}$ be the dimension of $\vv_r(\M x)$. We then define the polynomial basis function $\boldsymbol{\phi}_r(\M x): \mathbb{R}^{n} \rightarrow \mathbb{R}^{ s(r, n)- 1}$ using all the entries of the canonical basis except for $1$:
% \begin{equation}
%         {\M \phi}_r(\M x)= \begin{bmatrix}
%         x_1, x_2,\dots,x_n, x_1^2,x_1x_2,\dots,x_n^2,x_1^r,x_2^r,\dots,x_n^r
%         \end{bmatrix}^{\transpose}. 
% \end{equation}
Let $s(r, n):= |\mathbb{N}_r^n| = \begin{pmatrix}
    n+r \\ 
    n
\end{pmatrix}$. We then define the homogeneous basis function $\boldsymbol{\phi}_r(\M x): \mathbb{R}^{n} \rightarrow \mathbb{R}^{ s(r, n)- 1}$ using all the entries of the canonical basis except for $1$:
\begin{equation}
        {\M \phi}_r(\M x)= \begin{bmatrix}
        x_1, x_2,\dots,x_n, x_1^2,x_1x_2,\dots,x_n^2,x_1^r,x_2^r,\dots,x_n^r
        \end{bmatrix}^{\transpose}. 
\end{equation}
The canonical basis for polynomials  of degree up to $r$ is:
\begin{equation}
\label{eq:basis}
    \vv_r(\M x):= \begin{bmatrix}1 \\ {\M \phi}_r(\M x)\end{bmatrix}. 
\end{equation}
Using $\vv_r(\M x)$, we define the $r$-th order moment matrix:
\begin{equation}
    \M{M}_r(\M x) = \vv_r(\M x)\vv^{\transpose}_r(\M x).
\end{equation}
Note that the moment matrix $\M{M}_r(\M x)$ is positive semidefinite and has rank one for any value of $\M x$. 
% 
%%%%% Def of {B} and p(x) %%%%%
% To construct a polynomial $p(\M x)$ up to degree $2r$, we can extract the terms from $\M{M}_r(\M x)$ given a matrix $\M{P}$ and take the trace function as the inner product:
% \begin{equation}
%     p(\M x) = \operatorname{tr}(\M{P}\M{M}_r(\M x)) =: \langle \M{P}, \M{M}_r(\M x) \rangle
% \end{equation}
% To construct a moment matrix by the monomials, we introduce the set $\left\{ \M B \right\}$:
% \begin{equation}
%     \M{B}_{\gamma} := \frac{1}{C_{\gamma}} \sum_{\gamma = \alpha + \beta} \ve_{\alpha} \ve^{\transpose}_{\beta},\ \  C_{\gamma}:= { \left| \left\{ (\alpha, \beta) | \gamma = \alpha + \beta \right\}  \right|  }
% \end{equation}
% where $\ve_\alpha \in \mathbb{R}^{s(r, n) \times 1}$ is the canonical basis corresponding to the $n$-tuples $\alpha \in \mathbb{N}_r^n$ and $C_{\gamma}$ is the normalizing factor. Thus, the moment matrix can be formulated by:
% \begin{equation}
%     \M{M}_r(\M x) = \sum_{\gamma} C_\gamma \M{B}_{\gamma} \M x^{\gamma}, \gamma \in \mathbb{N}^n_{2r}.
% \end{equation}

%%%%% Def of {B} %%%%%
We say that a polynomial is a sum-of-squares (SOS) if it can be factored as: $$\sigma(\M x) = \vv^{\transpose}_r(\M x)\M{Y}\vv_r(\M x) := \|\vv_r(\M x)\|^2_{\M{Y}} $$ % where $h_i(\M x) \in \mathbb{R}[\M x],$
for some positive semidefinite matrix $ \M{Y} \succeq 0$. Note that for a given SOS polynomial $\sigma(\M{x})$, the associated matrix $\M{Y}$ may not be unique, as discussed in \Cref{lemma:exist_sos_xi}, Appendix~\ref{apx:proof:exist_sos_xi}.

\subsection{Polynomial Optimization and Semidefinite Relaxations}
% We define the following equality-constrained polynomial optimization for our moment matching problem:
We introduce the following equality-constrained Polynomial Optimization Problem \eqref{prob:pop} and its moment and sum-of-squares (SOS) semidefinite relaxations.
\begin{definition}[Equality-constrained POP]
An equality-constrained polynomial optimization problem (POP) is an optimization problem in the form
\begin{equation}
\label{prob:pop}
\begin{aligned}
    p^*:= &\inf & p(\M x) & & \\
          &\mathrm{s.t.} & g_j(\M x) = 0,& \quad \forall j \in \{1,\ldots,m\}, \\
\end{aligned}\tag{POP}
\end{equation}
where $p(\M x), g_j(\M x) \in \mathbb{R}[\M x]$. 
% We denote the feasible set of \eqref{prob:pop} as $\mathcal{K} = \{ x | g_i(x) = 0 \}$. 
% We assume that $\mathbb{K}$ is compact. % \tsl{Add equality constraints or not.}
\end{definition}
In general, POPs are hard non-convex problems. In order to obtain a convex relaxation, 
we first rewrite the POP as a function of the moment matrix. 
The key insight here is that we can write a polynomial $p(\M x)$ of degree up to $2r$ as a linear function of the moment matrix $\M{M}_r(\M x)$, which contains all the monomials of degree up to $2r$:
%we can extract the monomial terms from $\M{M}_r(\M x)$ given a matrix $\M{C}$ and take the trace function as the inner product:
\begin{equation}
    \label{eq:basis}
    p(\M x) = \operatorname{tr}(\M{C}\M{M}_r(\M x)) := \langle \M{C}, \M{M}_r(\M x) \rangle,
\end{equation}
where we observe that $\M{M}_r(\M x)$, by construction, is a positive semidefinite rank-1 matrix.
% While \eqref{prob:pop} is usually nonconvex, 
We then relax the problem by dropping the rank-one constraint on the moment matrix $\M{M}_r(\M x)$ while enforcing:
\begin{equation}
\begin{aligned}
    \M{X} \succeq 0, \M X_{1, 1} = 1.
\end{aligned}\tag{Semidefinite relaxation}
\end{equation}
Since the moment matrix $\M M_r(x)$ contains multiple repeated entries, we further add \emph{moment constraints} that enforce these repeated entries to be identical:
% Then, we apply the linear moment matrix constraints to enforce the repeated element in $\M M_r(x)$ to be identical. We denote the linear moment constraints as $\{\M{B}^{\perp}\}$ and require:
\begin{equation}
\begin{aligned}
    \langle \M B^{\perp}_i, \M X\rangle = 0. \\
\end{aligned}\tag{Moment constraints}
\end{equation}
Finally, we enforce the equality constraints, using \emph{localizing constraints} in the form: 
%we have that $g_i(\M x)\M M_{r - \deg g_i}(\M x) = 0$.
% Thus, we can apply the \textit{localizing matrix} \cite{lasserre2001global, lasserre2015introduction} to enforce each element of $g_j(\M x)\M M_{r - \deg g_j}(\M x)$ to be zero by the following linear constraints of $\M X$:
\begin{equation}
    \langle \M G_j, \M X \rangle = 0.
    \tag{Localizing constraints}
\end{equation}
These steps lead to the standard semidefinite moment relaxation of \eqref{prob:pop}, see~\cite{lasserre2001global, lasserre2015introduction}.
\begin{definition}[Moment Relaxation of~\eqref{prob:pop}~\cite{lasserre2001global, lasserre2015introduction}]
\begin{equation}
\label{prob:pop_sdp}
\begin{aligned}
    \hat{\rho}&:= \inf_{\M{X}} \langle \M{C}, \M{X} \rangle \\
    \mathrm{s.t.} \quad & \M{X} \succeq 0, \M{X}_{1,1} = 1\\
                  \quad & \langle \M{B}^{\perp}_{i}, \M{X}\rangle = 0, \langle \M{G}_{j}, \M{X}\rangle = 0 \quad \forall i, j.
\end{aligned}\tag{MOM}
\end{equation}
% The constraints at the first line encode the condition that the $\M{X}$ is PSD and normalized. 
% The constraint at the second line enforces the moment structure using the auxiliary matrix $\M{B}^{\perp}$. 
% The third constraint enforces the equality constraints by $g_i = 0$ via the localizing matrix. 
\end{definition}
If a rank-1 solution is obtained from \eqref{prob:pop_sdp}, its solution $\hat{\M X}$ is a valid moment matrix. 
In such a case, through SVD of $\hat{\M X}$ , we can extract $\hat{\M X} = \vv_r(\hat{\M x})\vv^{\transpose}_r( \hat{\M x})$ where $\M x$ is a provably optimal solution to~\eqref{prob:pop}.

An equivalent relaxation can be obtained by taking the dual of the moment relaxation, which leads to the following sum-of-squares (SOS)  relaxation:
% Moreover, solving the moment relaxation is equivalent to searching the SOS polynomial as the dual problem of \eqref{prob:pop_sdp}:
\begin{definition}[SOS Relaxation of~\eqref{prob:pop}~\cite{parrilo2003semidefinite}]
\begin{equation}
\label{prob:pop_sdp*}
\begin{aligned}
    \hat{\rho}&:=\sup_{\rho, \sigma, \{h_j \}} \rho \\
    \mathrm{s.t.} \quad &p(\M x) - \rho = \sigma(\M x) + \sum_{j=1}^{m} h_j(\M x)g_j(\M x)\\
\end{aligned}
\tag{SOS}
\end{equation}
for some polynomial $\{h_j\}_{j=1}^{m}$ and SOS polynomial $\sigma$ satisfying: $ \operatorname{deg} \sigma(\M x) \le 2r, \operatorname{deg} h_j(\M x) g_j(\M x) \le 2r.$
% \tsl{Can delete this as do not actually solve it.}
\end{definition}

\section{Problem Formulation}
\label{sec:formulation}

% We formally define the key research problem of the work - for a polynomial dynamical system with noise, we utilize the higher-order moments information of the noise to extract the optimal state estimation. We show that this problem can be framed as a \eqref{prob:pop}.

 %corrupted by i.i.d zero-mean random noise $\M w_k$ and $\M v_k$ with known moments. 
% \david{move this to the front}
% \begin{definition}[Polynomial dynamical system]
We consider a polynomial dynamical system with additive noise written in implicit form as follows: % \tsl{The extended state should be an implicit function.}
    \begin{equation}
    \label{eq:polyDynSys}
\left\{
	\begin{array}{l}
	\M f( \M x_{k+1}, \M x_{k}, \M u_k ) = \M w_k \\
	\ \ \ \M h( \M y_k, \M x_{k} ) = \M v_k
	\end{array},  \qquad \M x_k \in \mathcal{K},\;\;\forall \; k
\right.
\end{equation}
where $\M x_k$ is the state, $\M y_k$ are the measurements, and $\M u_k$ is the control input, all at discrete time $k$; the state is restricted to the domain $\mathcal{K}$, e.g., the set of 2D poses.
Both the process model $\M f$ and the observation model $\M h$ are vector-valued real polynomials. 
We take the standard assumption that the process noise $\M w_k$ and measurement noise $\M v_k$ are zero-mean and identically and independently distributed across time steps.  
% We assume that the state, measurements, and controls belong to suitable sets: $\M x_{k} \in \mathcal{K}, \M y_k \in \mathbb{R}^m, \M u_k \in \mathcal{U}$. % Note that \eqref{eq:polyDynSys} is written in terms of implicit functions, which are slightly different from conventional expressions.
% We also consider the $\M f(\M x_{k+1}, \M x_{k}, \M u_k)$ is the polynomial process model, and $\M h(\M y_k, \M x_{k})$ is the polynomial observation model.
% \end{definition}

\begin{assumption}[$\mathcal{K}$] \label{ass:domain}
The domain $\mathcal{K}$ is either the Euclidean space $\mathcal{K}=\mathbb{R}^{n}$ (i.e., the state is unconstrained), or
can be described by polynomial equality constraints.
\end{assumption}

Assumption~\ref{ass:domain} is relatively mild and captures a broad set of robotics problems where the variables belong to semi-algebraic sets (e.g., rotations, poses); see for instance~\cite{carlone2022estimation}. 

% $\mathcal{K}$ and $\mathcal{U}$ are the supports of $\M x_{k}$ and $\M u_{k}$. Especially, $\mathcal{K}$ is the same as the support defined by the equality constraint in \eqref{prob:pop}, which we further specify when we convert it into an optimization problem.

Assuming the measurements $\M y_k$ and the controls $\M u_k$ to be known for all $k$ up to the current time,  and 
assuming the moments of the noises $\M v_k, \M w_k$ are also given, our goal is to estimate the state ${\M x}_k$.
%fuse the information of the observation and dynamics to obtain the optimal state estimation $\hat{\M x}_k$. 
Compared to the classical Kalman filter (and its variants) that only uses the mean (1st-order moment) and covariance (2nd-order moment) of the noise, 
our goal is to improve the state estimation by incorporating higher-order moments when the noise is non-Gaussian. 

\section{Batch Estimation in Polynomial Systems with~Arbitrary~Noise}
\label{sec:BPUE}

In this section, we consider a batch setup where we are given measurements from time $0$ to the current time $N$ and our goal is to estimate the state ${\M x}_0, \ldots, {\M x}_{N}$.
%show how to incorporate higher-order moments of the noise into the polynomial state-space model and design an optimal filter, dubbed the Generalized Moment Kalman Filter (GMKF), for state estimate using moment relaxation. 
We first show how to account for the higher-order moments of the noise (Section~\ref{sec:extendedPolySys}) and then introduce an optimal estimator  (Section~\ref{sec:BPUEdetails}). 
For the remainder of the paper, { we use $\bar{(\cdot)}$ to denote the true state} and $\hat{(\cdot)}$ for the estimated state. 
% \textcolor{blue}{(Ashkan: Try using a top-down approach instead of a bottom-up one when explaining the GMKF.
% Start with Eq of Figure (1), e.g., to consider higher order moments GMKF minimizes Eq(I): $||\phi(x) - \phi(\hat{x})||^2_{P^{-1}}$ instead of $||x- \hat{x}||^2_{P^{-1}}$. Then start expanding Eq(I) to get to Eq(11) in Problem 1.)}

\subsection{Extended Polynomial Dynamical Systems}
\label{sec:extendedPolySys}
% The classical Kalman filter assumes a linear dynamical system and uses moment information of the noise up to the second order, i.e. the mean and covariance. 
% In our formulation, we make no assumption on the linearity and the Gaussian noise. 
% Instead, we only assume that we know arbitrary moments of the noise. 
To utilize the higher-order moments of the noise, we need an extended version of system \eqref{eq:polyDynSys} that exposes the moments of the noise in the state-space model.
%includes the higher-order terms of noise in the state space model.
% \begin{definition}[Extended polynomial dynamical system] 
To achieve this goal, we apply $\M \phi_r(\cdot)$ to both sides of eq.~\eqref{eq:polyDynSys} to generate monomials of degree up to $r$ from the entries of $\M w_k$ and $\M v_k$: 
\begin{equation}
\begin{aligned}
    \label{eq:ext-polyDynSys}
    &\left\{
	\begin{array}{l}
	\M \phi_r\left(\M f( \M x_{k+1}, \M x_{k}, \M u_k )\right) = \M \phi_r(\M w_k) \\
	\M \phi_r\left(\M h( \M y_k, \M x_{k} )\right) = \M \phi_r(\M v_k) \\
	\end{array}.
\right. \\
\end{aligned}
\end{equation}
Intuitively, eq.~\eqref{eq:ext-polyDynSys} combines the entries of the original process and measurement models into higher-order polynomials by taking powers and products of the entries of $\M f$ and $\M h$. Below, we refer to $\M \phi_r(\M w_k)$ and $\M \phi_r(\M v_k)$ as the \emph{extended noise}.% and $r$ as the \emph{extended degree}}.

To design an unbiased estimator, we subtract the mean of the extended noises on both sides of \eqref{eq:ext-polyDynSys}, resulting in the following ``moment conditions'' of the extended system \eqref{eq:ext-polyDynSys}:\footnote{The astute reader may recognize that this is equivalent to the moment conditions imposed in the Generalized Method of Moments \cite{hansen1982large, hayashi2011econometrics}.} 
% \textcolor{blue}{(Ashkan:How did you derive Eq (8)) from Eq(7)?) According to the definition of $\phi_r$, it is a vector of polynomials basis. Why do you call it moments?}\harry{GMM terminology footnote}
% \david{(8) is definition rather than by derivation. The moment condition comes from GMM.}
%
\begin{equation}
\label{gmm_condition}
\begin{aligned}
    \M m_{dyn}(\M x_{k+1}, \M x_{k}, \M u_k) &:= \M \phi_r(\M f(\M x_{k+1}, \M x_{k}, \M u_k)) - \mathbb{E}[\M \phi_r(\M w_k)]  
    \\
    & {\color{gray} \left( = \M \phi_r(\M w_k) - \mathbb{E}[\M \phi_r(\M w_k)] \right) }
    \\
    \M m_{obs}(\M y_k, \M x_k) &:= \M \phi_r(\M h(\M y_k, \M x_k)) -  \mathbb{E}[\M \phi_r(\M v_k)]
    \\
    & {\color{gray} \left(= \M \phi_r(\M v_k) - \mathbb{E}[\M \phi_r(\M v_k)] \right) }
\end{aligned}.
\end{equation}
By construction, $\M m_{dyn}$ and $\M m_{obs}$ are zero-mean vectors if $\M x_k$ and $\M x_{k+1}$ are taken at the true values $\bar{\M x}_k$ and $\bar{\M x}_{k+1}$. Then, we define the covariance of the extended noise on the RHS of \eqref{gmm_condition}:
\begin{equation}
\begin{aligned}
    \M Q_{r,k} &:= \operatorname{Cov}\left[\left(\M \phi_r(\vw_k) - \mathbb{E}\left[\M \phi_r(\vw_k)\right]\right)\right]  \\
    \M R_{r,k} &:= \operatorname{Cov}\left[\left(\M \phi_r(\vv_k) - \mathbb{E}\left[\M \phi_r(\vv_k)\right]\right)\right]
    % \M R_r &:= \mathbb{E}\left[\left(\M \phi_r(\vv) - \mathbb{E}\left[\M \phi_r(\vv)\right]\right)\left(\M \phi_r(\vv) - \mathbb{E}\left[\M \phi_r(\vv)\right]\right)^{\transpose}  \right] \\
    % \M Q_r &:= \mathbb{E}\left[\left(\M \phi_r(\vw) - \mathbb{E}\left[\M \phi_r(\vw)\right]\right)\left(\M \phi_r(\vw) - \mathbb{E}\left[\M \phi_r(\vw)\right]\right)^{\transpose}  \right]  
\end{aligned}.
\label{RQdef}
\end{equation}

% It is obvious to see they have zero mean value when taking $\M x_{k}$ at the ground truth value $\bar{x}_k$. 

Given the moment conditions~\eqref{gmm_condition} and the covariance of the extended noise~\eqref{RQdef}, 
we are now ready to define the optimal estimator for the batch case.
% batch state estimation problem as the follows \eqref{prob:pop} to obtain the state estimate $\hat{\M x}$:
\begin{problem}[Optimal State Estimator in the Batch Setup] % \LC{this makes a lot of sense, but it is confusing since we don't actually solve this problem}
Given $N$ control inputs $\{\M u_0, \M u_1, \cdots, \M u_{N-1}\}$, measurements $\{\M y_0, \M y_1, \cdots, \M y_{N-1}\}$,
and covariance matrices of the extended noise 
% \begin{equation}
    ${ \{ \M Q^{-1}_{r, 0}, \M R^{-1}_{r, 0}, \cdots, \M Q^{-1}_{r, N-1}, \M R^{-1}_{r, N-1}\} }$, 
the optimal estimator for the state $\M x = \{\M x_0, \M x_1, \cdots, \M x_N\}$ is the solution of the following optimization problem.
\label{prob:filter_batch}
    \begin{equation}
    \begin{aligned}
        \min_{ \{\M x_{0}, \dots \M x_N \} }  \sum_{k=0}^{N-1} & \| \M m_{dyn}(\M x_{k+1}, \M x_{k}, \M u_{k})\|^2_{\M Q^{-1}_{r, k}}
        + \| \M m_{obs}(\M y_k, \M x_{k})\|^2_{\M R^{-1}_{r, k}} \\
        + & \| \M \phi_r(\M x_0) - \M \phi_r(\hat{\M  x}_0)\|^2_{\M \Sigma^{-1}_{r, 0}} \\
        \mathrm{s.t.} & \quad \M x_{k} \in \mathcal{K}, ~\forall k \in \{0, 1, \dots, N\},
    \end{aligned}
\end{equation}
where $\| \M \phi_r(\M x_0) - \M \phi_r(\hat{\M  x}_0)\|^2_{\M \Sigma^{-1}_{r, 0}}$ is a given prior on the initial state at time $0$ such that $\M \phi_r(\M x_0) - \M \phi_r(\hat{\M  x}_0)$ is a zero-mean random variable with covariance $\M \Sigma_{r, 0}$. 
%, and $\mathcal{K}$ is the domain of the state variable $\M x_k$ (for instance, the set of 2D rotations).
%To consider the initial state, we also include an SOS polynomial for $\M x_0$ in the cost function with the weighting matrix $\M P_{r, 0}^{-1}, \M P_{r, 0}>0$. 
\end{problem}
% By solving \Cref{prob:filter_batch}, we wish to obtain the minimizer $\{\hat{\M x}_0, \hat{\M x}_1, \cdots, \hat{\M x}_N \}$ th

% \begin{definition}[SOS belief]
% \label{def:SOS-factor}
% Given the estimate $\hat{\M x}$ and generalized covariance matrix $\M P$, the $r$-th order SOS belief is defined as:
%     \begin{equation}
%         \| \M \phi_r(\M x) - \M \phi_r(\hat{\M x}) \|^2_{\M P^{-1}}, \ \M P > 0.
%     \end{equation}
% % where the weighting matrix is the inverse of the equivalent generalized covariance of $\M \phi_r(\M x)$. % \tsl{Maybe call it prior SOS? }
% \end{definition}
% We will show that the SOS belief can be naturally obtained from the dual of \eqref{prob:pop_sdp} if a unique optimal solution is available.  

\subsection{Best Polynomial Unbiased Estimator}
\label{sec:BPUEdetails}
% In the remainder of this paper, we make the following assumption to focus on the case that the \eqref{prob:pop_sdp} returns rank-one solutions:
% \begin{assumption}[Rank-one SDP]
% \label{assump:rank1sdp}
%     We assume that the solutions to the moment relaxations solved in this work satisfy the strict first-order optimality condition in \Cref{def:kkt_sdp}. 
% \end{assumption}
% Specifically, \Cref{prob:filter_batch} is a batched \eqref{prob:pop}.
 % \LC{unfortunately if modeling constraints is one the the key things we are trying to do, we cannot consider an unconstrained setup:} For simplicity, we consider the $\M x_{k}$ is unconstrained. 

%%%%% OLD VERSION %%%%%
% Analogous to the BLUE in linear dynamical systems with Gaussian noise \cite{kalman1960new, grewal2014kalman}, we derive the Best Polynomial Unbiased Estimator \eqref{eq:BPUE}  to estimate the monomials $\M \phi_r(\vxx)$ for the extended polynomial system in \eqref{eq:ext-polyDynSys}.
% It can be shown that the moment condition \eqref{gmm_condition} can be rearranged as an affine function of the basis $\M \phi_r(\M x)$: 
%%%%% OLD VERSION %%%%%

% {\color{red} this needs more explanation and intuition. Also: you are assuming rank-1 before even introducing the SDP. }\david{how does this new version look. Tried to draw connections between BPUE and BLUE}\tsl{There is also a shorter version. See which one is better.}

%%%%% tsl: a shorter version %%%%%
% mention m_dyn m_obs explicitly...
% point to appendix here..
Since the moment conditions $\M{m}_{dyn}$, $\M{m}_{obs}$, and the term $\M \phi_r(\M x_0) - \M \phi_r(\hat{\M  x}_0)$ in the prior in \Cref{prob:filter_batch} are vectors of polynomials, and under~\Cref{ass:domain}, \Cref{prob:filter_batch} is a polynomial optimization problem~\eqref{prob:pop}. 
Since each polynomial of degree up to $r$ can be written as a linear combination of the monomial basis $\M{\phi}_r(\M{x}), $\footnote{The non-expert reader can find examples in Appendix~\ref{appx:momcond_linear}.}
we can rewrite both the moment conditions and the prior in \Cref{prob:filter_batch} as linear expressions of the monomials in $\M{\phi}_r(\M{x})$:
\begin{equation}
\label{eq:gmm_cond}
    %\M{m} = 
    \M{b}(\M{y}) - \M{A}(\M{y})\M{\phi}_r(\M{x}) = \M{v}.
\end{equation}
The matrix $\M{A}(\M{y})$ and the vector $\M{b}(\M{y})$ only depend on the measurements ($\M{y}$ denotes the set of all measurements from time $0$ to the current time) and $\M{v}$ is zero-mean noise with covariance $\M{V}$, where $\M{V}$ is a block diagonal matrix with blocks corresponding to the noise covariances $\M Q^{-1}_{r, 0}, \M R^{-1}_{r, 0}, \cdots, \M Q^{-1}_{r, N-1}, \M R^{-1}_{r, N-1}$, and the prior covariance $\M \Sigma^{-1}_{r, 0}$. % defined in \Cref{prob:filter_batch}. \tsl{How about this?}
This leads to the definition of the
% Thus, any instances of \Cref{prob:filter_batch} can be reconstructed using the affine form and we define the following 
Best Polynomial Unbiased Estimator (\BPUE), which simply rewrites  \Cref{prob:filter_batch} in a more compact form.
\begin{definition}[Best Polynomial Unbiased Estimator] 
The {Best Polynomial Unbiased Estimator (\BPUE)} is the solution of the following optimization problem: 
% Given the moment condition in the form of \eqref{eq:gmm_cond}, we define the BPUE with weighting matrix $\M W$ as:
\label{def:BPUE}
    \begin{equation}
    \label{eq:BPUE}
        \hat{\M x} = \argmin_{{\M x}\in \mathcal{K}} \|\M b(\M y) - \M A(\M y)\M \phi_r(\M x)\|^2_{\M{V}^{-1}}.
        \tag{\BPUE}
    \end{equation}
% \LC{these are claims and belong to a proposition/theorem, not to a "problem"}
% which matches the solution of \Cref{prob:filter_batch} with the weighting matrix $\M V^{-1}$,
%\cite{hayashi2011econometrics, hansen1996finite}. % \tsl{Shall we add this sentence here?}
\end{definition}
For simplicity, we omit the dependency of $\M A(\M y)$ and $\M b(\M y)$ on $\M y$ in the objective of \eqref{eq:BPUE} in the rest of this paper.
\begin{proposition}
    \eqref{eq:BPUE} is an unbiased estimator of $\bar{\M x}$.
    %as $ \mathbb{E}[\M v]$ has zero mean when $\M x = \bar{\M x}$ \cite{hansen1982large, hayashi2011econometrics}.
\end{proposition}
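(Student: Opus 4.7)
The plan is to exploit the construction of the moment conditions in~\eqref{gmm_condition} together with the linearization~\eqref{eq:gmm_cond} to show that the residual of the \eqref{eq:BPUE} objective vanishes in expectation when evaluated at the true trajectory, which certifies unbiasedness in the moment-matching sense used throughout the Generalized Method of Moments literature cited earlier in the section.

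First I would verify that $\mathbb{E}[\M b(\M y) - \M A(\M y)\M\phi_r(\bar{\M x})] = \M 0$. By construction, each block of this stacked residual is either $\M m_{dyn}(\bar{\M x}_{k+1}, \bar{\M x}_k, \M u_k) = \M\phi_r(\M w_k) - \mathbb{E}[\M\phi_r(\M w_k)]$, or $\M m_{obs}(\M y_k, \bar{\M x}_k) = \M\phi_r(\M v_k) - \mathbb{E}[\M\phi_r(\M v_k)]$, or the prior residual $\M\phi_r(\bar{\M x}_0) - \M\phi_r(\hat{\M x}_0)$. The first two are zero-mean by the explicit subtraction of the noise moment, and the last is zero-mean by the assumption on the prior stated in \Cref{prob:filter_batch}. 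Stacking, we get that the vector $\M v$ on the right-hand side of \eqref{eq:gmm_cond} has zero mean.

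Next I would check the first-order optimality condition for~\eqref{eq:BPUE}. Writing $J(\M x) = \|\M b - \M A \M\phi_r(\M x)\|^2_{\M V^{-1}}$, its gradient is
\begin{equation*}
\nabla J(\M x) = -2\,(\nabla_{\M x}\M\phi_r(\M x))^{\transpose}\M A^{\transpose}\M V^{-1}\bigl(\M b - \M A\M\phi_r(\M x)\bigr).
\end{equation*}
Evaluating at $\bar{\M x}$ yields $\nabla J(\bar{\M x}) = -2\,(\nabla_{\M x}\M\phi_r(\bar{\M x}))^{\transpose}\M A^{\transpose}\M V^{-1}\M v$. Since $\bar{\M x}$ is deterministic and $\M V^{-1}$ is a constant weight, taking expectations (and invoking independence between $\M A(\M y)$ and the current realization of $\M v$ where needed) gives $\mathbb{E}[\nabla J(\bar{\M x})] = \M 0$. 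Hence the true trajectory is a stationary point of $\mathbb{E}[J(\cdot)]$, which under \Cref{ass:domain} (so that the polynomial constraints admit a standard Lagrangian treatment) is the sense in which \eqref{eq:BPUE} is unbiased.

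The main obstacle is conceptual rather than computational: one must be careful about what ``unbiased'' means for an estimator defined through a nonlinear argmin, since in general $\mathbb{E}[\hat{\M x}]\ne\bar{\M x}$ by Jensen's inequality. The intended notion, as in the BLUE analogy, is that the design equations~\eqref{eq:gmm_cond} are satisfied in expectation at the truth, equivalently that $\bar{\M x}$ annihilates the expected residual and the expected gradient. A secondary subtlety is the dependence of $\M A(\M y)$ on noisy measurements; this must be handled either by tower-property/conditional arguments or by noting that in the formulation of \eqref{gmm_condition} the measurement $\M y_k$ enters only inside the polynomial $\M m_{obs}(\M y_k,\bar{\M x}_k)$, whose expectation is already shown to be zero.
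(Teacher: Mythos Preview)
The paper states this proposition without proof, so there is no reference argument to compare against. Your reading of ``unbiased'' in the GMM sense---that the residual $\M b(\M y)-\M A(\M y)\M\phi_r(\bar{\M x})=\M v$ has zero mean by construction of~\eqref{gmm_condition}---is almost certainly the intended content, and your first paragraph establishes it cleanly. Indeed the paper's later analysis in Section~\ref{sec:BPUE_cov} works from exactly this identity and explicitly concedes that $\M A(\M y)$ and the dual multipliers depend on the noise realization, so the strict sense $\mathbb{E}[\hat{\M x}]=\bar{\M x}$ is not being claimed; only the limiting noiseless case is analyzed (Section~\ref{sec:theory-covariance}).

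Your second step, the gradient computation, is where the argument has a genuine gap. You write $\mathbb{E}[\nabla J(\bar{\M x})]=\M 0$ by ``invoking independence between $\M A(\M y)$ and the current realization of $\M v$ where needed,'' but that independence does not hold: $\M A(\M y)$ is built from the measurements, which are functions of the same noise realizations that populate $\M v$ (see the worked example in Appendix~\ref{appx:momcond_linear}, where rows of $\M A$ contain $y_1,y_2$). Hence $\mathbb{E}[\M A(\M y)^{\transpose}\M V^{-1}\M v]\neq\bar{\M A}^{\transpose}\M V^{-1}\mathbb{E}[\M v]$ in general, and the expected-gradient step does not go through. You partly acknowledge this in your closing paragraph, but the fix you propose (that $\M y_k$ enters only through $\M m_{obs}$) addresses zero-mean of the \emph{residual}, not zero-mean of the \emph{product} $\M A^{\transpose}\M V^{-1}\M v$ appearing in the gradient. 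The cleanest repair is simply to drop the gradient paragraph: your first paragraph already delivers the moment-condition (Fisher-consistency) unbiasedness that the name ``Best Polynomial \emph{Unbiased} Estimator'' is meant to parallel with BLUE, and that is all the proposition is asserting.
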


% {\color{red} should we say in which sense it is "best"?}\tsl{Hard to say as we do not have a full proof. The current covariance is best if $W = V^{-1}$ with the same proof in BLUE/GMM. }

\subsection{From Quadratic to SOS Beliefs}

Problem~\eqref{eq:BPUE} is a \eqref{prob:pop} so we can apply the standard moment relaxation \eqref{prob:pop_sdp} to search for a globally optimal solution. However, we typically want to obtain a belief representation for the estimate rather than a point estimate. Below we show that this can be done by interpreting the computation of the belief in terms of a factorization of the objective of~\eqref{eq:BPUE}.
% using the dual lens of SOS programming.
We make this point by drawing a parallel with the \BLUE, the \emph{Best Linear Unbiased Estimator}, which is the solution to standard linear least squares.

{\bf \BLUE and Quadratic Belief.} 
% \david{this sentence needs double check}
Consider for an instant the standard linear Gaussian setup (without state constraints) used in classical Kalman Filtering. In such a case,~\eqref{eq:BPUE} with $r=1$ simplifies to standard linear least squares:
    \begin{equation}
    \label{eq:BLUE}
        \hat{\M x} = \argmin_{{\M x}}\quad \|\M b - \M A \M x\|^2_{{\M V}^{-1}}, \tag{\BLUE}
    \end{equation}
which admits as a solution the well-known Best Linear Unbiased Estimator $\hat{\M x} = (\M A^\transpose \M{V}^{-1}\M A)^{-1} \M A^\transpose \M{V}^{-1} \M b$ with covariance $ \M \Sigma = (\M A^\transpose \M{V}^{-1} \M A)^{-1}$.  
While the derivation of the covariance $\M \Sigma$ is typically obtained via probabilistic arguments (i.e., by linearly propagating the noise covariance $\M V$), here we provide an alternative optimization-theoretic interpretation of the 
computation of the estimator $\hat{\M x}$ and its covariance $\M \Sigma$.
Towards this goal, we first need to define the notion of quadratic belief.

% When only linear terms of $\M\phi_r(\M{x})$ are considered, i.e. when $r = 1$ and $\mathcal{K} = \mathbb{R}^n$, \eqref{eq:BPUE} reduces to BLUE which can be solved via quadratic programming. In this linear Gaussian case, solving the QP is equivalent to searching for a positive definite quadratic form \cite{humpherys2012fresh} as a belief representation:
\begin{definition}[Quadratic Belief]
\label{def:quadratic-belief}
Given the estimate $\hat{\M x}$ and the covariance ${\M{\Sigma}} \succ 0$, the quadratic belief is defined as:\footnote{The term ``quadratic'' belief is admittedly a misnomer: while in the linear Gaussian case, the quadratic form does define the Gaussian belief,  the quadratic form is actually the negative log-likelihood of the belief.} 
    \begin{equation*}
        \| \M x - \hat{\M x} \|^2_{{\M{\Sigma}}^{-1}}. % = \|\M{A}\M{x} - \M{b}\|^{2}_{\M{W}}
    \end{equation*}
% % {\color{red}Note that the Gaussian belief is the log-likelihood of the Gaussian distribution with mean $\hat{\M{x}}$ and covariance $\hat{\M{P}}$. }
% \tsl{Mention the connection with P and A b } 
\end{definition}
% Considering the BLUE with $r = 1$, we have
% $$\| \M x - \hat{\M x} \|^2_{\hat{\M P}^{-1}} = \|\M{A}\M{x} - \M{b}\|^{2}_{\M{W}}$$
% with $\hat{\M{x}}=(\M{A}^{\transpose}\M{W}\M{A})^{-1}\M{W}\M{b}$ the optimal estimate and $\hat{\M{P}}^{-1} = \M{A}^{\transpose}\M{W}\M{A}$ the inverse of covariance of the state estimate when $\M{W} = \M{V}$
\begin{restatable}[Quadratic Belief of \BLUE]{proposition}{propbluequad}
% \begin{proposition}
\label{prop:blue-quad}
    %{\color{red}For proper $\M{A}$ and $\M{b}$, }
    Assuming the matrix $\M A$ is full column rank, the cost function of~\eqref{eq:BLUE} can be factorized 
    %as a  The computation of the BLUE $\hat{\M x}$ and its covariance $\M \Sigma$ can be obtained by factorizing the least squares problem~\eqref{eq:BLUE} 
    into a quadratic belief:   
    \begin{equation}
        \| \M{A}\M{x} - \M{b} \|^2_{\M{V}^{-1}} = \| \M x - \hat{\M x} \|^2_{\M{\Sigma}^{-1}} + \hat{\rho},
    \end{equation}
    % with $\M{\Sigma} = ( \M{A}^{\transpose}\M{W}\M{A} )^{-1}$, $\hat{\M{x}} =  \M{\Sigma}\M{A}^{\transpose}\M{W}\M{b}$, and $\hat{\rho} \in \mathbb{R}$ are the BLUE estimate and its covariance, respectively. % a constant.
   where $\hat{\M{x}} =  \M{\Sigma}\M{A}^{\transpose}\M{V}^{-1}\M{b}$ and $\M{\Sigma} = ( \M{A}^{\transpose}\M{V}^{-1}\M{A} )^{-1}$ are the \ estimate and its covariance respectively, and $\hat{\rho} \geq 0$ is the optimal objective of~\eqref{eq:BLUE}.
% \end{proposition}
\end{restatable}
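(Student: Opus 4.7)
The plan is to prove the identity by completing the square in $\M x$ and then identifying the resulting three pieces as the quadratic form, the estimate, and the optimal objective. First, I would expand the weighted squared norm directly:
\begin{equation*}
\|\M A \M x - \M b\|^2_{\M V^{-1}} = \M x^\transpose \M A^\transpose \M V^{-1}\M A \, \M x - 2 \M b^\transpose \M V^{-1}\M A \, \M x + \M b^\transpose \M V^{-1}\M b.
\end{equation*}
Since $\M V$ is a block diagonal of positive-definite noise covariances and $\M A$ is assumed to have full column rank, the matrix $\M A^\transpose \M V^{-1}\M A$ is symmetric positive definite, so setting $\M \Sigma := (\M A^\transpose \M V^{-1}\M A)^{-1}$ is well defined and yields $\M \Sigma^{-1} = \M A^\transpose \M V^{-1}\M A$.

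Next, I would complete the square by matching the quadratic and linear parts of the expansion against $(\M x - \hat{\M x})^\transpose \M \Sigma^{-1}(\M x - \hat{\M x})$. Equality of the linear parts forces $\hat{\M x} = \M \Sigma \M A^\transpose \M V^{-1}\M b$, which is exactly the claimed estimator, and equality of the constant parts then forces $\hat{\rho} = \M b^\transpose \M V^{-1}\M b - \hat{\M x}^\transpose \M \Sigma^{-1}\hat{\M x}$. Substituting these two definitions back into the right-hand side and expanding verifies the identity, which is the content of the proposition.

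Finally, to see $\hat{\rho} \ge 0$ and that $\hat{\M x}$ is indeed the \BLUE{} estimate (not just an algebraic artifact), I would observe that the term $\|\M x - \hat{\M x}\|^2_{\M \Sigma^{-1}}$ is nonnegative for all $\M x$ and vanishes if and only if $\M x = \hat{\M x}$; hence $\hat{\M x}$ is the unique minimizer of~\eqref{eq:BLUE} and $\hat{\rho}$ is the optimal cost, which is nonnegative because the original objective is a weighted sum of squares.

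There is no real obstacle here; the proof is purely algebraic and follows the standard completion-of-squares manipulation used for linear least squares. The only subtlety is invoking the full-column-rank hypothesis on $\M A$ together with $\M V \succ 0$ to guarantee the invertibility of $\M A^\transpose \M V^{-1}\M A$, without which $\M \Sigma$ would not exist and the quadratic belief interpretation would break down.
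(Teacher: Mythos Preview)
Your proposal is correct and follows essentially the same approach as the paper: both expand the weighted least-squares cost, match it against the quadratic form $\|\M x - \hat{\M x}\|^2_{\M \Sigma^{-1}}$ with $\M \Sigma = (\M A^\transpose \M V^{-1}\M A)^{-1}$ and $\hat{\M x} = \M \Sigma \M A^\transpose \M V^{-1}\M b$, and identify the remaining constant as the optimal objective $\hat{\rho}$. Your argument that $\hat{\rho}\ge 0$ and $\hat{\M x}$ is the minimizer (via nonnegativity of the quadratic belief term) is slightly cleaner than the paper's direct substitution of $\hat{\M x}$ back into the cost, but the content is the same.
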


{\bf \BPUE and SOS Belief.} 
We are now ready to generalize~\Cref{def:quadratic-belief} and \Cref{prop:blue-quad} to the polynomial case with arbitrary noise. 
Towards this goal, we generalize~\Cref{def:quadratic-belief} to account for higher-degree polynomials.
% For \eqref{eq:BPUE} where higher-order moments of the noise are considered in the polynomial systems, we hereby consider the counterpart as the following SOS polynomial: 
% We here extend the quadratic form to the following SOS polynomial associated with the rank-one optimal solution of the BPUE called SOS belief:
\begin{definition}[SOS Belief]
\label{def:SOS-factor}
Given the estimate $\hat{\M x}$ and ${\M \Sigma} \succ 0$, the SOS belief $\sigma(\hat{\M x}, {\M \Sigma})$ is an SOS polynomial defined as:
    \begin{equation*}
        \sigma(\hat{\M x}, {\M \Sigma}) := \| \M \phi_r(\M x) - \M \phi_r(\hat{\M x}) \|^2_{{\M \Sigma}^{-1}}.
    \end{equation*}
\end{definition}
The SOS belief is a generalization of the quadratic belief (whenever $r>1$) and simplifies to a quadratic belief for $r=1$. 
%considering the $\phi_r(\M{x})$ with $r\ge 2$. A similar factorization can be obtained via solving the \eqref{prob:pop_sdp}:
We generalize \Cref{prop:blue-quad} as follows.
\begin{theorem}[SOS Belief of Unconstrained \BPUE]
\label{theorem:cov_mkf}
% By \Cref{lemma:exist_sos_xi}, 
In the unconstrained case ($\mathcal{K}$ is the Euclidean space), 
%{\color{red}for proper $\M{A}$ and $\M{b}$,} 
under suitable rank conditions (formalized below in~\Cref{def:kkt_sdp}), 
the cost function of \eqref{eq:BPUE} can be factorized into an SOS belief: $$\|\M A \M \phi_r(\M x) - \M b\|^2_{\M{V}^{-1}} = \|\M \phi_r(\M x) - \M \phi_r(\hat{\M x})\|^2_{\M \Sigma^{-1}} + \hat{\rho}, $$
% and at the noiseless operating point, we have the approximated covariance as $\M \Sigma^{-1}$.
% The covariance of the estimate can be approximated by $\M \Sigma^{-1}$.
where $\hat{\M x}$ is the optimal solution of \eqref{eq:BPUE},  $\M \Sigma$ is a suitable positive definite matrix, and
$\hat{\rho} \geq 0$ is the optimal objective of~\eqref{eq:BPUE}.
% \david{check correctness? before was `$\hat{\rho}$ is the objective'}\tsl{ rank-1 sos is rank-1 primal + strict dual so we do not need to say it again. how to guarantee rank-1 sos depends on A/b.}
\end{theorem}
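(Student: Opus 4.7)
The plan is to exploit the duality between the moment relaxation \eqref{prob:pop_sdp} and the sum-of-squares relaxation \eqref{prob:pop_sdp*} of \eqref{eq:BPUE}. Under a rank-one condition on the primal SDP, strong duality and complementary slackness will force the dual slack matrix to have a very specific block structure that simultaneously encodes the optimizer $\hat{\M x}$ and the covariance $\M \Sigma$; the claimed factorization will then drop out of a standard completion-of-squares identity, carried out on the monomial basis $\M \phi_r(\M x)$ rather than on $\M x$ itself.

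The first step is to assemble the dual certificate. Since $\mathcal{K} = \mathbb{R}^n$, there are no localizing constraints, and the only equality constraints in \eqref{prob:pop_sdp} are the moment constraints $\langle \M B_i^{\perp}, \M X \rangle = 0$ that equate repeated entries of the moment matrix. Under a Slater-type regularity condition, the dual of \eqref{prob:pop_sdp} yields a PSD slack matrix $\M Y = \M C - \hat{\rho}\, \M E_{11} - \sum_i \lambda_i \M B_i^{\perp} \succeq 0$. Because each $\M B_i^{\perp}$ encodes an identity among monomials, $\vv_r(\M x)^{\transpose} \M B_i^{\perp} \vv_r(\M x) \equiv 0$ as a polynomial in $\M x$. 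Conjugating the dual identity by $\vv_r(\M x)$ therefore gives the polynomial identity
\[
\|\M A \M \phi_r(\M x) - \M b\|^2_{\M V^{-1}} - \hat{\rho} \;=\; \vv_r(\M x)^{\transpose} \M Y \vv_r(\M x) \;=:\; \sigma(\M x),
\]
which already exhibits the shifted objective as an SOS polynomial $\sigma$.

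The next step is to read off $\hat{\M x}$ and $\M \Sigma$ directly from $\M Y$. Partition the slack as $\M Y = \begin{bmatrix} y_{00} & \M y^{\transpose} \\ \M y & \M Y_{11} \end{bmatrix}$. Under the hypothesized rank-one primal optimum $\hat{\M X} = \vv_r(\hat{\M x}) \vv_r(\hat{\M x})^{\transpose}$, complementary slackness $\M Y \hat{\M X} = \M 0$ reduces to $\M Y \vv_r(\hat{\M x}) = \M 0$. The rank condition of \Cref{def:kkt_sdp} should further guarantee that $\M Y_{11}$ is invertible: intuitively, under a strict-complementarity-type hypothesis the only null direction of $\M Y$ is $\vv_r(\hat{\M x})$, whose first entry equals $1$, so no vector of the form $(0, \M w)$ with $\M w \ne \M 0$ can lie in $\ker \M Y$. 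The two block rows of $\M Y \vv_r(\hat{\M x}) = \M 0$ then read $\M \phi_r(\hat{\M x}) = -\M Y_{11}^{-1} \M y$ and $y_{00} = \M y^{\transpose} \M Y_{11}^{-1} \M y$. Substituting these identities into $\sigma(\M x) = y_{00} + 2\, \M y^{\transpose} \M \phi_r(\M x) + \M \phi_r(\M x)^{\transpose} \M Y_{11} \M \phi_r(\M x)$ and completing the square gives $\sigma(\M x) = \|\M \phi_r(\M x) - \M \phi_r(\hat{\M x})\|^2_{\M Y_{11}}$; setting $\M \Sigma := \M Y_{11}^{-1} \succ 0$ yields the claimed factorization, and the bound $\hat{\rho} \ge 0$ follows because the objective of \eqref{eq:BPUE} is itself pointwise nonnegative.

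The main obstacle I anticipate is the invertibility of the block $\M Y_{11}$. A rank-one primal optimum alone only controls $\mathrm{rank}\,\M Y$ up to the ambient dimension minus one, which is not enough: one truly needs a strict-complementarity-type condition forcing $\ker \M Y = \mathrm{span}(\vv_r(\hat{\M x}))$ so that the projection onto the non-constant coordinates is nondegenerate. Tying this to the ``suitable rank conditions'' of \Cref{def:kkt_sdp} — and checking that the definition is strong enough to exclude pathological situations in which a lower-rank $\M Y_{11}$ coexists with rank-one $\hat{\M X}$ — will be the technical heart of the proof. Everything else is a short algebraic verification from the block form of $\M Y$.
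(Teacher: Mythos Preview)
Your proposal is correct and follows essentially the same approach as the paper: use the dual certificate $\M Y$ from the moment/SOS relaxation, invoke complementary slackness $\M Y \vv_r(\hat{\M x})=\M 0$ together with the strict-complementarity rank condition $\mathrm{rank}(\M Y)=s(r,n)-1$ to force the lower-right block to be invertible, and then complete the square. The only cosmetic difference is that the paper carries out the rank argument via an SVD factorization $\M Y=\M F\M F^{\transpose}$ and a partition of $\M F$, whereas you work directly with the block partition of $\M Y$; the two presentations are equivalent and your anticipated obstacle (invertibility of $\M Y_{11}$) is exactly what the paper's rank condition 4) in \Cref{def:kkt_sdp} is designed to guarantee.
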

\Cref{theorem:cov_mkf} extends \Cref{prop:blue-quad} to higher-order polynomials. A constrained version of \Cref{theorem:cov_mkf} (where $\mathcal{K}$ are polynomial equality constraints) is given in Appendix~\ref{appx:bpue-sos-cons}. 
We have already mentioned that we can compute $\M \phi_r(\hat{\M x})$ using a moment relaxation.
In the next section, we show that the matrix $\M \Sigma$ can be computed in closed-form from the \emph{dual} solution of problem~\eqref{eq:BPUE}, hence we have computational ways to obtain the exact SOS belief in the batch setup. 
\section{Theoretical Analysis}
\label{sec:theory}

In this section, we formalize that the SOS belief in~\Cref{theorem:cov_mkf} is such that $\M \phi_r(\hat{\M x})$ can be computed from a moment relaxation of~\eqref{eq:BPUE} while the matrix $\M \Sigma$ can be obtained from the dual (SOS) relaxation of~\eqref{eq:BPUE}. Moreover, we show that while the matrix $\M \Sigma$ does not admit the same interpretation as in the linear Gaussian case (where it is the covariance of the estimate $\hat{\M x}$), it can still be considered an approximate covariance.
% provide the theoretical analysis of \eqref{eq:BPUE} which lays the foundation for our main GMKF algorithm introduced in the next section. We first show the connection between the rank-1 primal solution to the moment relaxation of \eqref{eq:BPUE} and the dual SOS polynomial. Then, we provide an approximation to the covariance of BPUE that is extracted from the dual solution and justify its validity.

\subsection{The SOS Belief Can Be Computed by Solving the Moment and SOS Relaxations}
The key result of this subsection, given in~\Cref{theorem:SDP=POP}, states that we can compute the SOS belief that factorizes the~\eqref{eq:BPUE} as in~\Cref{theorem:cov_mkf} by solving a moment relaxation of~\eqref{eq:BPUE} and its dual (SOS) relaxation.
To derive this result, we first need to state the following optimality conditions for the moment relaxation~\eqref{prob:pop_sdp}:
%as a sufficient condition for a unique rank-1 solution:
\begin{definition}[Optimality Condition of Rank-1 \eqref{prob:pop_sdp}]
Consider the Lagrangian of \eqref{prob:pop_sdp} as 
\begin{equation*}\label{def:kkt_sdp}
\begin{aligned}
 L(\M X, \M Y, \M \lambda, \M \mu, \rho) = &\langle \M C, \M X \rangle - \langle \M Y, \M X \rangle +  \sum_{j} \lambda_{j} \langle \M G_{j}, \M X \rangle  \\
+ \sum_{i} \mu_{i} &\langle \M B^{\perp}_{i},  \M X\rangle  + \rho(1 - \langle \M e_1\M e_1^{\transpose}, \M X \rangle),
% &X \ge 0, Y \ge 0.
\end{aligned}
\end{equation*}
where $\M Y$, $\M \lambda$, $\M \mu$ and $\rho$ are the multipliers.
% {\color{red} Before $\rho$ was the optimal cost of the relaxation}. \harry{the slack variable is indeed the dual variable here. When we do maximization in the dual, only $\rho$ appears in the cost.} 
By the stationary condition $\nabla_{\M X} L = 0$ and the feasibility of primal and dual problems, we have the following optimality conditions:
\begin{enumerate}
    \item $\M Y = \M C + \sum_{j} \lambda_{j} \M G_{j} + \left( \sum_{i}\mu_{i} \M B^{\perp}_{i} - \rho \M e_1\M e_1^{\transpose} \right)$, % $L_{X} = 0\Rightarrow$
    \item $\langle \M G_{j}, \M X \rangle = 0, \langle \M B^{\perp}_{i}, \M X \rangle = 0$, $\langle \M e_1\M e_1^{\transpose}, \M X \rangle = 1$,
    \item $\M Y \succeq 0, \M X \succeq 0$, $\langle \M X, \M Y\rangle = 0$.
\end{enumerate}
We also require an additional rank condition \cite{cosse2021stable, cifuentes2020geometry}:%\tsl{Change the citation?}
% For a unique rank-1 solution, it is sufficient to require the additional rank condition \cite{cosse2021stable}:
\begin{enumerate}[~~4)]
    \item $ Rank(\M X) = 1, Rank(\M Y) = s(r, n) - 1$.
\end{enumerate}
\end{definition}

In the remainder of this paper, we make the following assumption to focus on the case that the moment relaxation \eqref{prob:pop_sdp} returns rank-1 solutions.
\begin{assumption}[Rank-1 Solution of Moment Relaxation]
\label{assump:rank1sdp}
    The solutions to the moment relaxations solved in this work satisfy the strict optimality conditions in \Cref{def:kkt_sdp}. 
\end{assumption}

% In our experiments, we observe that 
Inspired by \cite{cosse2021stable}, we derive the following theorem that connects primal-dual optimal solutions satisfying the optimality condition in \Cref{def:kkt_sdp}  to the SOS belief:

\begin{restatable}[Moment-SOS Relaxations and SOS Belief]{theorem}{theoremSDPPOP}
% \begin{theorem}[Equivalence of rank-1 SDP and SOS belief]
\label{theorem:SDP=POP}
Suppose the unique optimal solution of \eqref{prob:pop} is $\hat{\M x}$, and the \eqref{prob:pop_sdp} corresponding to \eqref{prob:pop} produces a rank-1 optimal solution $\hat{\M X}$. Then $\hat{\M X} = \M v_r(\hat{\M x})\M v_r(\hat{\M x})^{\transpose}$. Moreover, the dual solution $\hat{\M Y}$ takes the following form
\begin{equation*}
    \hat{\M Y}= \begin{bmatrix}
            \M \phi^{\transpose}_r(\hat{\M x})\M{\Sigma}^{-1} \M \phi_r(\hat{\M x}) & -\M \phi^{\transpose}_r(\hat{\M x})\M{\Sigma}^{-1}\\
            -\M{\Sigma}^{-1}\M \phi_r(\hat{\M x}) & \M{\Sigma}^{-1}
        \end{bmatrix}, \M \Sigma \succ 0,
\end{equation*}
% that define the SOS belief
% \begin{equation*}
% \begin{aligned}
%      \| \M \phi_r(\M x) - \M \phi_r(\hat{\M x}) \|_{\M \Sigma}^2 = \langle \hat{\M Y}, \M v_r(\M x)\M v^{\transpose}_r(\M x) \rangle,
%     % , \M W > 0,
%     % & = p(\M x) + \sum_{k=1}^m h_i(\M x)g_i(\M x) - \rho, \M W > 0
% \end{aligned}
% \end{equation*}
% {\color{red} you defined the SOS belief to be a function in this form, so it is unclear what you mean by the iff condition}\tsl{The goal is to tell the existence of the SOS belief and the first-order optimality condition are equivalent. }
if and only if the strict optimality conditions in \Cref{def:kkt_sdp} are satisfied. Furthermore, 
the resulting SOS belief $\sigma(\hat{\M x}, \M{\Sigma})$
% $= \| \M \phi_r(\M x) - \M \phi_r(\hat{\M x}) \|_{\M{\Sigma}^{-1}}^2$
factorizes the objective~\eqref{eq:BPUE} as described in~\Cref{theorem:cov_mkf}.
% \begin{equation*}
% \begin{aligned}
     % {\color{red}\| \M \phi_r(\M x) - \M \phi_r(\hat{\M x}) \|_{\M{\Sigma}^{-1}}^2 = \langle \hat{\M Y}, \M v_r(\M x)\M v^{\transpose}_r(\M x) \rangle.}
    % , \M W > 0,
    % & = p(\M x) + \sum_{k=1}^m h_i(\M x)g_i(\M x) - \rho, \M W \succ 0
% \end{aligned}
% \end{equation*}
% \tsl{How about this?}
% \end{theorem}
\end{restatable}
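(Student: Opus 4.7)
The plan is to proceed in three stages: (i) identify the form of the primal optimizer $\hat{\M X}$, (ii) derive the structure of the dual optimizer $\hat{\M Y}$ from the KKT/complementary slackness conditions in~\Cref{def:kkt_sdp}, and (iii) recover the SOS belief factorization by substituting $\hat{\M Y}$ into the SOS representation of $p(\M x)-\hat{\rho}$.

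\emph{Step 1: Form of $\hat{\M X}$.} Since $\hat{\M X} \succeq 0$ has rank one and $\langle \M e_1\M e_1^{\transpose},\hat{\M X}\rangle=1$, we can factor $\hat{\M X}=\hat{\vv}\hat{\vv}^{\transpose}$ with $\hat{\vv}_1=1$. The moment-consistency constraints $\langle \M B^{\perp}_i,\hat{\M X}\rangle=0$ force the entries of $\hat{\vv}$ to obey the monomial relations defining $\vv_r(\cdot)$, so $\hat{\vv}=\vv_r(\hat{\M x})$ for some $\hat{\M x}$; the localizing constraints make $\hat{\M x}$ feasible for~\eqref{prob:pop}, and tightness of the relaxation together with uniqueness of the POP minimizer forces this $\hat{\M x}$ to be the unique optimizer of~\eqref{eq:BPUE}.

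\emph{Step 2: Form of $\hat{\M Y}$ (``only if'' direction).} Complementary slackness gives $\langle\hat{\M X},\hat{\M Y}\rangle=\vv_r(\hat{\M x})^{\transpose}\hat{\M Y}\vv_r(\hat{\M x})=0$, and combined with $\hat{\M Y}\succeq 0$ this yields $\hat{\M Y}\vv_r(\hat{\M x})=0$. Writing
\begin{equation*}
\hat{\M Y}=\begin{bmatrix}a & \vb^{\transpose}\\ \vb & \M D\end{bmatrix},
\end{equation*}
this reads $a+\vb^{\transpose}\M\phi_r(\hat{\M x})=0$ and $\vb+\M D\M\phi_r(\hat{\M x})=0$. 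The step I expect to be the main obstacle is proving that $\M D\succ 0$: if $\M D\vz=0$ for some $\vz\neq 0$, then $(0,\vz)^{\transpose}\hat{\M Y}(0,\vz)=0$ and the PSD property gives $\hat{\M Y}(0,\vz)^{\transpose}=0$, putting $(0,\vz)$ in $\ker\hat{\M Y}$. But the rank condition $\operatorname{rank}(\hat{\M Y})=s(r,n)-1$ implies $\dim\ker\hat{\M Y}=1$, and this kernel already contains $\vv_r(\hat{\M x})$ whose first entry is $1$; so $(0,\vz)$ and $\vv_r(\hat{\M x})$ would be linearly dependent, a contradiction. Hence $\M D\succ 0$, and setting $\M\Sigma^{-1}:=\M D$ yields $\vb=-\M\Sigma^{-1}\M\phi_r(\hat{\M x})$ and $a=\M\phi_r(\hat{\M x})^{\transpose}\M\Sigma^{-1}\M\phi_r(\hat{\M x})$, which is exactly the claimed block form.

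\emph{Step 3: ``If'' direction and SOS belief factorization.} For the converse, I would observe the rank-one-like factorization
\begin{equation*}
\hat{\M Y}=\begin{bmatrix}-\M\phi_r(\hat{\M x})^{\transpose}\\ \M I\end{bmatrix}\M\Sigma^{-1}\begin{bmatrix}-\M\phi_r(\hat{\M x}) & \M I\end{bmatrix},
\end{equation*}
from which $\hat{\M Y}\succeq 0$ and $\operatorname{rank}(\hat{\M Y})=\operatorname{rank}(\M\Sigma^{-1})=s(r,n)-1$ are immediate, and $\hat{\M Y}\vv_r(\hat{\M x})=0$ follows by direct multiplication, restoring complementary slackness. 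Finally, the SOS relaxation~\eqref{prob:pop_sdp*} (in the unconstrained case $\mathcal{K}=\mathbb{R}^n$, where no $h_j g_j$ terms appear) gives $p(\M x)-\hat{\rho}=\sigma(\M x)=\vv_r(\M x)^{\transpose}\hat{\M Y}\vv_r(\M x)$. Substituting the factorization above,
\begin{equation*}
\vv_r(\M x)^{\transpose}\hat{\M Y}\vv_r(\M x)=\bigl(\M\phi_r(\M x)-\M\phi_r(\hat{\M x})\bigr)^{\transpose}\M\Sigma^{-1}\bigl(\M\phi_r(\M x)-\M\phi_r(\hat{\M x})\bigr),
\end{equation*}
which is precisely the SOS belief $\sigma(\hat{\M x},\M\Sigma)$ of~\Cref{def:SOS-factor} and yields the factorization of~\Cref{theorem:cov_mkf}. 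The fact that strong duality and the SOS representation are actually attained is guaranteed by~\Cref{assump:rank1sdp} and the rank condition in~\Cref{def:kkt_sdp}.
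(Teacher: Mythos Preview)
Your proposal is correct and reaches the same conclusions as the paper, but the mechanics of the key direction differ in a way worth noting. For the implication ``strict optimality conditions $\Rightarrow$ block form of $\hat{\M Y}$'' (your Step~2, the paper's $\Longleftarrow$), the paper factors $\hat{\M Y}=\M F\M F^{\transpose}$ via SVD, partitions $\M F$ into a top row $\M f$ and a square block $\bar{\M F}$, uses complementary slackness to write $\M f^{\transpose}=-\bar{\M F}^{\transpose}\M\phi_r(\hat{\M x})$, and argues $\bar{\M F}$ is full rank because $\M f$ lies in its row span while $\operatorname{rank}(\M F)=s(r,n)-1$; then $\M\Sigma^{-1}=\bar{\M F}\bar{\M F}^{\transpose}$. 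Your route bypasses the SVD entirely: you read off $\hat{\M Y}\vv_r(\hat{\M x})=0$, write the block decomposition directly, and prove $\M D\succ 0$ by a clean kernel/contradiction argument using the one-dimensionality of $\ker\hat{\M Y}$ and the nonzero first entry of $\vv_r(\hat{\M x})$. Both arguments are equivalent, but yours is more elementary and avoids the detour through singular vectors.

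One small point where the paper is more explicit: in the converse direction (your Step~3, the paper's $\Longrightarrow$), the paper separately verifies condition~1) of \Cref{def:kkt_sdp}---that $\hat{\M Y}$ can be written as $\M C-\rho\M e_1\M e_1^{\transpose}+\sum_j\lambda_j\M G_j+\sum_i\mu_i\M B^{\perp}_i$---by invoking \Cref{lemma:exist_sos_xi} to produce the multipliers $\mu_i$ once the polynomial identity $\vv_r^{\transpose}\hat{\M Y}\vv_r=p(\M x)-\rho+\sum_j h_j g_j$ is known. You instead take condition~1) as already granted because $\hat{\M Y}$ is ``the dual solution,'' which is a defensible reading of the theorem statement but leaves implicit the step the paper spells out.
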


% \begin{theorem}[Equivalence of rank-1 SDP and SOS belief]
% \label{theorem:SDP=POP}
% Suppose the unique optimal solution of \eqref{prob:pop} is $\bar{\M x}$, and the \eqref{prob:pop_sdp} corresponding to \eqref{prob:pop} has rank-1 optimal solution $\bar{\M X} = \M v_r(\bar{\M x})\M v_r(\bar{\M x})^{\transpose}$. Then, there exists an SOS polynomial in \Cref{def:SOS-factor}:
% \begin{equation}
% \begin{aligned}
% \| \M \phi_r(\M x) - \M \phi_r(\bar{\M x}) \|_{\M W}^2  = p(\M x) + \sum_{k=1}^m h_i(\M x)g_i(\M x) - \rho, \M W > 0
% \end{aligned}
% \end{equation}
% % \mbox{$$}, $P>0$,
% with $h_i(\M x)$ are polynomial multipliers, if and only if the strict first order optimality condition in \Cref{def:kkt_sdp} is satisfied. 
% \end{theorem}

A detailed proof of \Cref{theorem:SDP=POP} is given in Appendix~\ref{apx:SDP=SOS}. The proof is a generalization of similar results proposed to tackle the matrix completion problem in \cite{cosse2021stable}. % Then, by \Cref{theorem:SDP=POP}, we justify how we obtain $\M \phi_r(\hat{\M x}) $ and $\hat{\M P}^{-1}$ for the SOS belief in \Cref{remark:sdp}.% Compared to \cite{cosse2021stable}, we use the $\{\M B^{\perp}\}$ to avoid additional multiplier to represent the monomials. 

\subsection{Characterization of the Estimation Error in \BPUE}
\label{sec:BPUE_cov}
% We proceed to justify why we could recover $\hat{\M X}, \hat{\M Y}$ as shown in \Cref{remark:sdp}. 
So far we have derived the SOS belief using purely optimization-theoretic arguments.
We now proceed to characterize the estimation error of \eqref{eq:BPUE}.
In particular, we show how to write the estimate $\M \phi_r(\hat{\M x})$ as a function of the ground truth $\M \phi_r(\M {x})$ plus an additional perturbation.
%by extracting the optimal solution $\M \phi_r(\hat{\M x})$ and the uncertainty from the dual solution. Due to page limits, 
For simplicity, we analyze \eqref{eq:BPUE} without equality constraints, i.e. $\mathcal{K} = \mathbb{R}^n$ and defer the derivation for the constrained case to Appendix \ref{appx:bpue-sos-cons}.

% Under \Cref{assump:rank1sdp}, by \Cref{lemma:dual_matrix} and \Cref{theorem:SDP=POP}, we can reformulate the objective function of \eqref{eq:BPUE} as an SOS belief: % Under the \Cref{assump:rank1sdp}, we can derive Lagrangian to \eqref{eq:BPUE}:
% \begin{equation}
% \label{eq:BPUE_local_kkt}
% \begin{aligned}
%     % L(\M x, \M \mu, \rho) 
%     \M v^{\transpose}_r( \M x) \hat{\M Y} \M v_r(\M x) &=\|\M b - \M A\M \phi_r(\M x)\|^2_{\M W} - \rho \\
%     &  + \sum_i\mu_i\left\langle \M B_i^{\perp}, \M v_r( \M x)\M v^{\transpose}_r(\M x) \right \rangle
%     % & + \sum_{j} \lambda_{j} \langle \M G_{j}, \M v_r( \M x)\M v^{\transpose}_r(\M x)  \rangle 
% \end{aligned}
% \end{equation}

% Now we extract the optimal solution to \eqref{eq:BPUE} from its dual solution. 
Under \Cref{assump:rank1sdp}, we use condition 1) in \Cref{def:kkt_sdp} to construct the dual solution $\hat{\M Y}$. We first partition the $\M B^{\perp}$ and the cost matrix $\M{C}$ of \eqref{eq:BPUE} to separate the first row and column from the remaining principal submatrix: % to the first row and column $\M B^{\perp}_l$ and the remaining sub-principle matrix $\M B^{\perp}_Q$:
% \david{is =: here on purpose?}
\begin{equation*}
    \M B^{\perp} = \begin{bmatrix}
        0 & \M B^{\perp \transpose}_{l} \\
        \M B^{\perp}_{l}& \M B^{\perp}_{Q} 
    \end{bmatrix}, \M C = \begin{bmatrix} \M b^{\transpose}\M V^{-1}\M b & (-{\M A}^{\transpose}\M V^{-1}{\M b})^{\transpose} \\
        -{\M A}^{\transpose}\M V^{-1}{\M b} & {\M A}^{\transpose}\M V^{-1}{\M A}
    \end{bmatrix}.
\end{equation*}
% Similarly, the cost matrix $\M{C}$ induced from \eqref{eq:BPUE} can be partitioned as:
% \begin{equation}
%     \M C =: \begin{bmatrix} \M b^{\transpose}\M W\M b & (-{\M A}^{\transpose}\M W{\M b})^{\transpose} \\
%         -{\M A}^{\transpose}\M W{\M b} & {\M A}^{\transpose}\M W{\M A}
%     \end{bmatrix}.
% \end{equation}
% Using the multipliers obtained in the \eqref{prob:pop_sdp}, we have the \textit{dual solution} $\hat{\M Y}$:
Then, the dual solution $\hat{\M Y}$ becomes:
% the Lagrangian as an SOS belief:
% \begin{equation}
% \begin{aligned}
%     L(\M x, \M \mu, \rho) &=  \M v^{\transpose}_r( \M x) \hat{\M Y} \M v_r(\M x)  + \hat{\rho} \\ % - \hat{\rho}
%     % &= m^{\transpose}\hat{P}_em_e  + \hat{\rho} \\
%     % % &= \langle P, mm^{\transpose} \rangle \\
%     % &= \|m - \hat{\M m} \|^2_{\hat{P}}  + \hat{\rho}
% \end{aligned}
% \end{equation}
% with the 
\begin{equation}
\label{eq:BPUE_dual_sol}
\hat{\M Y} = \begin{bmatrix} \M b^{\transpose}\M V^{-1}\M b - {\rho} & (-{\M A}^{\transpose}\M V^{-1}{\M b} + \sum_i{\mu}_i \M B^{\perp}_{l, i})^{\transpose} \\
       * & {\M A}^{\transpose}\M V^{-1}{\M A} + \sum_i {\mu}_i\M B_{\M Q,i}^{\perp}
    \end{bmatrix}. % -{\M A}^{\transpose}\M W{\M b} + \sum_i{\mu}_i \M B^{\perp}_{l, i}
    % \hat{Y} = \begin{bmatrix} (-{A}^{\transpose}W{b} + \sum_i{\mu}_i B^{\perp}_{l, i})^{\transpose}({A}^{\transpose}W{A} + \sum_i {\mu}_iB_{Q,i}^{\perp})^{-1}(-{A}^{\transpose}W{b} + \sum_i{\mu}_i B^{\perp}_{l, i}) & (-{A}^{\transpose}W{b} + \sum_i{\mu}_i B^{\perp}_{l, i})^{\transpose} \\
    %     -{A}^{\transpose}W{b} + \sum_i{\mu}_i B^{\perp}_{l, i} & {A}^{\transpose}W{A} + \sum_i {\mu}_iB_{Q,i}^{\perp}
    % \end{bmatrix}
\end{equation}
% \end{strip}
By the structure of $\hat{\M Y}$ shown in \Cref{theorem:SDP=POP}, we have:
\begin{equation}
    \M{\Sigma}^{-1} = {\M A}^{\transpose}\M V^{-1} {\M A} + \sum_i {\mu}_i\M B_{ Q,i}^{\perp},
\end{equation}
and we can directly extract the optimal estimate as:
\begin{equation}
\label{eq:BPUE_estimate}
\begin{aligned}
   % &\M P\M \phi_r(\hat{\M x}) - {\M A}^{\transpose}\M V^{-1} {\M b} + \sum_i{\mu}_i \M B^{\perp}_{l, i} = 0 \\
    \M \phi_r(\hat{\M x}) = {\M \Sigma} ({\M A}^{\transpose}\M V^{-1} {\M b} - \sum_i{\mu}_i \M B^{\perp}_{l, i}) \\
\end{aligned},
\end{equation}
which follows from the relation 
$- \M \Sigma \M \phi_r(\hat{\M x}) = -{\M A}^{\transpose}\M V^{-1} {\M b} + \sum_i{\mu}_i \M B^{\perp}_{l, i}$ in \Cref{theorem:SDP=POP}. 
We observe that the expression of the~\eqref{eq:BPUE} quite elegantly resembles the expression of the optimal estimate and covariance of~\eqref{eq:BLUE}, but it requires adding a ``correction'' controlled by the dual variables $\M \mu$.

Substituting $\M b = {\M A}\M \phi_r(\bar{\M x}) + \M v$ from \eqref{eq:gmm_cond} in~\eqref{eq:BPUE_estimate} we get:
\begin{equation}
\label{eq:BPUE_estimate2}
\begin{aligned}
   \M \phi_r(\hat{\M x}) &= {\M \Sigma}({\M A}^{\transpose}\M V^{-1}({\M A}\M \phi_r(\bar{\M x}) + \M v) - \sum_i{\mu}_i \M B^{\perp}_{l, i})
    \\
    & = {\M \Sigma}{\M A}^{\transpose}\M V^{-1} {\M A}\M \phi_r(\bar{\M x}) + 
    {\M \Sigma}{\M A}^{\transpose}\M V^{-1} \M v - \sum_i{\mu}_i {\M \Sigma}\M B^{\perp}_{l, i}.
\end{aligned}
\end{equation}
% \harry{TODO: Define x bar}
% \begin{definition}[Gain of BPUE]
%     $$\M K = (\M A^{\transpose}\M W\M A + \sum_i \mu_i \M B^{\perp}_{Q, i})^{-1}\M A^{\transpose}\M W$$
% \end{definition}
% From \eqref{eq:BPUE_estimate}, we can extract an {approximation of} the perturbation term as the linear dependency of $\M v$: % \david{maybe we could just remove this?} \tsl{Keep it with more explanation.}
Let us call $\delta \M{\phi}$ the second term in~\eqref{eq:BPUE_estimate2}:
\begin{equation}
\label{eq:perturb_v}
% \hat{\M m} - \bar{\M m} = 
\delta \M{\phi} := \M{\Sigma}{\M A}^{\transpose}\M V^{-1}\M v.
% \end{aligned}
\end{equation}
% For BLUE, i.e. $r=1$, $\M{\mu}$ and $\M{B}^{\perp}$ vanish as the moment matrix does not have repeated entries.
% \tsl{Explain this is not the exact perturbation but as $\mu$...}
In an ideal world, we would like the previous equation to read as $\M \phi_r(\hat{\M x}) = \M \phi_r({\bar{\M x}}) + \delta \M{\phi}$ and derive a covariance for $\delta \M{\phi}$ (this would be the covariance of the estimation error in the moment space). Unfortunately, there are two complications: (i) there are several terms multiplying $\M \phi_r({ \bar{\M x}})$ on the right-hand-side and (ii) the matrices $\M \Sigma$ and $\M A$ in~\eqref{eq:perturb_v} also depend on the noise realization ($\M A$ is a  function of the measurements, while $\M \Sigma$ depends on the multipliers $\M \mu$ which are also a function of the measurements) hence we cannot simply do a linear propagation of the noise from $\M v$ to $\delta \M{\phi}$ as we do in the linear Gaussian case. In the following section, we show that in the limiting case of noiseless measurements, we can indeed obtain an estimation error covariance.
% Note that $\delta \M{\phi}$ is not the actual uncertainty of the state estimator as the multiplier $\M{\mu}$ may vary for different $\M{v}$. However, with fixed multiplier $\M{\mu}$, the uncertainty $\delta\M{\phi}$ can be interpreted as a distribution in the tangent space of the rank-1 moment matrix \cite{vandereycken2013low, cosse2021stable} as an alternative to the full sensitivity analysis. 
% \tsl{Is this explanation better?}

\subsection{Approximate Covariance of \BPUE}
\label{sec:theory-covariance}

% Due to the nonlinear nature of \eqref{eq:BPUE}, the multiplier $\M{\mu}$ in \eqref{eq:BPUE_estimate} also depends on the uncertainty $\M{v}$, which makes the full sensitivity analysis difficult. 
To show that in the limiting case of noiseless measurements, the matrix appearing in the SOS belief can be interpreted as a covariance of the estimation error.
%As propagating the uncertainty through the multiplier $\M{\mu}$ is difficult, we approximate the covariance of estimated $\M{\phi}_r(\M{x})$ with fixed multipliers $\M \mu$. 
Let us consider the noiseless case of \eqref{eq:gmm_cond}, i.e., 
$\bar{\M A} := \mathbb{E}[\M A(\M y)], \bar{\M b} := \mathbb{E}[\M b(\M y)]$. By $\mathbb{E}[\M{v}] = 0$,
the following linear equation holds for the true state $\bar{\M x}$:
\begin{equation}
\label{eq:Aphix=b}
   \bar{\M b} = \bar{\M A}\M \phi_r(\bar{\M x}) + \mathbb{E}[\M{v}] = \bar{\M A}\M \phi_r(\bar{\M x}).
\end{equation}
% {\color{red} I think we should directly assume $mu=0$}
% \tsl{Fixed.}
% We denote the corresponding multipliers as $\M \mu = \bar{\M \mu}$. With $\bar{{\M \Sigma}}^{-1}$ evaluated at this limiting operating point in \eqref{eq:perturb_v}. % by the construction of the moment condition in \eqref{gmm_condition}:\tsl{Checck this sentence.}
% \begin{equation}
%     \bar{{\M \Sigma}}^{-1} = \bar{\M A}^{\transpose}\M W\bar{\M A} + \sum_i \bar{\mu}_i\M B_{Q,i}^{\perp},
% \end{equation}
% we have the approximated covariance of the estimate by:
% \begin{equation}
%     \operatorname{Cov}[\delta{\M \phi}] =  
%     \bar{{\M \Sigma}}\bar{\M A}^{\transpose}\M W\M V\M W\bar{\M A}\bar{{\M \Sigma}}.
% \end{equation} 

% We make the following assumption to enforce a sufficient condition for rank-1 solution at the operating point:
We also need the following asssumption for our analysis.
\begin{assumption}%[Observability]
\label{assump:observability}
    $\bar{\M A}$ has full column rank.
    %to ensure the rank-1 solution at the limiting operating point. 
    % Note that the observability analysis on the semialgebraic set is beyond the scope of this work, we consider this a sufficient condition to guarantee the rank-1 solution on this particular operating point. }
\end{assumption}
While this assumption can be interpreted as an observability condition, in this context we mostly use it to simplify the expression of the~\eqref{eq:BPUE}. In particular, we show that under this assumption choosing $\M \mu = \M 0$ produces a valid dual solution. 
% \footnote{This assumption is related to observability of the state of the system. We postpone a more rigorous observability analysis to future work.} 
In particular, 
%To illustrate the sufficiency of \Cref{assump:observability} for a rank-1 solution, we 
if we substitute $\M A = \bar{\M A}$ and $\M b = \bar{\M b} = \bar{\M A}\M \phi_r(\bar{\M x})$ to \eqref{eq:BPUE_dual_sol} and set $\M \mu = \M 0$:
        \begin{equation}
            \hat{\M Y} = \begin{bmatrix} \bar{\M b}^{\transpose}\M{V}^{-1}\bar{\M b} - \rho & (-{\bar{\M A}}^{\transpose}\M{V}^{-1}{\bar{\M b}})^{\transpose} \\
        -{\bar{\M A}}^{\transpose}\M{V}^{-1}{\bar{\M b}} & {\bar{\M A}}^{\transpose}\M{V}^{-1}{\bar{\M A}}
        \end{bmatrix}, \rho = 0,
        \end{equation}
which can be easily seen to satisfy the optimality conditions in~\Cref{def:kkt_sdp}.\footnote{
To verify the strict complementary condition and the rank condition, we observe that, by \Cref{assump:observability}, $\M{\phi}_r(\bar{\M{x}})$ is the unique solution to \eqref{eq:Aphix=b}. It is then straightforward to see $\langle \M{v}_r(\bar{\M{x}})\M{v}_r(\bar{\M{x}})^{\transpose}, \hat{\M{Y}} \rangle = 0$ by substituting $\bar{\M{b}} = \bar{\M{A}}\M{\phi}_r(\bar{\M{x}})$ in $\hat{\M{Y}}$. $\operatorname{Rank}(\hat{\M{Y}}) = s(r, n) - 1$ can be verified using the additional observation that $\bar{\M{A}}^{\transpose}\M{V}^{-1}\bar{\M{A}}$ is invertible under \Cref{assump:observability}.

} % Then we can verify that solution$(\bar{\M{A}}^{\transpose}\M{V}^{-1}\bar{\M{A}})^{-1}\bar{\M{A}}^{\transpose}\M{V}^{-1}\bar{\M b} = (\bar{\M{A}}^{\transpose}\M{V}^{-1}\bar{\M{A}})^{-1}\bar{\M{A}}^{\transpose}\M{V}^{-1}\bar{\M{A}}\M{\phi}_r(\bar{\M{x}}) = \M{\phi}_r(\bar{\M{x}})$ }

    As $\bar{\M A}$ has full column rank, $\bar{\M \Sigma}^{-1} = \bar{\M A}^{\transpose}\M{V}^{-1} \bar{\M A}$ is invertible. 
    Substituting the expression of $\bar{\M \Sigma}^{-1}$ and  the fact that $\M \mu = \M 0$ in~\eqref{eq:BPUE_estimate2} we get:
    % We can then extract the optimal state estimate by the structure of dual solution shown in \Cref{theorem:SDP=POP}:
    \begin{equation}
    \begin{aligned}
        \M \phi_r(\hat{\M x}) = \M \phi_r(\bar{\M x}) + \delta \M{\phi}, 
    \end{aligned}
    \end{equation}
where $\delta \M{\phi} = \bar{\M \Sigma} \bar{\M A}^{\transpose}\M V^{-1}\M v$. Now the 
coefficient matrix of the noise $\M v$ is a constant matrix, hence we can compute its covariance as:
% Thus, \eqref{eq:BPUE_dual_sol} with $\M \mu = 0$ is a feasible multiplier and it enables us to approximate the covariance at $\bar{\M\mu} = 0$ by \eqref{eq:perturb_v}:
\begin{equation}
\label{eq:op_eq_cov}
\begin{aligned}
       \operatorname{Cov}[\delta \M \phi] &= \operatorname{Cov}[\bar{\M{\Sigma}}{ \bar{\M A} }^{\transpose}\M{V}^{-1}\M v] \\
       &= \operatorname{Cov}[(\bar{\M A}^{\transpose}\M{V}^{-1} \bar{\M A})^{-1}{ \bar{\M A} }^{\transpose}\M{V}^{-1}\M v] \\
       &=(\bar{\M A}^{\transpose}\M V^{-1}\bar{\M A})^{-1} =\bar{\M{\Sigma}}
\end{aligned}
\end{equation}
% For the optimal weighting matrix $\M W = \M V^{-1}$ \cite{humpherys2012fresh}, we have the covariance as:
% \begin{equation}
%     \label{eq:op_eq_cov}
%     \bar{\M\Sigma} = (\bar{\M A}^{\transpose}\M W \bar{\M A})^{-1}= (\bar{\M A}^{\transpose}\M V^{-1}\bar{\M A})^{-1}.
% \end{equation}
%  which is not necessarily the limiting case,
proving that in the limiting case, the matrix $\bar{\M\Sigma}$ arising in the SOS belief can be interpreted as a covariance of the moments of the estimation error.

\section{Generalized Moment Kalman Filter}
\label{sec:GMKF}
%%%%%%%%%%%%%

The batch setup quickly becomes intractable over time, since the size of the state $\{\M x_0, \M x_1, \cdots, \M x_N\}$ and the resulting moment relaxation grows with the time $N$. We now leverage insights from the \eqref{eq:BPUE} to design a \emph{filter} that recursively estimates the current state $\M x_{k+1}$ given the belief at time $k$ and the most recent control and measurements. 
We call our filter the \emph{\GMKFlong} (\GMKF).
%algorithm built upon \eqref{eq:BPUE}.
% Because \Cref{prob:filter_batch} is often intractable due to the problem size, the GMKF algorithm recursively computes the state estimate.

The working of the \GMKF follows the template of the classical Kalman Filter. 
In the classical KF, the prediction step uses the (quadratic) belief at time $k$ and the control action to compute a prior belief at time $k+1$. Then the update step uses the prior belief at time $k+1$ and incorporates the measurements to produce a new posterior belief.
Similarly, the \GMKF includes a prediction step that uses an SOS belief at time $k$ and the control action to compute a prior SOS belief at time $k+1$, and an update step that uses the prior SOS belief at time $k+1$ and incorporates the measurements to produce a new posterior SOS belief. However, while in the KF both prediction and update steps can be computed in closed form, 
the corresponding steps in the \GMKF require solving a moment and SOS semidefinite relaxations.
% to compute an (prior) \emph{SOS belief} at time $k+1$ and then an update step to  
% Extended from the classical KF, which is equivalent to computing the belief as a quadratic form, GMKF computes the SOS belief in \Cref{def:SOS-factor} and follows the update-prediction dichotomy from the classical Kalman filter in practice. 
% Both update and prediction steps in GMKF are solved as \eqref{eq:BPUE} through moment relaxation. 
 In the following, we denote the state estimate after 
the prediction step with the superscript $(\cdot)^{-}$, and the state estimate after the update step with the
 superscript $(\cdot)^{+}$, following standard notation. 
 
We describe each step in the \GMKF, whose pseudocode is also given in~\Cref{alg:mkf}.
 %and the state after the dynamics model \textit{prediction step} with superscript $(\cdot)^{-}$. % The marginalization step is designed by treating the weighting matrix in the Gaussian case.

% The goal of this work is thus to derive a recursive solution to the batch problem as the generalized moment Kalman filter. We consider the recursive update of the SOS belief shown in the \Cref{def:SOS-factor} given the measurement or the system input. 

\subsubsection{Initialization}
At the initial time $0$ we are given a prior over the state of the system that we assume described by an SOS belief:
% We choose the initial weighting matrix $\hat{\M P}^{-1}_0$ and initial estimate $\hat{\M x}_0$ to formulate the initial belief of the state as an SOS belief:
\begin{equation*}
        \sigma(\hat{\M x}_0, \hat{\M \Sigma}_0).
        % = \| \M \phi_r(\vxx_0) - \M \phi_r(\hat{\vxx}_0) \|^2_{\hat{\M  P}_0^{-1}}, \ \hat{\M  P}_0 > 0
\end{equation*}
As we mentioned above, the SOS belief may include high-order moments, hence it is a strict generalization of the quadratic (Gaussian) belief, which is indeed recovered when $r=1$ in the SOS belief.

\subsubsection{Update}\!\!\footnote{Note that the order of prediction and update is irrelevant: we present the update first to be consistent with the notation in~\eqref{eq:polyDynSys}, where the measurements are collected at time $k$ and the prediction propagates the state to time $k+1$.}
The update step takes the SOS belief at time $k$ and incorporates the latest measurements $\M y_k$ in the SOS belief.
In particular, given the SOS belief at time $k$ and observation $\M m_{obs}(\M y_k, \M x_{k})$, 
% the update step computes the SOS belief at time $k$ and incorporates the latest observations $\M y_k$ in the SOS belief as follows:
the update step solves a moment relaxation and its (dual) SOS relaxation to get an SOS belief: % $\hat{\M  x}^{+}_k$:
\label{def:gmkf_update}
    \begin{equation*}
    \begin{aligned}
        &\sigma\left(\hat{\M  x}^{+}_k, \hat{\M  \Sigma}_k^{+}\right) \xleftarrow[]{\mathrm{BPUE}} \sigma\left(\hat{\M  x}^{-}_k, \hat{\M  \Sigma}_k^{-}\right) + \|\M m_{obs}(\M y_k, \M x_{k}) \|^2_{\M R_k^{-1}} % A(\M y_k)\M \phi_r(x) -\M b(\M y_k)
    \end{aligned}.
    \label{update}
    \end{equation*}
Note that this is fully analogous to the KF, where the update phase can also be interpreted as an optimization problem over the belief and the measurements; however, in the KF both terms in the optimization are quadratic and the solution can be computed in closed form.
% given to renew the SOS belief given the observation:
% \begin{definition}[Update Step]
% \end{definition}

% For the linear Gaussian case, the moment relaxation problem is equivalent to quadratic programming. 
\subsubsection{Prediction}
The prediction step takes the SOS belief at time $k$ and ``propagates'' it to time $k+1$ using the system dynamics.
In particular, given the SOS belief at time $k$ and observation $\M m_{obs}(\M y_k, \M x_{k})$, 
% the update step computes the SOS belief at time $k$ and incorporates the latest observations $\M y_k$ in the SOS belief as follows:
the update step solves a moment relaxation and its (dual) SOS relaxation to get an SOS belief. % $\hat{\M  x}^{+}_k$:

In particular, given the SOS belief at time $k$ and an input $\M u_k$ in the dynamics $\M m_{dyn}(\M x_{k+1}, \M x_{k}, \M u_k)$, the prediction step solves a moment relaxation and its (dual) SOS relaxation to get an SOS belief over the two most recent states $\begin{bmatrix}\hat{\M x}^{-}_k \\ \hat{\M x}^{-}_{k+1}\end{bmatrix}$:
\label{def:gmkf_predict}
    \begin{equation*}
    \begin{aligned}
         & \sigma\left(\begin{bmatrix}\hat{\M x}^{-}_k  \\ \hat{\M x}^{-}_{k+1}\end{bmatrix}, \hat{\M \Sigma}_{k:k+1}^{-}\right) \\
         \xleftarrow[]{\mathrm{BPUE}}~ &\sigma\left(\hat{\M x}_k^{+}, \hat{\M \Sigma}_k^{+}\right) + \|\M m_{dyn}(\M x_{k+1}, \M x_{k}, \M u_k)\|^2_{\M Q_k^{-1}}
         % &\|\M \phi_r([\M x_{k}, \M x_{k+1}]) - \M \phi_r([\hat{\M x}^{-}_k, \hat{\M x}^{-}_{k+1}])\|^{2}_{(\hat{\M P}_{k+1}^{-})^{-1}} \\
        % =&\|\M \phi_r(\M x_{k}) - \M \phi_r(\hat{\M x}_k^{+})\|^{2}_{(\hat{\M P}_k^{+})^{-1}} + \|\M m_{dyn}(\M x_{k}, \M x_{k+1})\|^2_{\M Q_k^{-1}}
    \end{aligned}.
    \label{prediction}
    \end{equation*}
Again, this is in complete analogy with the classical KF where also the prediction step can be interpreted as (a linear least squares) optimization over the prior at time $k$ and the system dynamics governing the transition between time $k$ and $k+1$.

% Note that the prediction step provides the SOS belief that involves the states in two consecutive time steps. 
While the prediction has produced an SOS belief over both time $k$ and $k+1$,
to recursively compute the belief, we need to restrict the belief to the state at time $k+1$.
%the belief 
Interestingly, while this step is straightforward for the KF (marginalization of a Gaussian distribution only requires dropping certain entries from the mean and covariance of the joint distribution), it constitutes a roadblock in the nonlinear case, and it is the main reason why we cannot claim optimality of the \GMKF in general.
In the current algorithm, we leverage the interpretation in Section~\ref{sec:theory-covariance} and treat the SOS belief produced by the prediction step as a Gaussian (in the moment space) with mean $\begin{bmatrix}\hat{\M x}^{-}_k \\ \hat{\M x}^{-}_{k+1}\end{bmatrix}$ and covariance $\hat{\M \Sigma}_{k:k+1}^{-}$. 
Therefore, 
We partition $\M \phi_r \left(\begin{bmatrix}\M x_{k}\\ \M x_{k+1}\end{bmatrix} \right)$ (up to permutation) in three parts
    \begin{equation*}
        \M \phi_r\left(\begin{bmatrix}\M x_{k} \\ \M x_{k+1}\end{bmatrix}\right) =: \begin{bmatrix}
            \M \phi_r(\M x_{k+1}) \\
            \M \phi_r(\M x_{k}) \\
            c(\M x_{k}, \M x_{k+1})
        \end{bmatrix},
    \end{equation*}
   where $c(\cdot, \cdot)$ denotes the cross monomial terms involving entries of both $\M x_{k}$ and $\M x_{k+1}$. 
   We also partition $\hat{\M \Sigma}_{k:k+1}^{-}$ accordingly:
    \begin{equation*}
    \begin{aligned}
        \hat{\M \Sigma}_{k:k+1}^{-} &=: \begin{bmatrix}
            \M \Sigma_{k+1, k+1} & \M \Sigma_{k+1, k} & \M \Sigma_{k+1, c} \\
            * & \M \Sigma_{k, k} & \M \Sigma_{k, c} \\
            * & * & \M \Sigma_{c, c} \\
        \end{bmatrix} \\
    \end{aligned}.
    \end{equation*}
    Then, we only keep the principal submatrix corresponding to the vector $\M \phi_r( \M x_{k+1})$ and set:
    \begin{equation*}
        \hat{\M \Sigma}^{-}_{k+1}\leftarrow \M \Sigma_{k+1, k+1}.
    \end{equation*}
    To obtain the SOS belief for the next step, we discard the cross terms and only keep the terms at time $k+1$:
    \begin{equation*}
        \sigma\left(\hat{\M x}^{-}_{k+1}, \hat{\M \Sigma}^{-}_{k+1}\right).
    \end{equation*}

% only keep the sub-principle matrix corresponding to $\M x_{k+1}$
% $\M \phi_r(\M x_{k+1}) - \M \phi_r(\hat{\M x}^{-}_{k+1})$
% as a mimic to the linear Gaussian case. As per the conventional terminology of Bayesian inference in KF, we call this step \emph{marginalization} though we do not explicitly define a distribution function:
% \begin{definition}[Marginalization]
% \label{def:gmkf_marginal}
% % \end{definition}
% We present the full \GMKF algorithm in \Cref{alg:mkf}.

% \begin{algorithm}
% % \label{alg:general}

% \caption{Generalized Moment Kalman Filter Algorithm}\label{alg:mkf}

% \begin{algorithmic}
% \Require $r\gets$ the degree of moments, $\hat{P}_0 \gets$ generalized covariance, $\M f(x_{k+1}, x_k, \M u_k) = w_k \gets$ process noise model, $\M h(y_{k}, x_k) = v_k \gets$ observation noise model, $R_r, Q_r\gets$ generalized noise covariance \eqref{RQdef}. 
% \While{not $done$}

%     \State {$y_k \gets$ Observation}
%     \State {\texttt{// Update}}
%     \State {Solve Problem \ref{prob:gmkf_update}}
%     % \State {Solve for covariance update $\hat{P}_k^{+}$ via \eqref{Cov_Update}}
%     \State {\texttt{// Propagation}}
%     \State {Solve Problem \ref{prob:gmkf_propagate}}
%     \State {\texttt{// Marginalization}}
%     \State {Solve Problem \ref{prob:gmkf_marginal}}
%     \State $done \gets \texttt{FilteringDone}()$
% \EndWhile
% \end{algorithmic}
% \end{algorithm}

\begin{algorithm}[t]
\label{alg:general}

\caption{Generalized Moment Kalman Filter}\label{alg:mkf}

\begin{algorithmic}
\footnotesize
\Require Initial SOS belief: $\sigma(\hat{\M x}_0, \hat{\M \Sigma}_0)$, 
process model: $\M f(\M x_{k+1}, \M x_k, \M u_k) = \M w_k$,
measurement model: $\M h(\M y_{k}, \M x_k) = \M v_k$,
{degree of moments of the noise: $r$, noise covariance in ~\eqref{RQdef}: $\M Q_k, \M R_k$}.  

% $r\gets$ the degree of moments, $\hat{\M P}_0 \gets$ initial weighting matrix, $\M f(\M x_{k+1}, \M x_k, \M u_k) = \M w_k \gets$ process noise model, $\M h(\M y_{k}, \M x_k) = \M v_k \gets$ observation noise model, $\M R_r, \M Q_r\gets$ generalized noise covariances \eqref{RQdef}. 
\For{time $k = 0, 1, \ldots$}
    \State {\texttt{// Update}}
    \State {Receive observation $\M y_k$}
    %\State {$\sigma\left(\hat{\M  x}^{+}_{k+1}, \hat{\M  P}_{k+1}^{+}\right)\gets${\Cref{def:gmkf_update}}}
    \begin{equation*}
    \begin{aligned}
        \sigma\left(\hat{\M  x}^{+}_k, \hat{\M  \Sigma}_k^{+}\right) \xleftarrow[]{\mathrm{BPUE}} \sigma\left(\hat{\M  x}^{-}_k, \hat{\M  \Sigma}_k^{-}\right) + \|\M m_{obs}(\M y_k, \M x_{k}) \|^2_{\M R_k^{-1}} % A(\M y_k)\M \phi_r(x) -\M b(\M y_k)
    \end{aligned}
    \end{equation*}

    \State {\texttt{// Prediction}}
    \State {Receive input $\M u_k$}
    \begin{equation*}
    \begin{aligned}
         \sigma\left(\begin{bmatrix}\hat{\M x}^{-}_k  \\ \hat{\M x}^{-}_{k+1}\end{bmatrix}, \hat{\M \Sigma}_{k:k+1}^{-}\right) 
         &\xleftarrow[]{\mathrm{BPUE}}~ \sigma\left(\hat{\M x}_k^{+}, \hat{\M \Sigma}_k^{+}\right) \\
         & + \|\M m_{dyn}(\M x_{k}, \M x_{k+1}, \M u_k)\|^2_{\M Q_k^{-1}}
    \end{aligned}
    \end{equation*}
    
    \State {\texttt{// ``Marginalization''}}
    \State {$\sigma\left(\hat{\M x}^{-}_{k+1}, \hat{\M \Sigma}^{-}_{k+1}\right)\gets \sigma\left(\begin{bmatrix}\hat{\M x}^{-}_k  \\ \hat{\M x}^{-}_{k+1}\end{bmatrix}, \hat{\M \Sigma}_{k:k+1}^{-}\right) $ 

\EndFor

\noindent \Return $\{\hat{\M x}_k^+ \mid \forall k \}$
}
\end{algorithmic}
\end{algorithm}

\begin{restatable}[Generalized Kalman Filter]{theorem}{thmgenKF}
\label{thm:gen_KF}
% \LC{when r=1?}
In the unconstrained linear Gaussian case, Algorithm~\ref{alg:mkf} with $r=1$ produces the same solution as the standard Kalman Filter and hence it is an optimal estimator for linear Gaussian systems. 
% with $r = 1$, Algorithm
% the problem degenerates to updating the quadratic form as in classical KF. 
\end{restatable}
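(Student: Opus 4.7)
The plan is to verify that, step by step, Algorithm~\ref{alg:mkf} specialized to $r=1$, unconstrained domain, and linear/Gaussian dynamics and observations reproduces the closed-form recursions of the classical Kalman filter; optimality then follows from the classical optimality of the KF.

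First I would specialize the building blocks. For $r=1$ the homogeneous basis is $\M \phi_1(\M x) = \M x$, so the SOS belief in~\Cref{def:SOS-factor} collapses to the quadratic belief $\|\M x - \hat{\M x}\|^2_{\hat{\M \Sigma}^{-1}}$, which is (up to an additive constant) the negative log-likelihood of $\mathcal{N}(\hat{\M x}, \hat{\M \Sigma})$. For a linear system $\M f(\M x_{k+1}, \M x_k, \M u_k) = \M x_{k+1} - \M F_k \M x_k - \M G_k \M u_k$ and $\M h(\M y_k, \M x_k) = \M y_k - \M H_k \M x_k$ with zero-mean noise, the moment conditions~\eqref{gmm_condition} are already linear in the state, and the extended noise covariances $\M Q_{1,k}$, $\M R_{1,k}$ reduce to the standard process and measurement covariances $\M Q_k$, $\M R_k$.

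Next I would show that each \eqref{eq:BPUE} subproblem invoked by the algorithm reduces exactly to a \eqref{eq:BLUE}. Both the update and the prediction cost functionals are sums of quadratic forms of affine expressions in the states, so they can be assembled into a single $\|\M b - \M A\M x\|^2_{\M V^{-1}}$, with $\M A$ full column rank under the standard observability/well-posedness assumptions of the KF. By \Cref{prop:blue-quad} this factorizes in closed form into $\|\M x - \hat{\M x}\|^2_{\M \Sigma^{-1}} + \hat{\rho}$ with $\hat{\M x} = \M \Sigma \M A^{\transpose} \M V^{-1} \M b$ and $\M \Sigma = (\M A^{\transpose} \M V^{-1} \M A)^{-1}$. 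I would then note that for $r=1$ the basis $\M v_1(\M x) = [1;\M x]$ has no repeated entries, so the moment constraints $\langle \M B^\perp_i, \M X\rangle = 0$ are vacuous, and the relaxation~\eqref{prob:pop_sdp} of a convex quadratic program is trivially tight and rank-1 (satisfying \Cref{assump:rank1sdp}), so \Cref{theorem:SDP=POP} applies and Algorithm~\ref{alg:mkf} indeed returns the pair $(\hat{\M x}, \M \Sigma)$ of BLUE. Matching coefficients then yields the standard KF update
\begin{equation*}
(\hat{\M \Sigma}_k^+)^{-1} = (\hat{\M \Sigma}_k^-)^{-1} + \M H_k^{\transpose}\M R_k^{-1}\M H_k,\quad \hat{\M x}_k^+ = \hat{\M \Sigma}_k^+\!\left((\hat{\M \Sigma}_k^-)^{-1}\hat{\M x}_k^- + \M H_k^{\transpose}\M R_k^{-1}\M y_k\right),
\end{equation*}
and analogously the prediction step produces the joint Gaussian belief on $(\M x_k, \M x_{k+1})$ implied by combining the posterior at time $k$ with the linear dynamics.

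The main obstacle is justifying that the heuristic ``marginalization'' step of Algorithm~\ref{alg:mkf}—which in the general case is only an approximation, as flagged in~\Cref{sec:GMKF}—is exact here. I expect this to follow from two observations specific to $r=1$: (a) the vector $\M \phi_1([\M x_k;\M x_{k+1}]) = [\M x_k;\M x_{k+1}]$ contains no cross-monomial block $c(\M x_k,\M x_{k+1})$, so the partition used in Algorithm~\ref{alg:mkf} degenerates and keeping the principal submatrix $\M \Sigma_{k+1,k+1}$ incurs no truncation; and (b) the joint belief is genuinely Gaussian, and the marginal of a Gaussian is obtained precisely by extracting the block of the mean and the corresponding principal submatrix of the covariance. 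Substituting the linear dynamics gives $\hat{\M x}_{k+1}^- = \M F_k\hat{\M x}_k^+ + \M G_k\M u_k$ and $\hat{\M \Sigma}_{k+1}^- = \M F_k\hat{\M \Sigma}_k^+\M F_k^{\transpose} + \M Q_k$, i.e.\ the standard KF prediction.

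Finally, a short induction on $k$, starting from the matching initial belief $\sigma(\hat{\M x}_0,\hat{\M \Sigma}_0)$, combines the per-step identifications above to conclude that the sequence of estimates produced by Algorithm~\ref{alg:mkf} with $r=1$ coincides with the KF sequence; optimality in the minimum-variance sense is then inherited from the classical KF.
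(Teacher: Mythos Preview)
Your proposal is correct and follows essentially the same route as the paper's proof in Appendix~\ref{apx:linearkf}: specialize to $r=1$ so that the SOS belief collapses to a quadratic, verify that the update step reduces to the information-form KF update (the paper does this via the dual solution $\M Y_l,\M Y_Q$ and recovers the Kalman gain, while you invoke \Cref{prop:blue-quad} directly---same computation), and then check that the prediction plus the marginalization heuristic yields the KF prediction; the paper carries out this last step by an explicit Schur complement on $\M Y_Q$, whereas you argue it via the absence of cross-monomials and exactness of Gaussian marginalization, which is an equivalent and arguably cleaner justification.
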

A proof of the theorem and more comparisons with the classical KF are presented in Appendix \ref{apx:linearkf}. 
% \end{small}
% For a comparison of the moment relaxation with KF, in terms of an update of covariance, please refer to the \Cref{appx:mkf_cov_update}.

% In the next section, we will show the connection of the proposed methods in polynomial systems with the linear Gaussian cases. We will show that the proposed method is a generalization of the classical Kalman filter.
\section{Numerical Experiments}
To demonstrate the efficacy of \GMKF in robotics applications, we design numerical experiments to evaluate its performance and benchmark it against several KF variations as baselines. In all our experiments, 
the moment relaxation produced rank-1 optimal solutions in both the batch case and when solving the prediction and update step 
of the
\GMKF.
%are available when solving \eqref{prob:pop_sdp} in GMKF.
\subsection{Comparison of \BPUE and \BLUE}
% \tsl{highlight the non-Gaussian scenario}
We first compare \BPUE with \BLUE for linear systems contaminated by non-Gaussian noise. Though \BLUE is optimal when the system is linear and the noise is Gaussian, we show that \BPUE outperforms \BLUE when the noise is non-Gaussian since it considers higher-order moments of the noise. In particular, we consider the moments of the noise up to the fourth order, i.e., we choose $r = 2$ in \eqref{eq:BPUE}. 

We consider the linear measurement model:
\begin{equation}
\begin{aligned}
    y_1  &= x_1 +  v_1 \\
    y_2  &= x_2 +  v_2
\end{aligned}
\end{equation}
for $\M x, \M y \in \mathbb{R}^2$ and the noise $\M v \in \mathbb{R}^2$ with known moments. We choose the true value as $x_1 = x_2 = 0$. We design a binary noise distribution and a trigonometric noise distribution that are highly non-Gaussian. The binary noise 
%(shown in \Cref{fig:binnoise}) 
is defined as:
\begin{equation}
    \vv_k = s\cdot\begin{bmatrix}
        q_1-0.5\\
        q_2-0.5
    \end{bmatrix} + \M \epsilon ,
\end{equation}
where $q_1, q_2\sim \text{Bernoulli}(0.5)$, $\M \epsilon\sim \text{Gaussian}(0, 0.1\M I)$, and $s$ is a scale factor controlling the magnitude of the noise.
In words, $q_1$ and $q_2$ are either 0 (with probability 0.5) or 1, while $\M \epsilon$ is a small zero-mean additive Gaussian noise with covariance $0.1 \M I$. This produces a multi-modal distribution with 4 modes, as visualized in \Cref{fig:binnoise} for $s=1$. 

The trigonometric noise is defined as:
\begin{equation}
    \vv_k = s\cdot\begin{bmatrix}
    \cos(q\pi)\\
    \sin(q)
    \end{bmatrix}+ \M \epsilon
\end{equation}
where $q\sim \text{Uniform}(-\pi, \pi)$, $\M \epsilon\sim \text{Gaussian}(0, 0.1\M I)$, and $s$ is a scale factor. 
The trigonometric noise is a highly non-Gaussian multi-modal distribution, as visualized in \Cref{fig:trignoise} for $s=1$. 

\begin{figure}[th]
    
\centering
% First row
\begin{minipage}{0.24\textwidth}
    \centering
    \includegraphics[width=\linewidth, trim={0.8cm 3cm 1cm 4cm},clip]{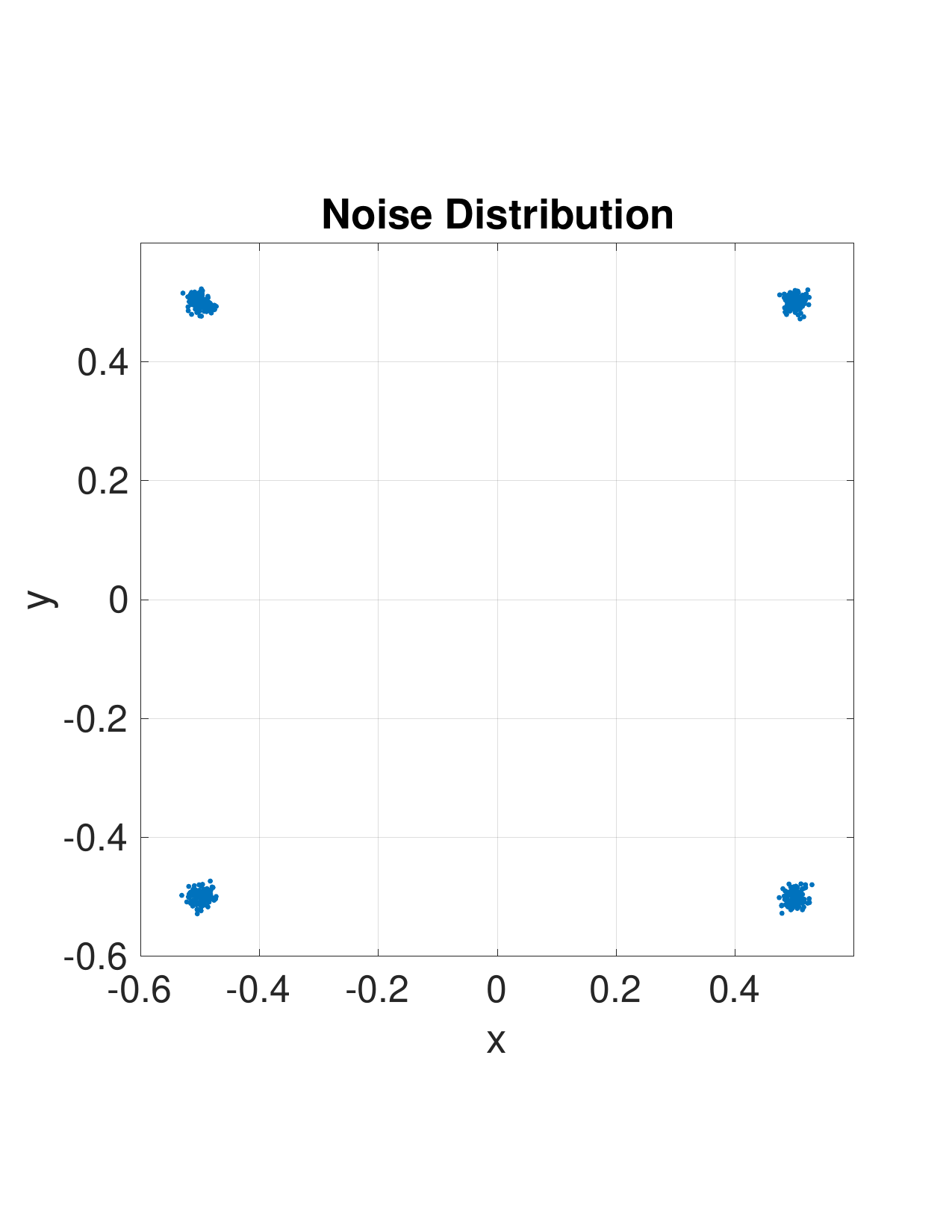} % Replace with your image
    \caption{Binary noise}
    \label{fig:binnoise}
\end{minipage}
\hfill
\begin{minipage}{0.24\textwidth}
    \centering
    \includegraphics[width=\linewidth, trim={0.8cm 3cm 1cm 4cm},clip]{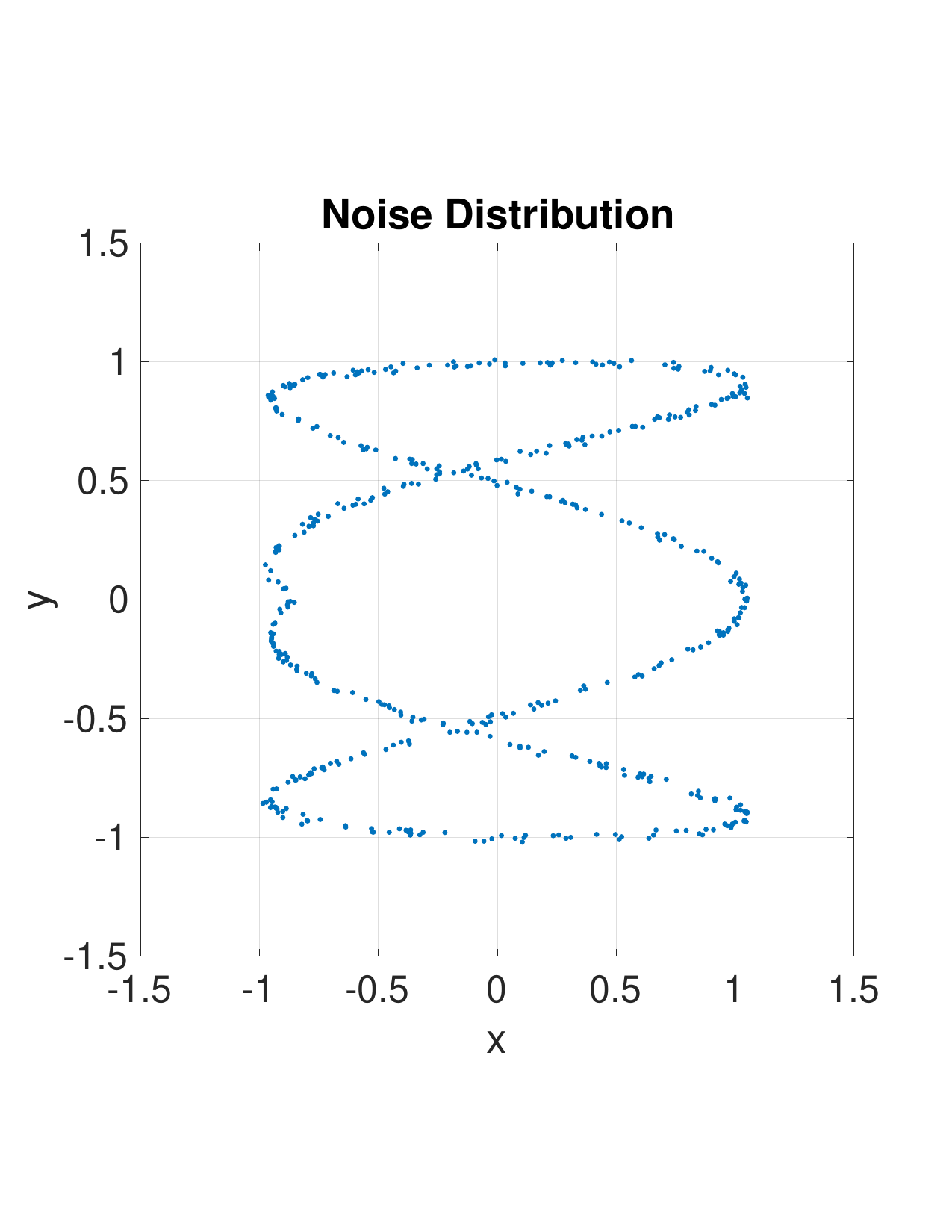} % Replace with your image
    \caption{Trigonometric noise}
    \label{fig:trignoise}
\end{minipage}
\end{figure}

% \subsection{\BPE vs. RLS}tf5
% \tsl{Do RLS 50 times, each 100 steps, and plot the log scale error. Both \BLUE and \BPUE. RLS is equivalent to batch using our method. }
% To separately evaluate the update step's performance, we compare the error of Recursive Least Squares (RLS) and \BPUE under non-Gaussian noise. Specifically, we design two highly non-Gaussian noises: Binary and Trigonometric Noise, as shown in Fig. \ref{fig:binnoise} and \ref{fig:trignoise}. 
% \subsubsection{Batch Estimation Setup}
Under this setup we evaluate the estimation error for increasing noise scales ranging from 0.1 and 10 and compare three approaches: (i) the batch \BPUE that takes 50 measurements and produces an estimate of the state $\M x$, (ii) a recursive \BPUE that adds one measurement at a time (this can be seen as applying 50 update steps of the \GMKF without prediction steps), and (iii) the \BLUE that takes the same 50 measurements and estimates the state using linear least squares.
\Cref{fig:update-image1}(a) shows the average estimation errors for the case with binary noise, while \Cref{fig:update-image1}(b) shows the errors for the case of trigonometric noise. 
For each noise scale, we compute the average estimation error across 100 Monte Carlo runs.
The results suggest that under increasingly larger non-Gaussian noises, the \BPUE estimates 
remain accurate and have a much slower error increase compared to  \BLUE. 
The results also show that the batch and recursive formulation produce identical estimates (the corresponding lines in~\Cref{fig:update-image1} fully overlap): 
this is expected since the only approximate step of the \GMKF is in the way we marginalize out previous states after the prediction step.
% We consider a batch setup for \BPUE and \BLUE under different noise scales ranging from 0.1 to 10.
%  We use 50 observations to compute an estimate of the state $\M x$. 
% %together in the update equation. 
% and compare the errors across 
% considering noise 
% batched update steps in Fig. \ref{fig:batch-update-trignoise} \david{check} under two non-Gaussian noise distributions. 
% Results suggest that under increasingly larger non-Gaussian noises, the \BPUE estimates remain relatively consistent with a lower error increase rate than that of \BLUE. 

\begin{figure}[htp]

\centering
% First row
% First row
\begin{minipage}{0.24\textwidth}
    \centering
    \includegraphics[width=\linewidth, trim={0.2cm 0.2cm 0.8cm 0.2cm},clip]{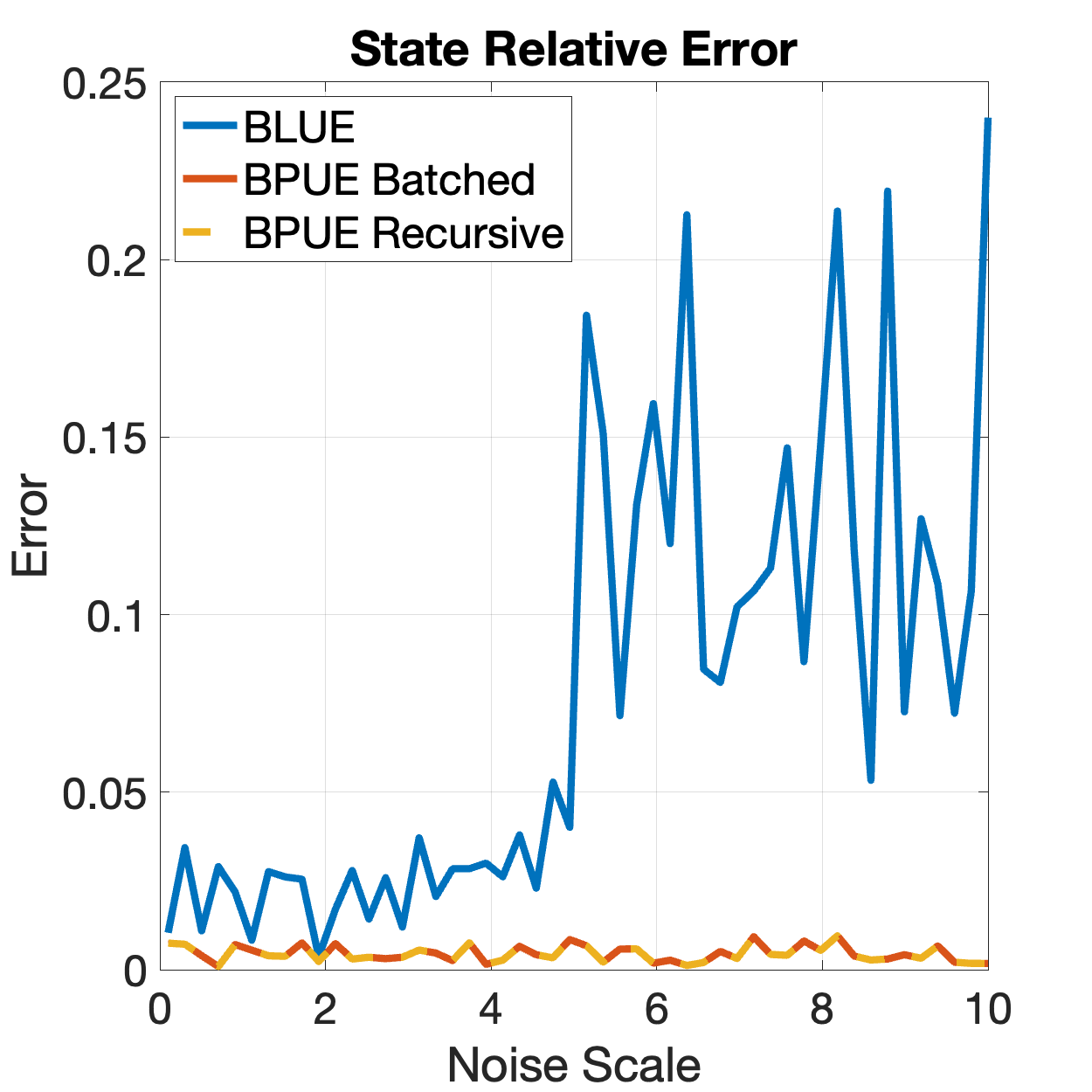} % Replace with your image  
    (a) Binary noise
\end{minipage}
\hfill
\begin{minipage}{0.24\textwidth}
    \centering
    \includegraphics[width=\linewidth, trim={0.2cm 0.2cm 1cm 0.3cm},clip]{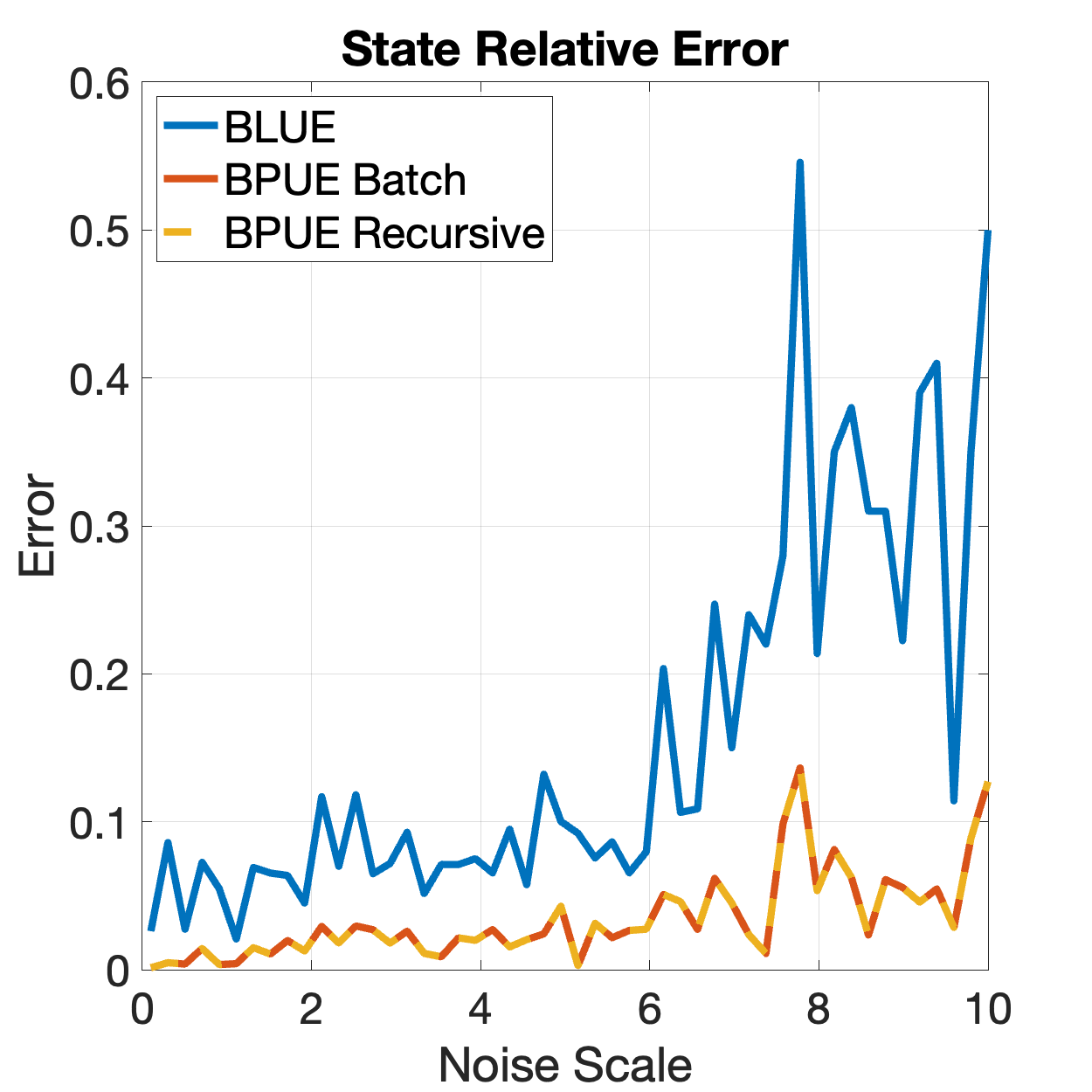} % Replace with your image
    (b) Trigonometric noise
\end{minipage}

\caption{Average estimation errors for \BLUE, batch \BPUE, and recursive \BPUE (update-only \GMKF) for increasing noise scale, for the case of (a) binary noise and (b) trigonometric noise.
% under binary noise (\textbf{left}) and under trigonometric noise (\textbf{right}). 
\label{fig:update-image1}}
\end{figure} 

We also illustrate the empirical covariance of the batch estimates in \Cref{fig:batchnoise}. Each data point represents an estimate produced by \BPUE using 50 measurements, and we obtain 500 points each one corresponding to a Monte Carlo run. Then, we fit a covariance to the corresponding estimation errors. Note that we only show \BPUE batch since we conclud that the estimates are identical in the recursive case (\Cref{fig:update-image1}).
 Again, the figure clearly shows that \BPUE leads to much smaller estimation errors, compared to \BLUE.
 % we plot the covariance for 500 of the batched measurements for each method. Under a noise scale of 1, we can find that under both non-Gaussian noise, \BPUE outperforms \BLUE with a significantly smaller covariance hull. 

% \begin{figure}
% \begin{minipage}{0.24\textwidth}
%     \centering
%     \includegraphics[width=\linewidth, trim={1cm 3cm 1cm 4cm},clip]{} % Replace with your image
% \end{minipage}
% \hfill
% \begin{minipage}{0.24\textwidth}
    
%     \centering
%     \includegraphics[width=\linewidth, trim={1cm 3cm 1cm 4cm},clip]{} % Replace with your image
% \end{minipage}
% \caption{Comparison of batched update under binary noise (\textbf{left}) and under trigonometric noise (\textbf{right}).}
% \label{fig:batch-update-trignoise}
% \end{figure}
\begin{figure}
\begin{minipage}{0.24\textwidth}
    \centering
    \includegraphics[width=\linewidth, trim={0.8cm 6cm 1cm 4cm},clip]{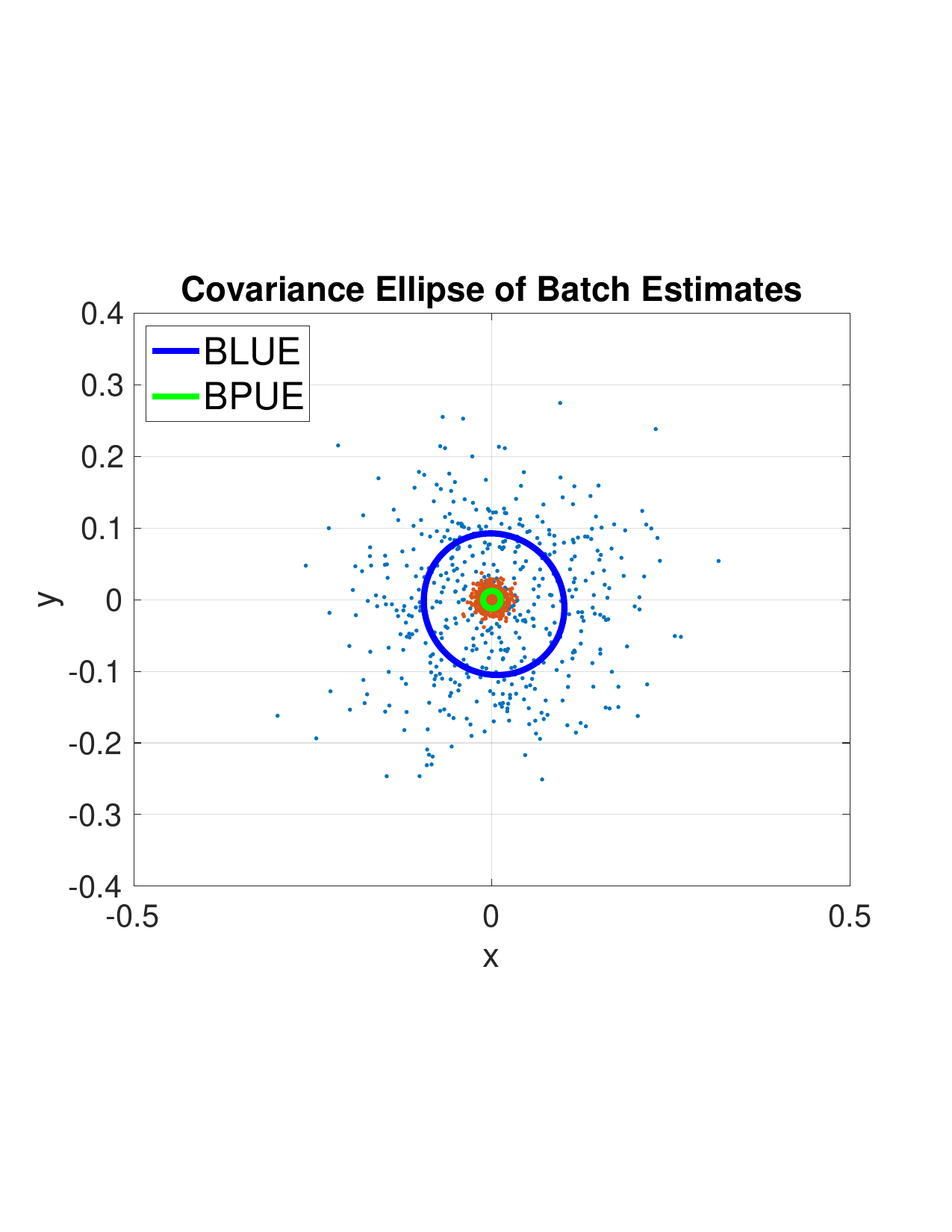} % Replace with your image
    (a) Binary noise
\end{minipage}
\hfill
\begin{minipage}{0.24\textwidth}
    \centering
    \includegraphics[width=\linewidth, trim={0.8cm 6cm 1cm 4cm},clip]{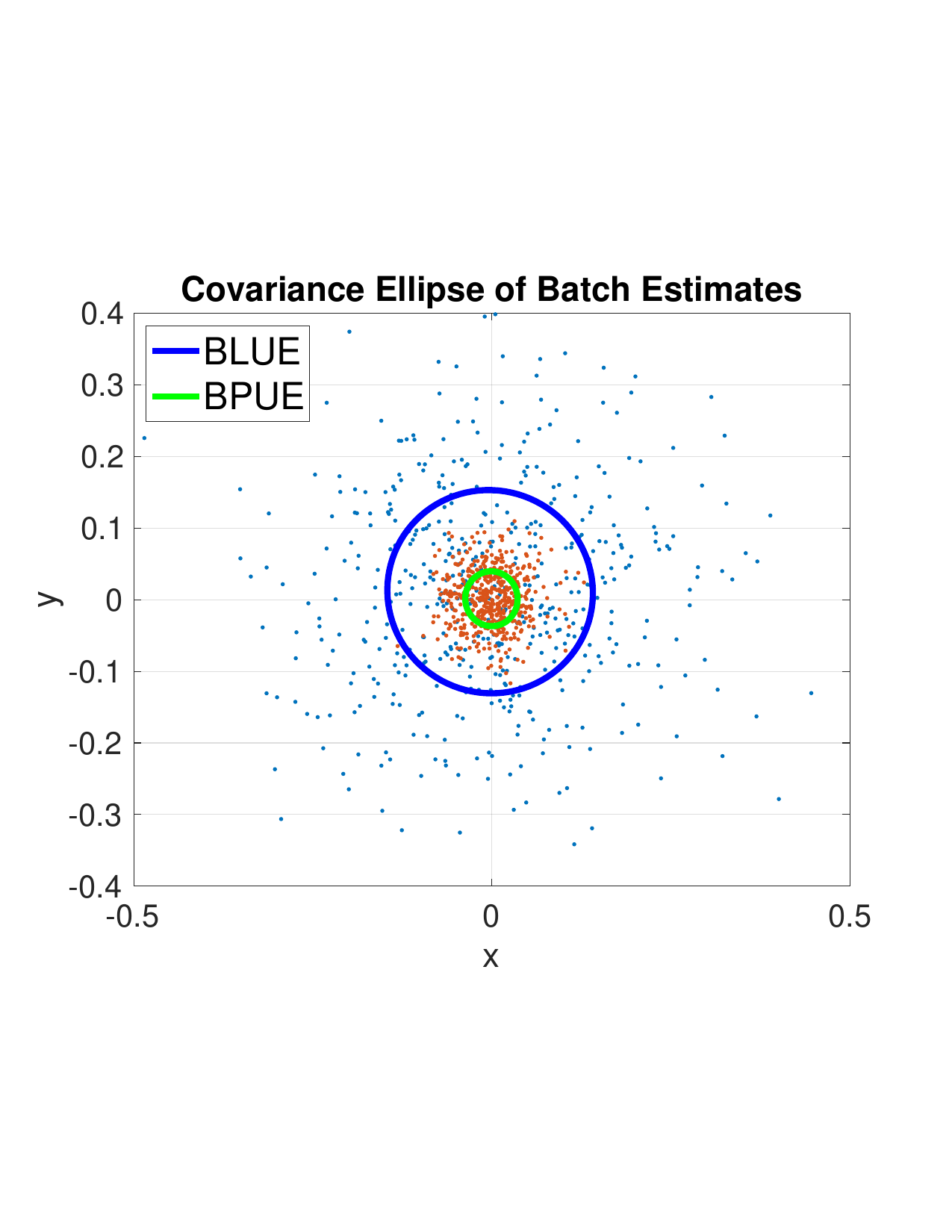} % Replace with your image
    (b) Trigonometric noise
\end{minipage}
\caption{Comparison of the empirical error covariances 
for the \BLUE and \BPUE estimates over 500 Monte Carlo runs and for noise scale $s=1$.
Blue dots correspond to \BLUE estimation errors, while orange dots correspond to \BPUE estimation errors.}
\label{fig:batchnoise}
\vspace{-3mm}
\end{figure} 

% \subsubsection{Recursive Estimates Comparison}
% In the recursive setting of \BPUE and \BLUE, only 1 observation is provided in the update equation. Specifically, this setting is equivalent to GMKF without the dynamics (prediction step). We use the same noise scales as in the previous experiments to investigate the effect of noise magnitudes. In an update step, since only 1 observation is received, we run 50 update steps to get all observations. For each noise scale, we average the errors of 50 runs of 50-step recursive updates as shown in Fig. \ref{fig:update-image1}. Again, \BPUE estimates remain relatively consistent with a lower error increase rate than that of \BLUE. 

% % \subsection{Prediction Step}
% \subsection{Covariance Propagation in Moment Space}
% \harry{TODO: Banana shape gaussian, Cite the RSS paper.}
% \harry{TODO: Prediction only: w/e is relevant, noise magnitude, number of est? Larger noise --> worse. Still makes sense to make it over time. not sure if it's relevant (consistency over time).}
% Next, we separately assess the performance of the prediction step of GMKF.
% \subsection{Moment KF v.s. KF/UKF}
\begin{figure*}[htp]

\centering

% First row
\begin{minipage}{0.19\textwidth}
    \centering
    \includegraphics[width=\linewidth, trim={1cm 4cm 1cm 5.3cm},clip]{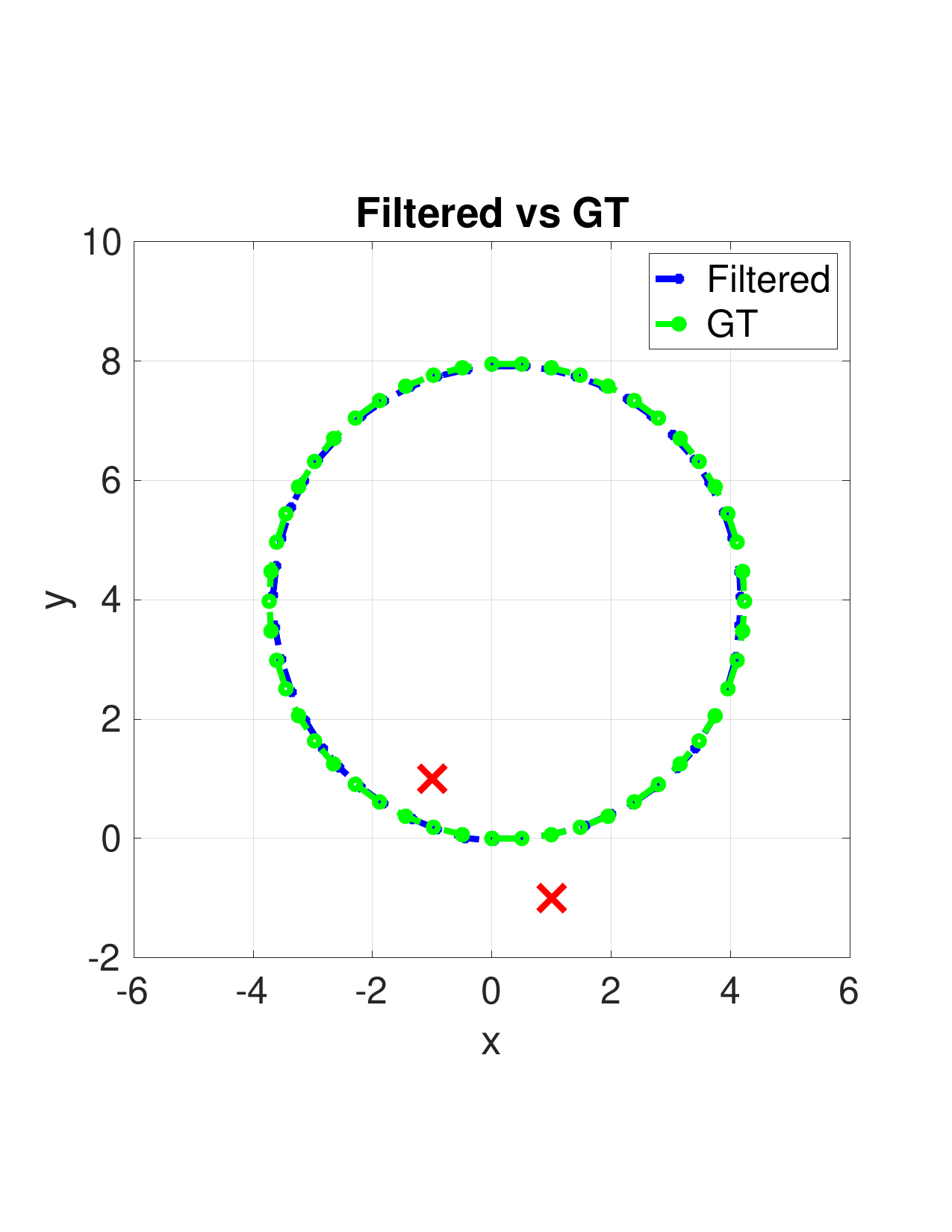}\\ % Replace with your image
    \scriptsize{GMKF with Binary Noise}
    % \label{fig:mkf-image1}
\end{minipage}
\hfill
\begin{minipage}{0.19\textwidth}
    \centering
    \includegraphics[width=\linewidth, trim={1cm 4cm 1cm 5.3cm},clip]{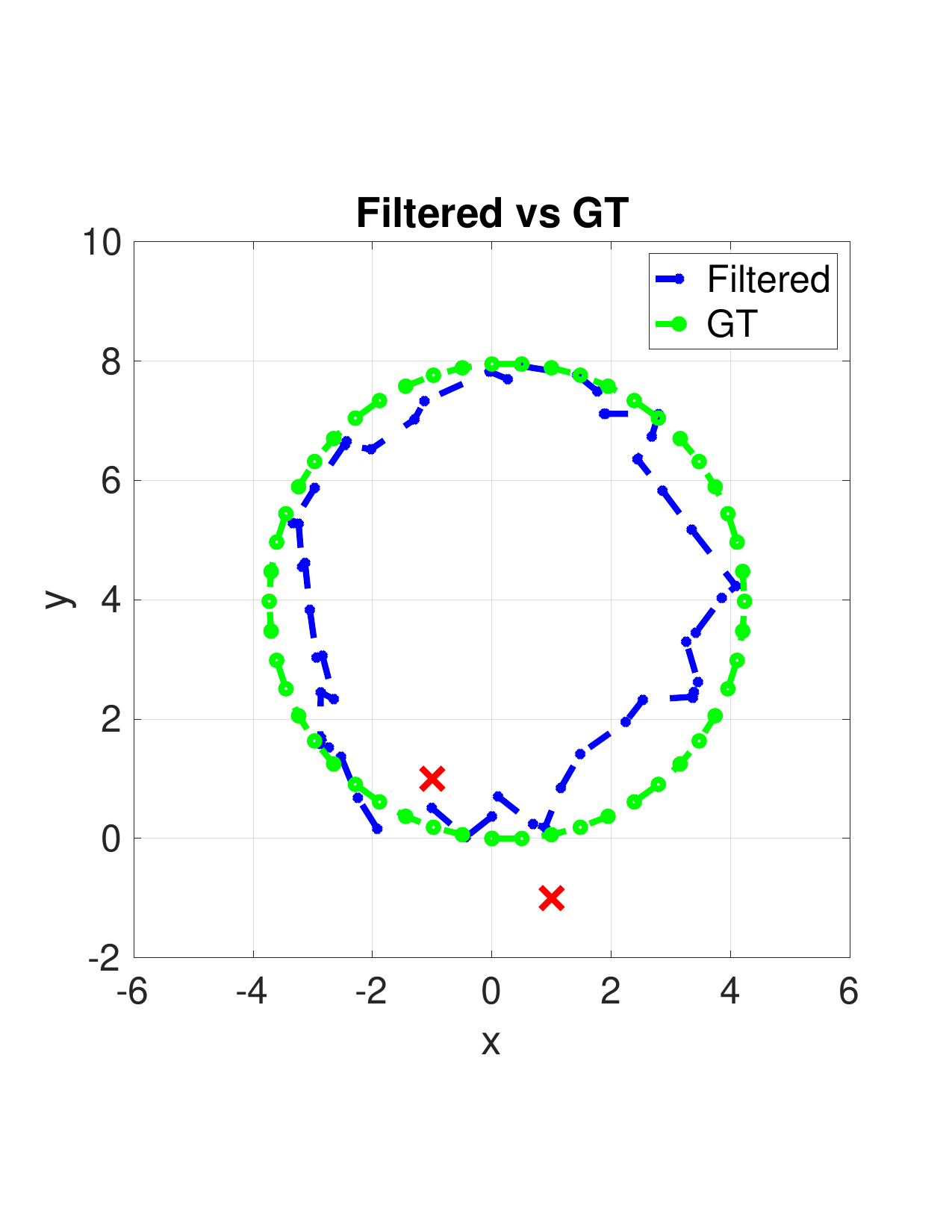}\\ % Replace with your image
    \scriptsize{EKF with Binary Noise}
    % \label{fig:mkf-image3}
\end{minipage}
\hfill
\begin{minipage}{0.19\textwidth}
    \centering
    \includegraphics[width=\linewidth, trim={1cm 4cm 1cm 5.3cm},clip]{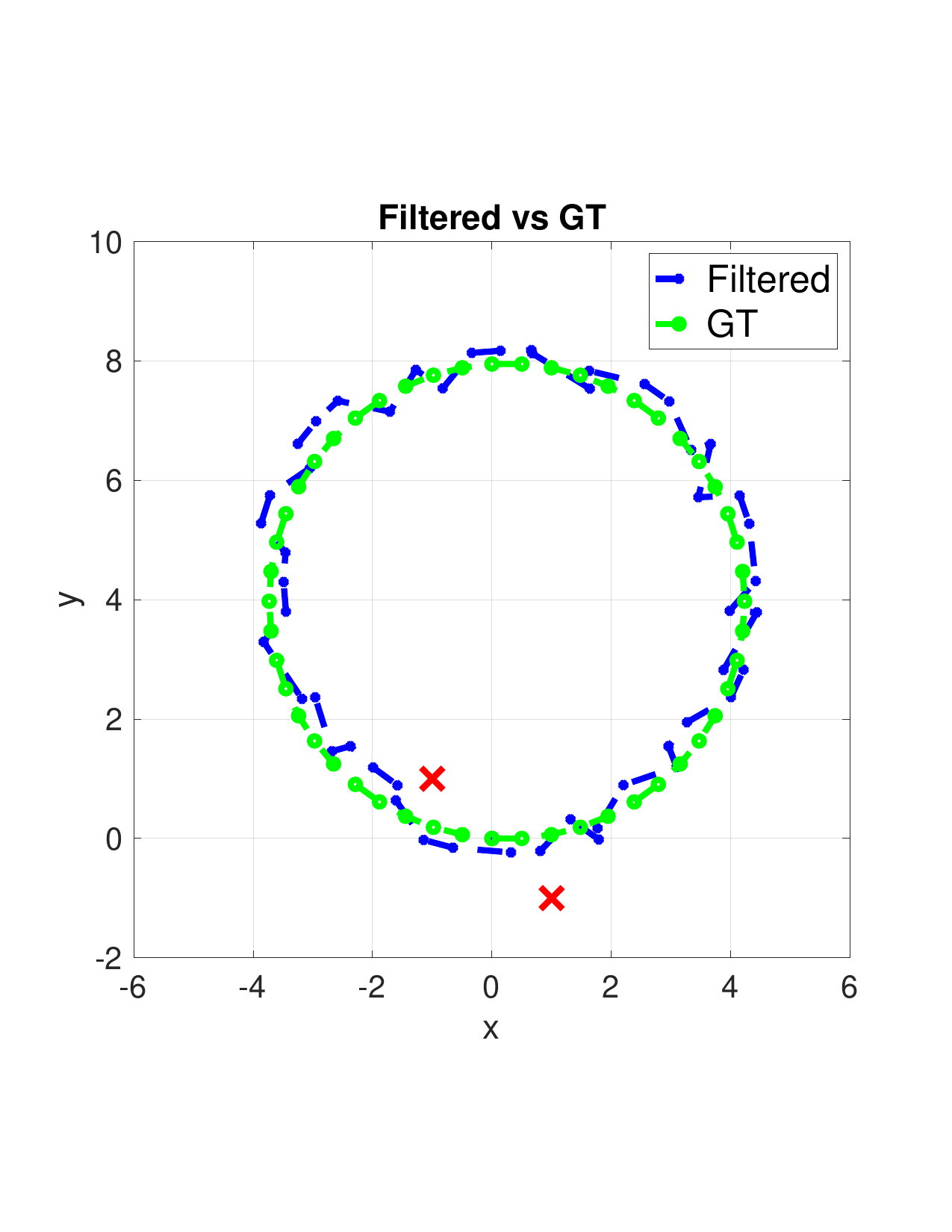}\\ % Replace with your image
    \scriptsize{UKF with Binary Noise}
    % \label{fig:mkf-image5}
\end{minipage}
\hfill
\begin{minipage}{0.19\textwidth}
    \centering
    \includegraphics[width=\linewidth, trim={1cm 4cm 1cm 5.3cm},clip]{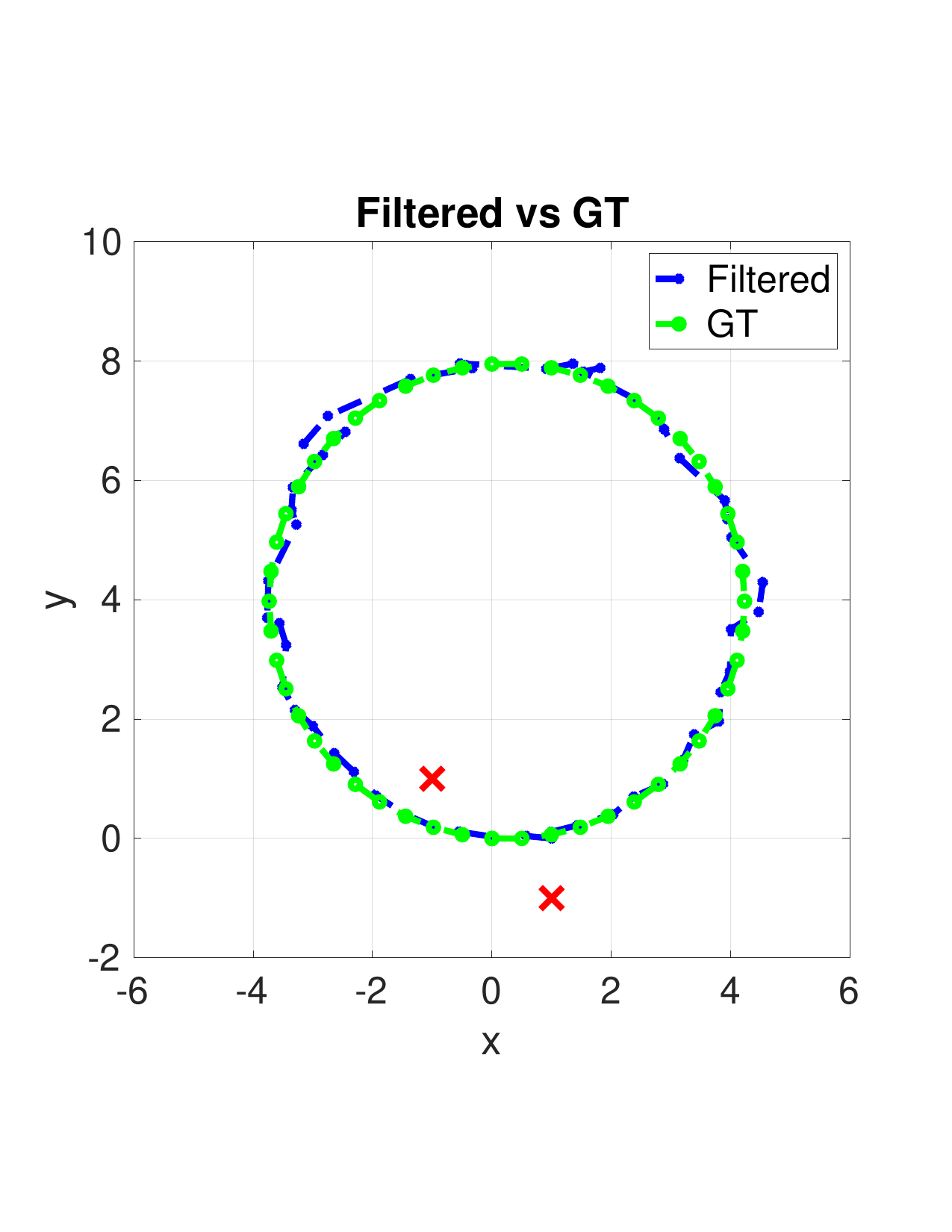}\\ % Replace with your image
    \scriptsize{InEKF with Binary Noise}
    % \label{fig:mkf-image7}
\end{minipage}
\hfill
\begin{minipage}{0.19\textwidth}
    \centering
    \includegraphics[width=\linewidth, trim={1cm 4cm 1cm 5.3cm},clip]{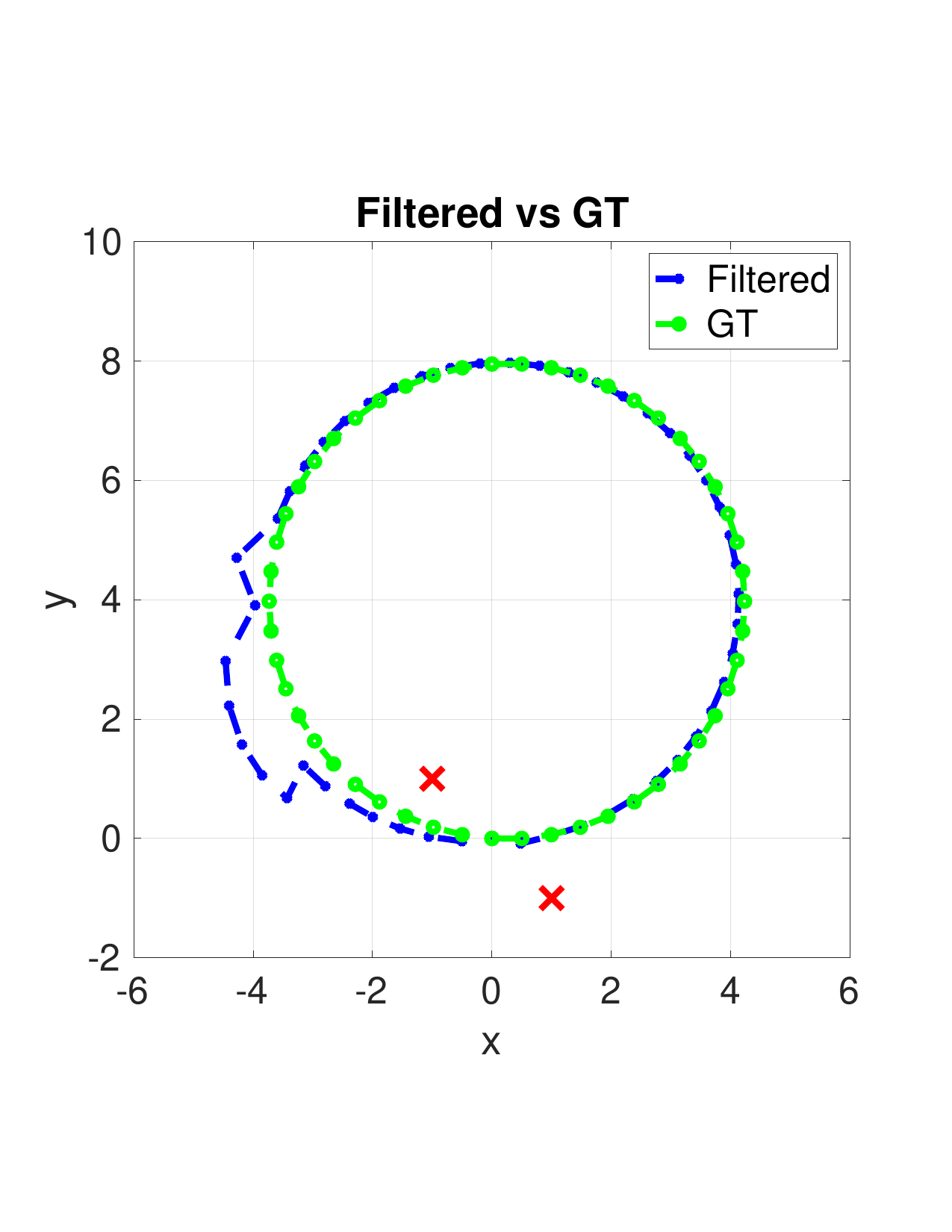}\\ % Replace with your image
    \scriptsize{UKFM with Binary Noise}
    % \label{fig:mkf-image7}
\end{minipage}
\hfill

\vspace{1em} % Add some vertical spacing

\begin{minipage}{0.19\textwidth}
    \centering
    \includegraphics[width=\linewidth, trim={1cm 4cm 1cm 5.3cm},clip]{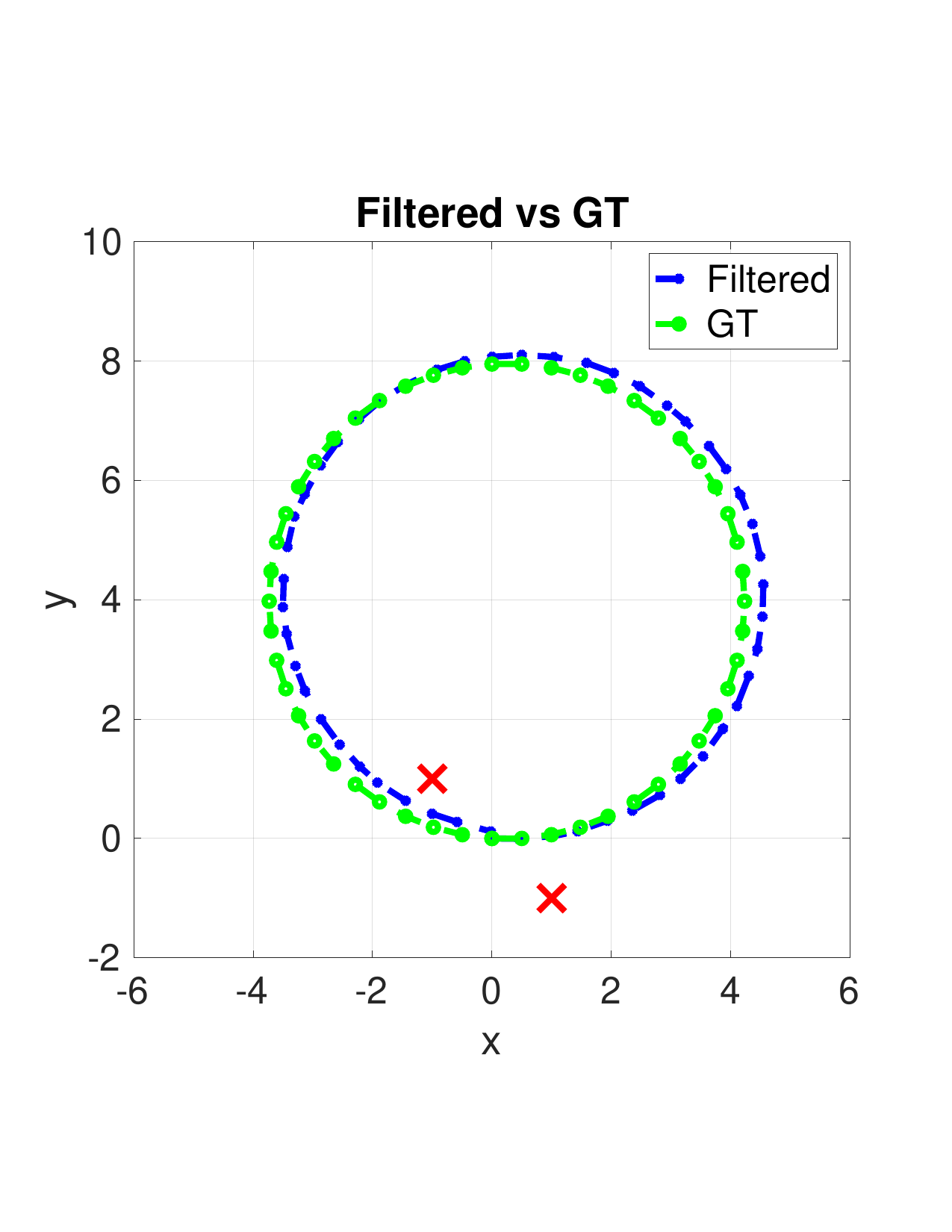}\\ % Replace with your image
    \scriptsize{GMKF with Trig Noise}
    % \label{fig:mkf-image2}
\end{minipage}
\hfill
\begin{minipage}{0.19\textwidth}
    \centering
    \includegraphics[width=\linewidth, trim={1cm 4cm 1cm 5.3cm},clip]{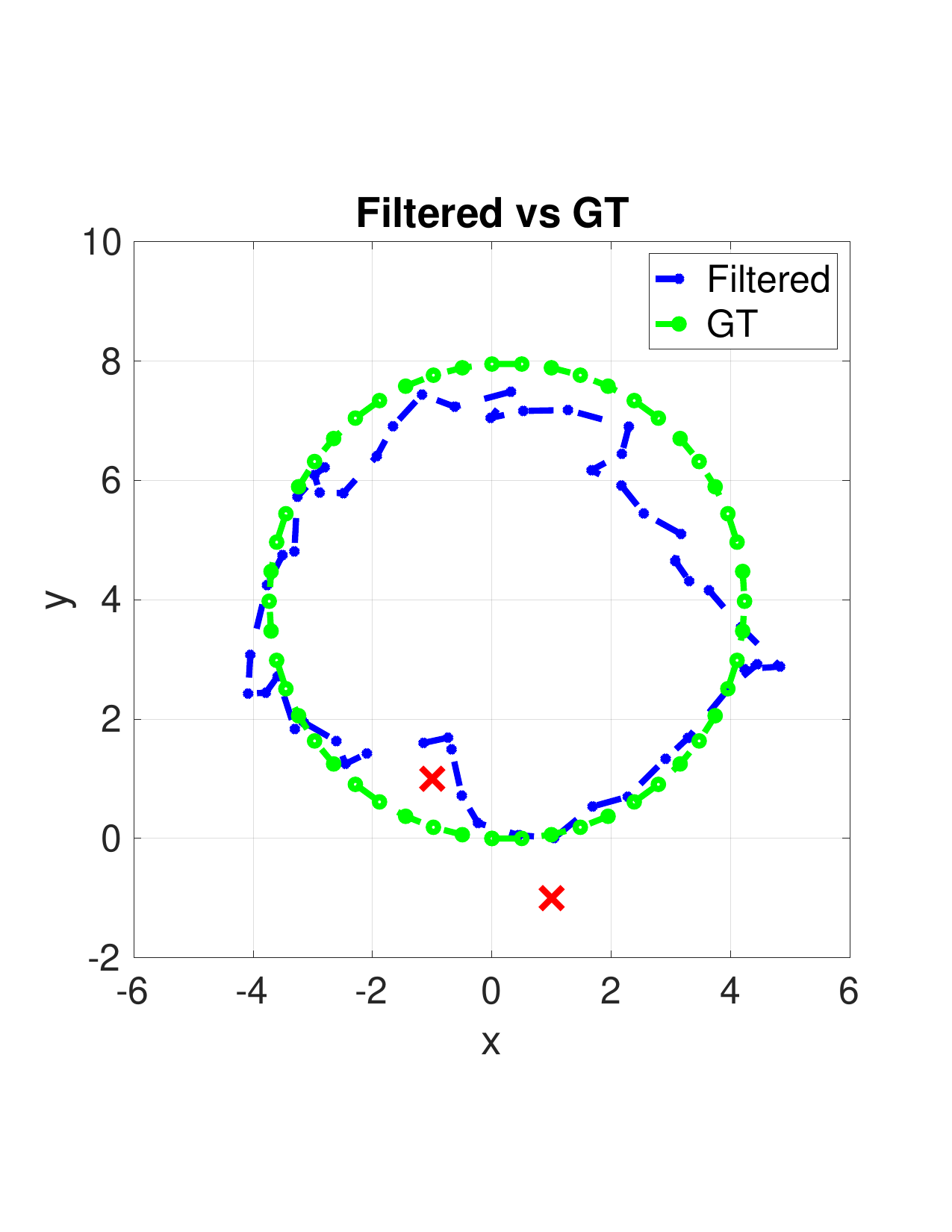}\\ % Replace with your image
    \scriptsize{EKF with Trig Noise}
    % \label{fig:mkf-image4}
\end{minipage}
\hfill
\begin{minipage}{0.19\textwidth}
    \centering
    \includegraphics[width=\linewidth, trim={1cm 4cm 1cm 5.3cm},clip]{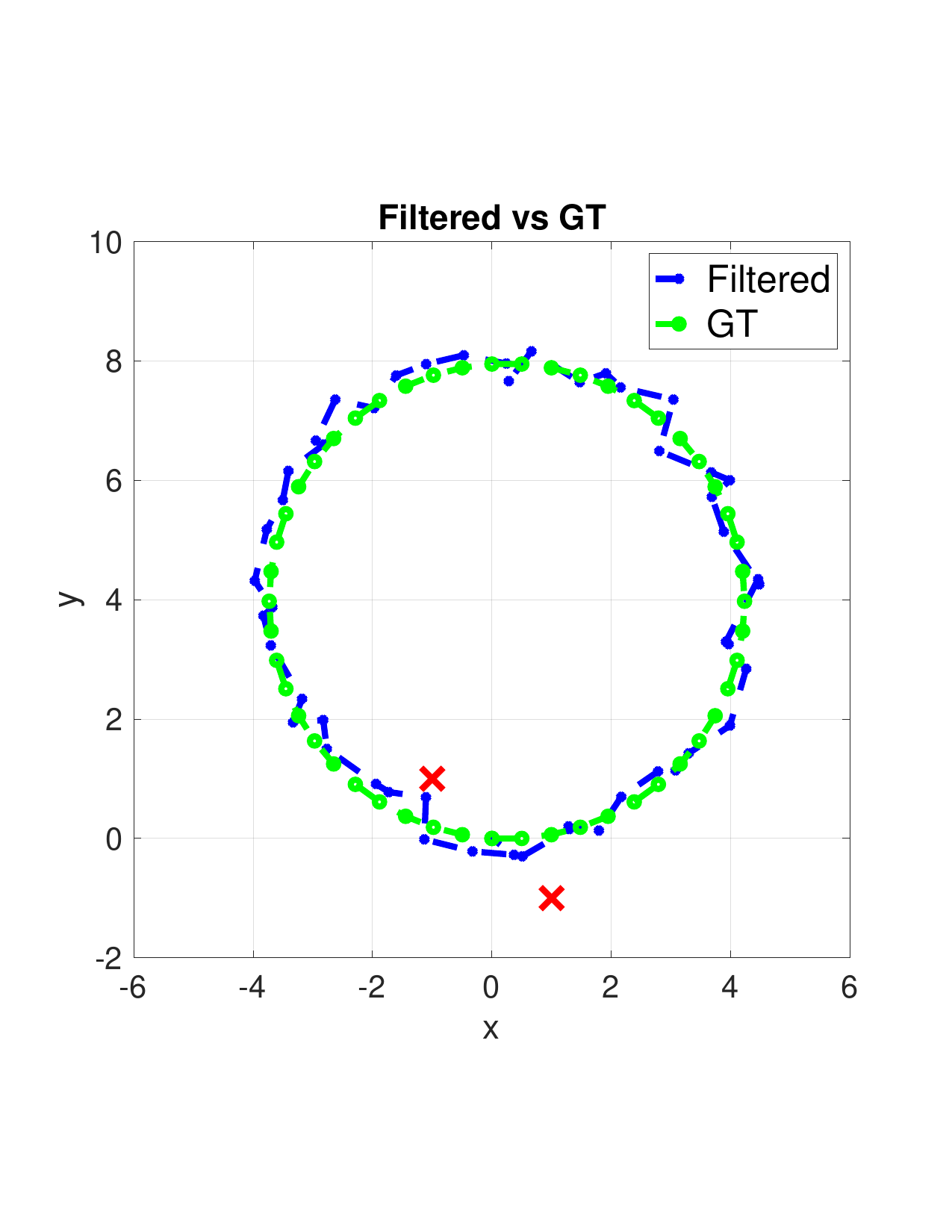}\\ % Replace with your image
    \scriptsize{UKF with Trig Noise}
    % \label{fig:mkf-image6}
\end{minipage}
\hfill
\begin{minipage}{0.19\textwidth}
    \centering
    \includegraphics[width=\linewidth, trim={1cm 4cm 1cm 5.3cm},clip]{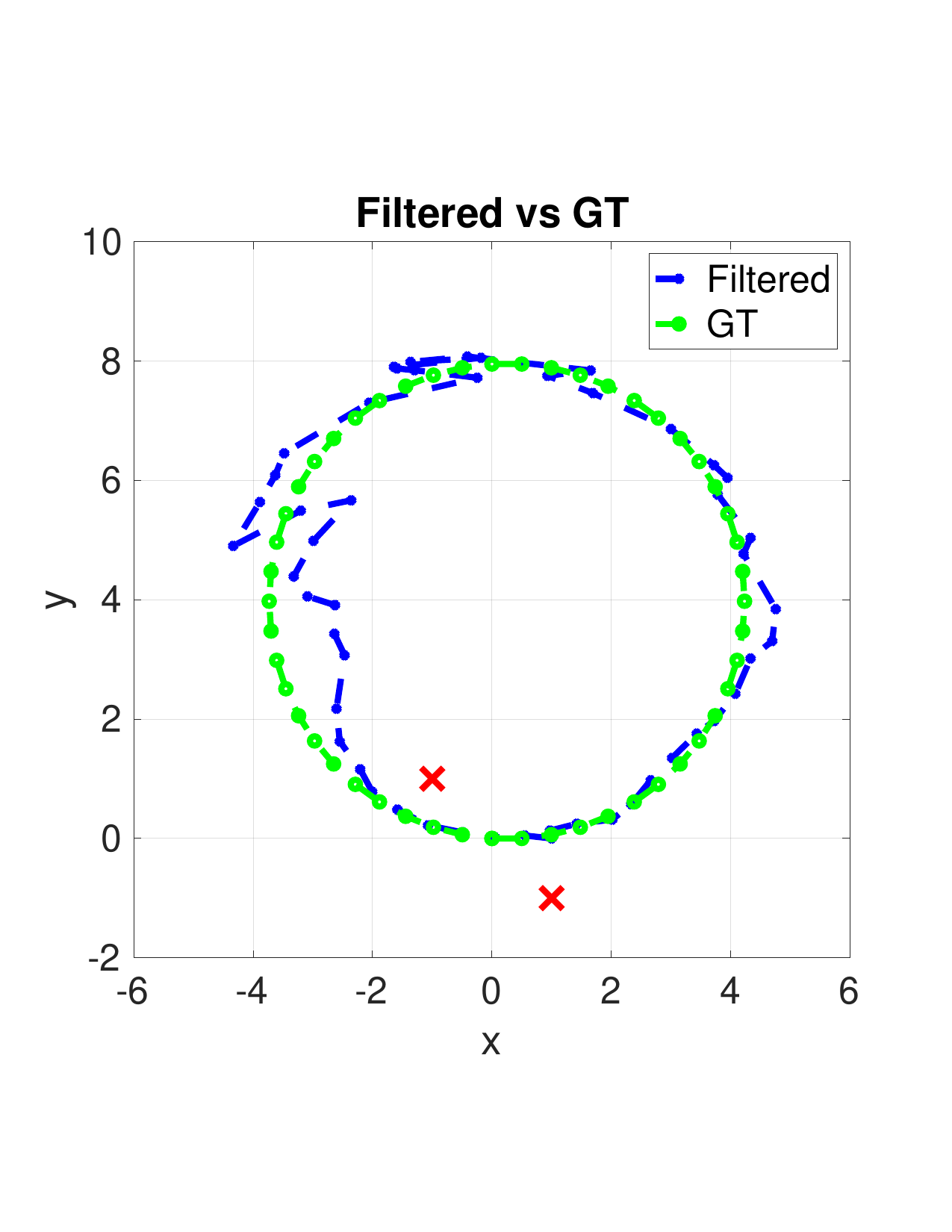}\\ % Replace with your image
    \scriptsize{InEKF with Trig Noise}
    % \label{fig:mkf-image8}
\end{minipage}
\hfill
\begin{minipage}{0.19\textwidth}
    \centering
    \includegraphics[width=\linewidth, trim={1cm 4cm 1cm 5.3cm},clip]{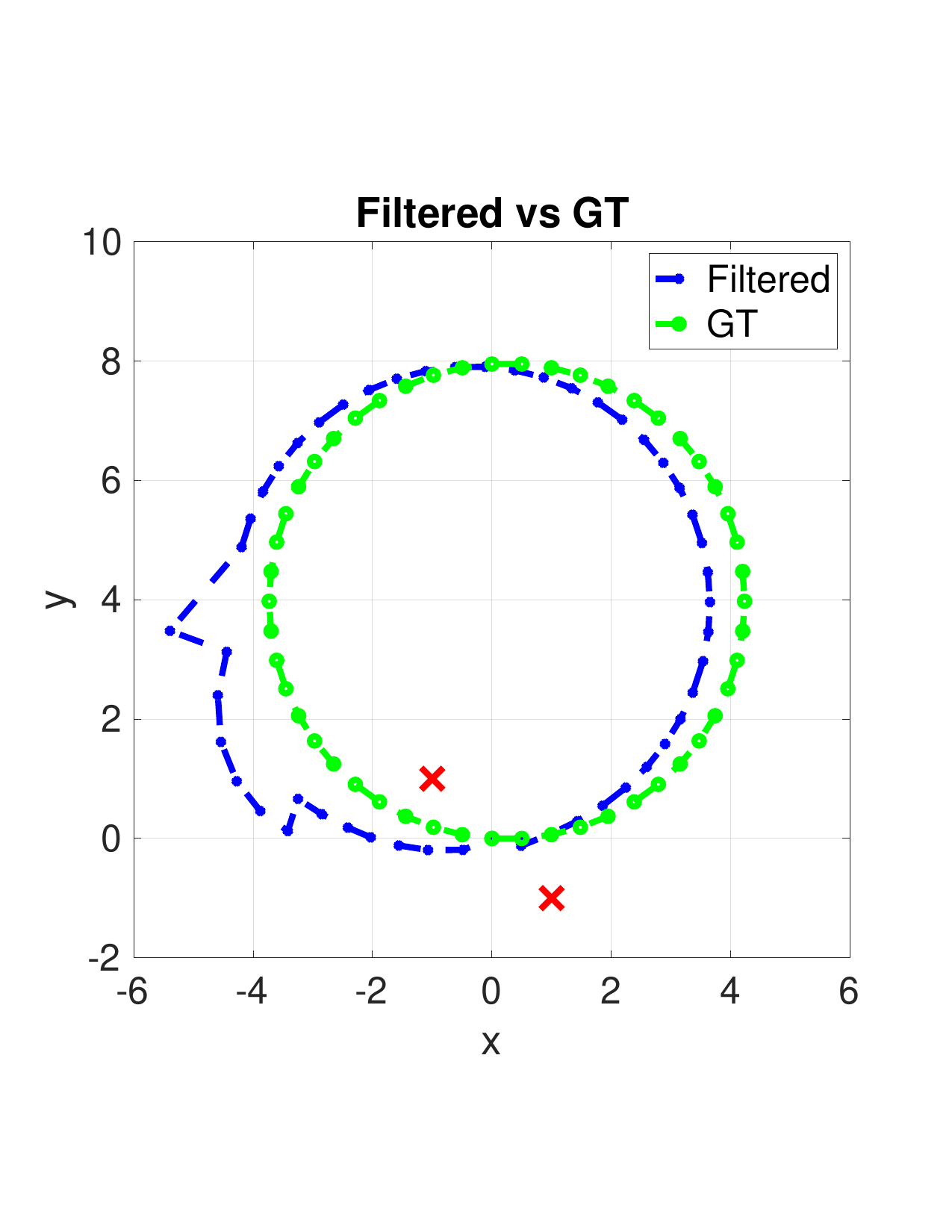}\\ % Replace with your image
    \scriptsize{UKFM with Trig Noise}
    % \label{fig:mkf-image8}
\end{minipage}
\hfill
\caption{Visualization of the filtered trajectory vs. ground-truth trajectory for the proposed approach (\GMKF) and baselines (EKF, UKF, InEKF, UFKM). The red crosses denote the positions of the (known) landmarks. In each step, only one landmark position is observed.
}
\label{fig:mkf-image8}
\end{figure*}
\subsection{Robot Localization in $\mathrm{SE}(2)$}
We consider a localization problem for a robot moving in the plane (i.e., with state in $\mathrm{SE}(2)$) 
and observing landmarks at known locations. First, we define the dynamics: % on $\mathrm{SE}(2)$:
\begin{equation}
\label{eq:SE2_dynamics}
    \M X_{k+1} = \M X_k \cdot \M U_k \cdot \M W_k, 
\end{equation}
where $\M X_k$ is the  robot pose (represented as a $3 \times 3$ transformation matrix), $\M U_k$ is the pose change between time $k$ and $k+1$, and $\M W_k$ is the process noise, all defined on $\mathrm{SE}(2)$. We define the vectorized state as $\M{x}_k:=[x_k, y_k, c_k, s_k]$ and note that the $\mathrm{SE}(2)$ pose becomes the following matrix with quadratic constraints $g(\M{x})$:
\begin{equation}\label{eq:statecon}
    \M X_k = \begin{bmatrix}
        c_k & -s_k & x_k \\
        s_k &  c_k & y_k \\
        0 &  0 & 1 \\
    \end{bmatrix}, g(\M{x}_k) = c_k^2 + s_k^2 - 1 = 0.
\end{equation}
We consider the range-bearing observation function
\begin{equation}
    \M y_k = \M R_k^{\transpose}(\M p_L - \M p_k) + \M v_k,
\end{equation}
where $\M R_k:= \begin{bmatrix}
        c_k & -s_k \\
        s_k &  c_k \\
    \end{bmatrix}$ is the rotation matrix describing the orientation of the robot at time $k$, $\M p_k$ is the position of the robot,
    $\M p_L$ is the known position of the landmark $L$ in Cartesian coordinates,  and $\M v_k$ is the measurement noise. 
Since the process noise is multiplicative and we need it to be additive to comply with the form assumed in~\eqref{eq:polyDynSys}, 
we approximate the dynamics using the most recent state estimate $\hat{\M X}^{+}_k$:\footnote{The expert reader might realize that 
an alternative way to rephrase the dynamics in the form~\eqref{eq:polyDynSys} is to simply rewrite~\eqref{eq:SE2_dynamics} as
$(\M U_k)^{-1} \cdot (\M X_k)^{-1} \cdot \M X_{k+1} =  \M W_k$. We prefer not to follow this route for practical reasons: the term $(\M U_k)^{-1} \cdot (\M X_k)^{-1} \cdot\M X_{k+1}$ involves degree-2 polynomials, that, when expressed in the extended form~\eqref{eq:ext-polyDynSys}, lead to high-degree polynomials and result in impractically slow moment relaxations. 
Despite this approximation, our tests show that \GMKF largely outperforms competitors and can seamlessly account for state constraints as the ones in~\eqref{eq:statecon}. 
} % as the operating point:
\begin{equation}
\begin{aligned}
    &\M X_{k+1} - \M X_k\M U_k = \hat{\M X}^{+}_k{\M U}_k(\M W_k - \M I), \\
    &\M f(\M X_{k+1}, \M X_k, \M U_k) = {\M U}_k^{\transpose}\hat{\M X}^{+\transpose}_k(\M X_{k+1} - \M X_k\M U_k) = (\M W_k - \M I).
\end{aligned}
\end{equation}
% Though multiplying $\M{X}_k$ to the LHS of \eqref{eq:SE2_dynamics} results in state-independent noise, the resulting quadratic dynamics does not yield a rank-1 solution.
% where the operating point $\bar{\M X}_k$ is chosen as the most updated state $\hat{\M X}^{+}_k$. 
We similarly rewrite the measurement model linearly as: % observation model:
\begin{equation}
\begin{aligned}
    &\M R_k\M y_k - (\M p_L - \M p_k) =  \bar{\M R}_k\M v_k, \\ 
    &\M h(\M y_k, \M x_k) = \bar{\M R}^{\transpose}_k(\M R_k\M y_k - (\M p_L - \M p_k)) =  \M v_k. 
\end{aligned}
\end{equation}

We compare the performance across different filters under large non-Gaussian observation noise. We choose the EKF, UKF, and their variants on Lie groups, i.e., the Right Invariant EKF (InEKF) \cite{barrau2018invariant} and UKF on Manifold (UKFM) \cite{brossard2017unscented} as baselines. For the EKF and UKF, we consider the position $(x, y)$ and heading angle $\theta$ as the state. For the InEKF and UKFM, the state is $\mathrm{SE}(2)$ with linearization and covariance defined in the Lie algebra. 

In our experiments, we make the robot track a circle with a constant twist. 
For each of the trials, we fix a random seed and use the same noise values for all techniques. 
The process noise considers the differential wheel odometry model from \cite{long2013banana}, assuming a zero-mean Gaussian distribution in the Lie algebra with covariance $0.04\M I$. 
In our implementation we mapped the covariance to $\mathrm{SE}(2)$ and computed the moments of the noise via sampling.  For the measurement noise, 
we test both the binary and the trigonometric noise discussed above and set $s=1$. 
We choose $r = 2$ in \GMKF to utilize moments of the noise with orders up to four.
In \Cref{fig:mkf-image8}, we visualize the state estimates over time for each filter under both binary and trigonometric measurement noise. 
Due to non-Gaussian noise, all baselines fail to consistently follow the circular trajectory. 
Compared to the baselines, the trajectory estimated by GMKF are closest to the ground truth. Noticeably, the EKF baseline has the largest drift from the ground-truth trajectory.

We also report the average log-scale translation estimation error in meters in \Cref{fig:trans-error} and the average log-scale angular estimation error in radians in \Cref{fig:rot-error} across 20 trials for each filter under the trigonometric noise setup. Again, \GMKF has the lowest translation and rotation error across all filters and its performance remains relatively constant through the steps. While UKFM significantly outperforms the other three baselines, it deviates from the ground-truth trajectory in the latter half of the trajectory.
\begin{figure}[htp]

\centering
% First row
\begin{minipage}{0.24\textwidth}
    \centering
    \includegraphics[width=\linewidth, trim={1cm 5cm 1cm 6cm},clip]{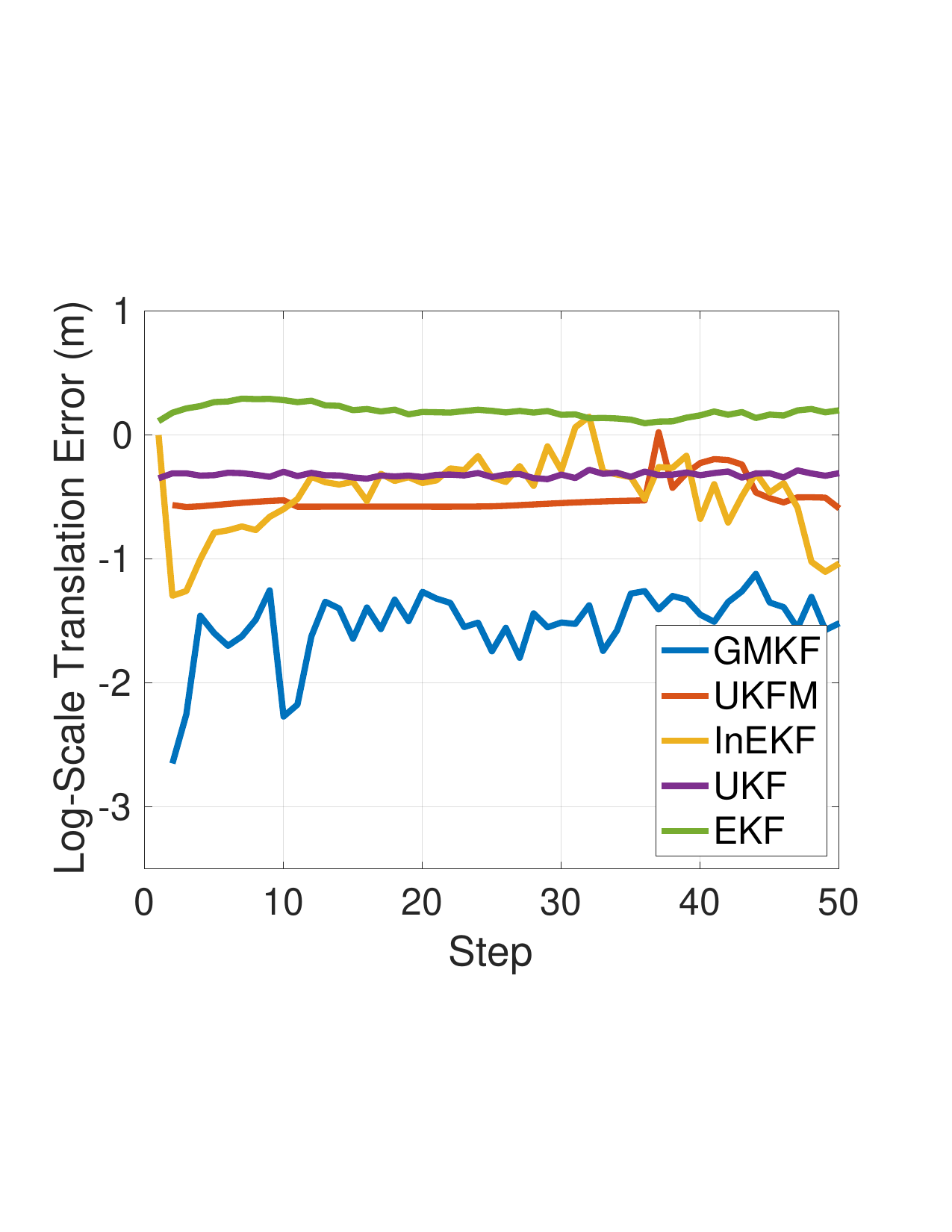} % Replace with your image
    \caption{Comparison of log-scale translation error in meters over time.}
    \label{fig:trans-error}
\end{minipage}
\hfill
\begin{minipage}{0.24\textwidth}
    \centering
    \includegraphics[width=\linewidth, trim={1cm 5cm 1cm 6cm},clip]{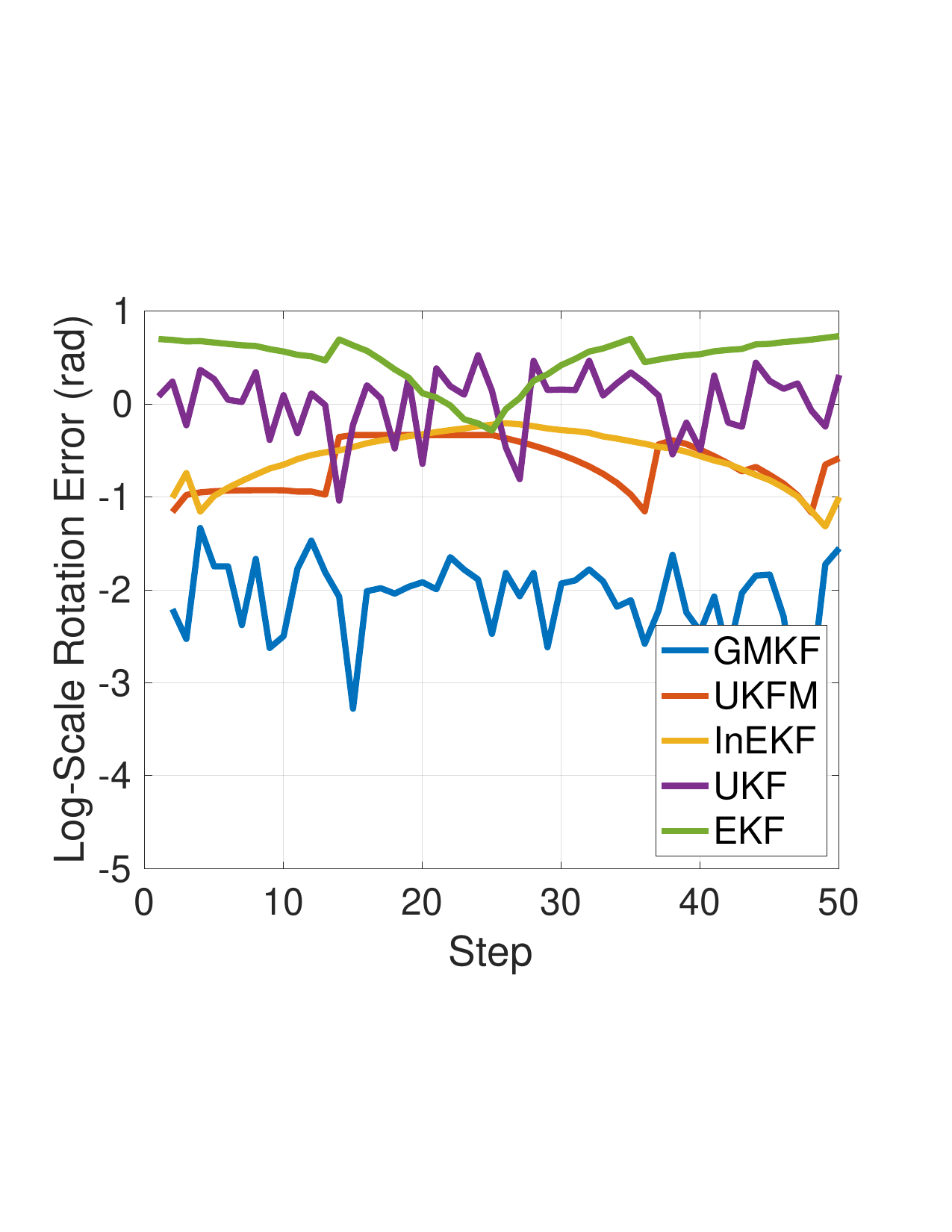} % Replace with your image
    \caption{Comparison of log-scale rotation error in radians over time.}
    \label{fig:rot-error}
\end{minipage}

\end{figure}

% \subsection{SE(2)}

% \clearpage
\section{Discussions and Limitations}
\label{sec:discussion}

% We have found that the proposed GMKF outperforms conventional KF in polynomial systems. However, this research is far from exhaustive, and more topics are worth investigating in future works. 

% \subsection{Observability in Semialgebraic Set}
In the experiments, we have commented on the fact that both in the batch and in the recursive setup, the moment relaxations
 we solve empirically compute rank-1 optimal solutions. While this is currently an empirical observation, 
 it would be interesting to derive theoretical conditions on the dynamical system under which the relaxations admit rank-1 solutions, and explore the connections between the rank of the solutions and the observability properties of the polynomial system.
%{\color{red} We have shown that the proposed algorithm is capable of providing rank-1 optimal solutions.} However, the necessary and sufficient condition for a rank-1 optimal solution remains unknown. One finding is that if the extended polynomial dynamical system is generated by a linear system, \eqref{eq:BPUE} is likely to provide rank-1 solutions. The intuitive explanation is that linear systems can generate larger ideals than higher-order polynomials. Thus, a future direction is to investigate the observability of semialgebraic sets, which will provide a theoretical guarantee for the filter design. 

% \subsection{Accurate Covariance Bound}
In Section~\ref{sec:theory}, we have shown that the matrix appearing in the SOS belief can be computed from the dual (SOS) relaxation and that such a matrix can be interpreted as an estimation covariance for the limiting case of noiseless measurements.
While this is an encouraging result, the noiseless case is only of theoretical interest and further research is needed to better connect the SOS belief (which we mostly derived using an optimization-theoretic lens) to a fully probabilistic interpretation.
We hope that such interpretation will also provide theoretical support for the ``marginalization'' step we perform after the prediction, which currently is the main obstacle towards claiming optimality of the \GMKF.
% We have provided the approximated covariance in \Cref{theorem:cov_mkf}. However, this approximation is based on fixed multipliers at the limiting operating point.
% % Furthermore, we keep the sub-principle matrix of the weighing matrix in the SOS belief that corresponds to the later time step, mimicking the marginalization in the linear Gaussian Case.
% Future research needs to propagate the uncertainty through the multipliers to compute the actual covariance of \eqref{eq:BPUE}. 
% Furthermore, the marginalization step imitates the linear Gaussian case by extracting the sub-principle matrix from the approximated covariance of the SOS belief. 
% An interesting future direction is to investigate the statistical interpretation of this SOS belief as a generalization of Gaussian density whose exponent is also a positive definite quadratic form. 
% This formal justification will also provide theoretical support for the marginalization steps. 

Finally, for linear Gaussian systems, the BLUE has been proven to be optimal among all estimators, and matches the performance of our BPUE when $r=1$ (see \Cref{thm:gen_KF}). In this case, increasing the $r$ in the BPUE will not improve the estimation. 
Thus, it is worth investigating how to properly choose the order $r$, also considering that large $r$ values lead to much more expensive optimization problems to solve during the prediction and update step.
%efficient choice of order when generating the extended noise.

% \subsection{Limitations}
% Another limitation of this research is the approximated covariance in the $\mathrm{SE}(2)$. 
% Though we can extract the state-dependent multiplicative noise and convert it to the state-independent case, the dynamics become quadratic and break observability. 
% Thus, we consider the noise in the Lie group dynamics and observation as the linear additive case.
% Another advantage of this approximation is the linear equation of motion saves the computational burden dramatically using the same order of noise information. 
\section{Related Work}
\label{sec:relatedWork}

{\bf Kalman Filtering.}
Filtering refers to estimating a system's current state using the set of all measurements up to the current time \cite{grewal2014kalman, smith2006approaches}. When the dynamics and measurement models are linear, and the noises are Gaussian, the Kalman Filter \cite{kalman1960new} becomes provably optimal. 
In particular, the KF computes the same state estimate as the Best Linear Unbiased Estimator (\BLUE) \cite{humpherys2012fresh} applied to all measurements and controls collected until the current time. 
% The BLUE is the solution of a Quadratic Program (QP). As solving the QP requires the Hessian and the gradients, the optimization-based KF is equivalent to computing a positive definite quadratic form. Under the Gaussian noise, formulating the quadratic form is equivalent to computing the likelihood as Gaussian distribution in Bayes inference \cite{box2011bayesian}. 
The \BLUE can also be seen as a point estimation problem using the Generalized Method of Moments (GMM) \cite{hansen1982large}. In GMM, the full distribution of the data may not be known, and therefore, maximum likelihood estimation is not applicable \cite{hansen1982large, hayashi2011econometrics}. Inspired by GMM, we only consider the moment of the noise in our state estimation problem to describe arbitrary noise distributions. 

% A Kalman Filter involves prediction and update steps. State values are represented using a Gaussian random vector parameterized by a mean and a covariance. In the prediction (or propagation) step, the filter propagates the previous states forward in time using the system dynamics. Once a measurement is obtained, in the update (or correction) the state and covariance estimates are corrected using the measurement model with the measurement noise covariance. 

% While Kalman filtering provides an optimal method for linear Gaussian systems, many systems encountered in robotics are nonlinear. 
To deal with nonlinear systems, the Extended Kalman Filter (EKF) linearizes the process and measurement models around the current state estimate \cite{thrun2002probabilistic} and propagates the covariance through the linearized system. However, EKF's performance deteriorates as the system's dynamics become more nonlinear and the noise increases, as the quality of the linearization point degrades \cite{huang2010observability, song1992extended}. The symmetry of matrix Lie groups has been leveraged in the Invariant EKF (InEKF) \cite{barrau2016invariant, barrau2018invariant} to design a linearization scheme that leads to more accurate state estimation. The InEKF has proven its success in visual-inertial navigation \cite{wu2017invariant, zhang2017convergence}, legged locomotion~\cite{hartley2020contact, teng2021legged}, and proprioceptive robot state estimation~\cite{lin2023proprioceptive, yu2023fully}. The symmetry in a more general homogeneous manifold has been utilized to design the Equivariant Filter \cite{van2022equivariant}, which subsumes the InEKF. All these methods adopted the Kalman gain and only utilized the mean and covariance of the noise distribution. 

% More recently, the theory of invariant observer design has been proposed based on the estimation error being invariant under the action of a matrix Lie group \cite{bonnabel2009non, aghannan2002invariant}, which has recently led to the development of the \emph{Invariant EKF (InEKF)} \cite{barrau2016invariant, barrau2018invariant, bonnabel2007left} with successful applications and promising results in simultaneous localization and mapping \cite{wu2017invariant}, aided inertial navigation systems \cite{zhang2017convergence}, and contact-aided robot locomotion \cite{hartley2020contact}. 
% In UKF, the state distribution is approximated by a Gaussian random variable but represented using a minimal set of carefully chosen sample points.
As an alternative to linearization, the \emph{Unscented Kalman Filter (UKF)} \cite{wan2000unscented} applies a deterministic sampling approach to handle nonlinearities. The sample points aim to capture the true mean and covariance, and ---when propagated through the nonlinear system--- they allow reconstructing the posterior mean and covariance accurately to the 3rd order for any nonlinearity. However, for non-Gaussian noise, UKF requires substantial parameter tuning and lacks a systematic way to handle arbitrary noise. Recent advancements also consider the presence of non-Gaussian noise in dynamical systems. Specifically, \citet{izanloo2016kalman} use an information-theoretic approach for state estimation under non-Gaussian noise, but the stability of the result hinges on the kernel size hyperparameter. Other KF variants \cite{stojanovic2016robust} assume non-Gaussian measurement noise but Gaussian process noise \cite{stojanovic2016robust} or linear systems with specific non-Gaussian noise distributions \cite{sun2013state}. None of these works claim the optimality of the resulting estimator.
% In our framework, we do not assume the linearity of the system nor the normality of the noise distribution. 
More closely related to our work is that of \citet{shimizu2023moment} and \citet{jasour2021moment}, where moment propagation is derived for filtering problems by assuming the knowledge of the characteristic function of the Gaussian noise. In our methods, we only utilize moment information instead of the entire distribution, which is inaccessible in typical applications.
%when only sampling is available. 

{\bf Semidefinite Relaxation}
Semidefinite relaxations have been critical to designing certifiable algorithms for estimation in robotics. The first use of moment relaxations in computer vision goes back to \cite{kahl2007globally}. 
These relaxations transform polynomial optimization problems into semidefinite programs (SDPs) that are convex, using Lasserre's hierarchy of moment relaxations \cite{lasserre2001global}. 
% pioneering application of this hierarchy in resolving perception conundrums such as camera resectioning, tomography estimation, and fundamental matrix estimation was undertaken by 
Subsequently, SDP relaxation-based certifiable algorithms have been extended to various geometric perception problems, including pose graph optimization \cite{carlone2016planar, rosen2019se}, rotation averaging \cite{fredriksson2012simultaneous, eriksson2019rotation}, triangulation \cite{aholt2012qcqp, cifuentes2021convex}, 3D registration \cite{briales2017convex, chaudhury2015global, maron2016point, iglesias2020global}, absolute pose estimation \cite{agostinho2023cvxpnpl}, relative pose estimation \cite{garcia2021certifiable, briales2018certifiably, zhao2020efficient}, hand-eye calibration \cite{heller2014hand, giamou2019certifiably}, and category-level object perception \cite{shi2021optimal, yang2020perfect}. 
Moment relaxations have also been proven effective in dealing with heavy-tailed noise and outliers in a variety of 3D vision problems~\cite{yang2022certifiably, yang2020teaser, yang2019quaternion}.
%\citet{yang2022certifiably} introduced a groundbreaking outlier-robust perception framework employing SDP relaxation in point cloud registration challenges, effectively managing substantial outlier proportions.
Despite the inherently nonconvex nature of these problems, semidefinite relaxations at the minimal order have been empirically shown to be exact in many practical problems. \citet{cifuentes2020local, cifuentes2020geometry} study the tightness of the Shor relaxation and provide an approximation of the tightness region. \citet{cosse2021stable} prove the tightness of the second-order moment relaxation to a matrix completion problem by an analytical construction of the dual SOS polynomials, whose proof inspired 
some of the results in the current paper. 

% \subsection{Generalized Method of Moments}
% The generalized method of moments (GMM) is a general method for estimating parameters in statistical models that is used in the context of semiparametric models, where the full distribution of the data may not be known, and therefore maximum likelihood estimation is not applicable \cite{hansen1982large, hayashi2011econometrics}. When the moment conditions hold, GMM has been proven to be consistent and asymptotically normal~\cite{huber1967behavior, hansen1982large, newey1994large}. 
% \LC{any GMM applications in robotics?}
% We use the GMM framework by expressing the moment conditions as arbitrary polynomials 
% %that evaluate to 0 in order to further 
% and analyzing the asymptotic behaviors of our framework via the Cramer-Rao Bound. The analysis is extended from the Best Linear Unbiased Estimator \cite{johnson2002applied, henderson1975best} to polynomial systems.
\section{Conclusions}
\label{sec:conclusion}

We study the challenging problem of state estimation in polynomial systems corrupted by arbitrary process and measurement noise, which arises in several robotics applications.
Our first contribution is to consider a batch setup where the state is estimated from the set of measurements, controls, and priors available from time 0 until the current time. In this setup, we show that optimal state estimation can be formulated as a Polynomial Optimization Problem (POP) and solved via moment relaxations. We also show that the dual (SOS) relaxation allows forming an SOS belief that can be approximately seen as a Gaussian in the moment space. 
Our second contribution is to consider a filtering setup where one has to recursively estimate the current state given a belief and the most recent measurements. In this setup, we introduce the \emph{\GMKFlong} (\GMKF). The \GMKF formulates prediction and update steps as POPs and solves them using moment relaxations, carrying over a possibly non-Gaussian belief. In the linear-Gaussian case, GMKF reduces to the standard Kalman Filter. 
We evaluate the proposed \GMKF on a robot localization problem and show it performs well under highly non-Gaussian noise and outperforms common alternatives, including the Extended and Unscented Kalman Filter and their variants on matrix Lie group.

{
% \small 
% \balance
\bibliographystyle{unsrtnat}
\bibliography{bib/strings-full,bib/ieee-full,bib/references}
\clearpage
% \clearpage

\begin{appendices}

\section{Auxiliary Matrices }
\label{apx:aux_mat}
This appendix formally defines the symmetric matrices $\M{B}_{\gamma}$ and $\M{B}_{\gamma, \alpha}^{\perp}$.
These matrices are used in the definition of the moment constraints in~Section~\ref{sec:preliminaries} and in some of the proofs of our technical results provided below.
% to assit our proof in the following sections. 
% We use $\left\{\M{B}\right\}$ to construct moment matrix given the monomial $\M{x}^{\gamma}$. 
% Given a positive symmetric matrix $\M{X}\succeq 0$, we apply $\left\{\M{B}^{\perp}\right\}$ to encode the moment matrix constraints. 

We first introduce the matrix $\M B_{\gamma}$ which retrieves the monomial $\M{x}^{\gamma}$ from the moment matrix $\M{M}_r(\M x)$, i.e., $\langle \M B_{\gamma}, \M{M}_r(\M x) \rangle = \M{x}^{\gamma}$:
\begin{equation}
    \M{B}_{\gamma} := \frac{1}{C_{\gamma}} \sum_{\gamma = \alpha + \beta} \ve_{\alpha} \ve^{\transpose}_{\beta},\ \  C_{\gamma}:= { \left| \left\{ (\alpha, \beta) | \gamma = \alpha + \beta \right\}  \right|  },
\end{equation}
where $\ve_\alpha \in \mathbb{R}^{s(r, n) \times 1}$ is the canonical basis corresponding to the $n$-tuples {$\alpha \in \mathbb{N}_r^n$} and $C_{\gamma}$ is a normalizing factor. 
We note that the moment matrix can be written as:
\begin{equation}
    \M{M}_r(\M x) = \sum_{\gamma} C_\gamma \M{B}_{\gamma} \M x^{\gamma}, \quad \gamma \in \mathbb{N}^n_{2r}.
\end{equation}
% {\color{red} We define the matrix $\M{B} =: \sum_{\gamma} C_\gamma \M{B}_{\gamma} \M$.}
% Conversely, we can extract the term of $\M{M}_r(y)$ via
% \begin{equation}
% x^{\gamma} = \langle B_\gamma, \M{M}_r(\M x) \rangle.
% \end{equation}
% We then define the orthogonal complement of $\left\{B\right\}$ considering the inner product as the trace function:
% To construct $\{\M{B}^{\perp}\}$,
Then we introduce $\M{B}^{\perp}_{\gamma, \alpha}$, which is used to enforce that repeated entries {(corresponding to the same monomial $\M{x}^{\gamma}$)} in the moment matrix are identical. We first define the symmetric matrix, for any given $\alpha, \beta \in \mathbb{N}_r^n$:
\begin{equation}
    \M{E}_{\alpha, \beta} = \ve_{\alpha}\ve_{\beta}^{\transpose} + \ve_{\beta}\ve_{\alpha}^{\transpose},
\end{equation}
Then we construct  $\M{B}^{\perp}_{\gamma, \alpha}$, for some given $\gamma\in \mathbb{N}_{2r}^{n}$ and for $\alpha \in \mathbb{N}_r^n$ (with $\alpha < \gamma$) as:
\begin{equation}
   % \left\{ B^{\perp}_{\alpha, \beta} | \langle B^{\perp}_{\alpha, \beta}, B_{\alpha} \rangle, \forall \gamma \in \mathbb{N}^n_{2r} , \alpha + \beta = \gamma \right\}.
   \begin{aligned}
       \M{B}^{\perp}_{\gamma, \alpha} = & { \M{E}_{\bar{\alpha}, \bar{\beta}} } - \M{E}_{{\alpha}, {\beta}}, \\ 
       % &B^{\perp}_{\gamma, \alpha} = e_{\bar{\alpha}}e_{\bar{\beta}}^{\transpose} + e_{\bar{\beta}}e_{\bar{\alpha}}^{\transpose}  - e_{\alpha}e_{\beta}^{\transpose} - e_{\beta}e_{\alpha}^{\transpose}, \\ 
       % &\alpha + \beta = \gamma,{\alpha \ge \beta,} % \bar{\alpha} + \bar{\beta} = \gamma, 
       % \\ 
       % & \bar{\alpha} < \alpha, \bar{\alpha} + \bar{\beta} = \gamma 
       \end{aligned}
\end{equation}
where $\beta$ is such that $\alpha + \beta = \gamma$ and $\beta \leq \alpha$, 
and where $\bar{\alpha}, \bar{\beta} \in \mathbb{N}_r^n$ are chosen such that $\bar{\alpha}+\bar{\beta}= \gamma$ 
and $\bar{\alpha}$ is the smallest vector (in the lexicographic sense) that satisfies $\bar{\alpha}+\bar{\beta}= \gamma$.
% {\color{red} where we order the $n$-tuples $\alpha \in \mathbb{N}_r^n$ in lexicographic order and choose $\bar{\alpha}$ as the smallest one.}
% We provide a few examples of these matrices in \Cref{apx:B_mat_example}. 
% {\color{red} add a small example to show the structure of B and B perp on a simple problem}
% To assist the proof of our main theory, we introduce the following lemmas:
An example of matrices $\M B_{\gamma}$ and $\M{B}_{\gamma, \alpha}^{\perp}$ for a simple problem is given in Appendix~\ref{apx:B_mat_example}. 

\section{Examples of moment matrix}
\label{apx:B_mat_example}
For $r = 2, n = 2$, we have the moment matrix:
\begin{equation*}
    {\M M}_{2}(\M x) =\left[\begin{array}{cccccc}
1 & x_1 & x_2 & x_1^2 & x_1x_2 & x_2^2 \\
x_1 & x_1^2 & x_1x_2 & x_1^3 & x^2_1x_2 & x_1x_2^2 \\
x_2 & x_1x_2 & x^2_2 & x_1^2x_2 & x_1x^2_2 & x_2^3 \\
x_1^2 & x^3_1 & x_1^2x_2 & x_1^4 & x^3_1x_2 & x_1^2x_2^2 \\
x_1x_2 & x_1^2x_2 & x_1x^2_2 & x_1^3x_2 & x_1^2x_2^2 & x_1x_2^3 \\
x_2^2 & x_1x_2^2 & x^3_2 & x_1^2x_2^2 & x_1x^3_2 & x_2^4 \\
\end{array}\right] .
\end{equation*} 
For $\gamma = [2, 0]$, which corresponds to $x_1^2$, we have:
\begin{equation}
    \M B_{\gamma} = \frac{1}{C_{\gamma}} \begin{bmatrix}
        0 & 0 & 0 & 1 & 0 & 0 \\
        0 & 1 & 0 & 0 & 0 & 0 \\
        0 & 0 & 0 & 0 & 0 & 0 \\
        1 & 0 & 0 & 0 & 0 & 0 \\
        0 & 0 & 0 & 0 & 0 & 0 \\
        0 & 0 & 0 & 0 & 0 & 0 \\
    \end{bmatrix}, C_{\gamma} = 3.
\end{equation}
% \begin{equation}
%     B_{\gamma} = \frac{1}{C_{\gamma}} \begin{bmatrix}
%         0 & 0 & 0 & 0 & 1 & 0 \\
%         0 & 0 & 1 & 0 & 0 & 0 \\
%         0 & 1 & 0 & 0 & 0 & 0 \\
%         0 & 0 & 0 & 0 & 0 & 0 \\
%         1 & 0 & 0 & 0 & 0 & 0 \\
%         0 & 0 & 0 & 0 & 0 & 0 \\
%     \end{bmatrix}
% \end{equation}
Now let us compute $\M B^{\perp}_{\gamma, \alpha}$ for $\alpha = [1,0] \leq \gamma$.
We note that $\beta = \gamma - \alpha = [1,0]$ and we can choose $\bar{\alpha} = [0,0]$ and $\bar{\beta} = [2,0]$.
The corresponding vectors in the canonical basis are:
\begin{equation}
\begin{aligned}
    \M e_{{\alpha}} &= \begin{bmatrix}
        0 & 1 & 0 & 0 & 0 & 0
    \end{bmatrix}, \M e_{{\beta}} &= \begin{bmatrix}
        0 & 1 & 0 & 0 & 0 & 0
    \end{bmatrix}, \\
     \M e_{\bar{\alpha}} &= \begin{bmatrix}
        1 & 0 & 0 & 0 & 0 & 0
    \end{bmatrix}, {\M e_{\bar{\beta}} } &= \begin{bmatrix}
        0 & 0 & 0 & 1 & 0 & 0
    \end{bmatrix},
\end{aligned}
\end{equation}
from which we obtain:
\begin{equation}
\begin{aligned}
    \M B^{\perp}_{\gamma, \alpha} &=  {\M E_{\bar{\alpha}, \bar{\beta}} - \M E_{{\alpha}, {\beta}} }  =\M e_{\bar{\alpha}}\M e_{\bar{\beta}}^{\transpose} + \M e_{\bar{\beta}}\M e_{\bar{\alpha}}^{\transpose}  - (\M e_{\alpha}\M e_{\beta}^{\transpose} + \M e_{\beta}\M e_{\alpha}^{\transpose})  \\
    &= \begin{bmatrix}
        0 & 0 & 0 & +1 & 0 & 0 \\
        0 & -2 & 0 & 0 & 0 & 0 \\
        0 & 0 & 0 & 0 & 0 & 0 \\
        +1 & 0 & 0 & 0 & 0 & 0 \\
        0 & 0 & 0 & 0 & 0 & 0 \\
        0 & 0 & 0 & 0 & 0 & 0 \\
    \end{bmatrix},
\end{aligned}
\end{equation}
By adding the constraint:
\begin{equation}
    \langle \M B^{\perp}_{\gamma, \alpha}, \M X \rangle = 0, \quad \M X \succeq 0,
\end{equation}
we enforce that the entries corresponding to $x_1^2$ are identical. 
% \begin{equation}
% \begin{aligned}
%     B_{\gamma, \alpha_1} &= \begin{bmatrix}
%         0 & 0 & 0 & -1 & 0 & 0 \\
%         0 & 1 & 0 & 0 & 0 & 0 \\
%         0 & 0 & 0 & 0 & 0 & 0 \\
%         0 & 0 & 0 & 0 & 0 & 0 \\
%         0 & 0 & 0 & 0 & 0 & 0 \\
%         0 & 0 & 0 & 0 & 0 & 0 \\
%     \end{bmatrix} \\ 
%     B_{\gamma, \alpha_2} &= \begin{bmatrix}
%         0 & 0 & 0 & 0 & 0 & 0 \\
%         0 & 1 & 0 & 0 & 0 & 0 \\
%         0 & 0 & 0 & 0 & 0 & 0 \\
%         -1 & 0 & 0 & 0 & 0 & 0 \\
%         0 & 0 & 0 & 0 & 0 & 0 \\
%         0 & 0 & 0 & 0 & 0 & 0 \\
%     \end{bmatrix}
% \end{aligned}
% \end{equation}

% {\color{red}
\section{\Cref{lemma:B_orthogonal}}
\label{apx:proof:B_orthogonal}
\begin{restatable}{lemmma}{lemmaBorthogonal}
\label{lemma:B_orthogonal}
 Let us define the set of matrices $\{\M B\} := \{\M B_\gamma\}_{\gamma \in \mathbb{N}_{2r}^n}$ 
 and $\{\M B_{\gamma,\alpha}^{\perp}\} := \{\M B^{\perp}\}_{\gamma \in \mathbb{N}_{2r}^n, \alpha \in \mathbb{N}_{r}^n, \alpha\leq \gamma}$.
 Then $\{\M B\}$ and $\{\M B_{\gamma,\alpha}^{\perp}\}$ are orthogonal and form a basis for the set of symmetric matrices in $\mathbb{R}^{s(r, n) \times s(r, n)}$.
    % All $\{\M B^{\perp}\}$ and $\{\M B\}$ form a basis for {\color{blue}symmetric matrix} $\mathbb{R}^{s(r, n) \times s(r, n)}$ and {\color{red} they are orthogonal complement.}
\end{restatable}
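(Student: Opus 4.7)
The plan is to exploit the fact that the entries of any symmetric $s(r,n) \times s(r,n)$ matrix are partitioned by the total degree $\gamma$ of the monomial they encode. For each $\gamma \in \mathbb{N}^n_{2r}$, let $S_\gamma := \{(\alpha,\beta) \in \mathbb{N}^n_r \times \mathbb{N}^n_r : \alpha+\beta=\gamma\}$ and let $T_\gamma$ denote the corresponding set of unordered pairs. By construction, both $\M B_\gamma$ and every $\M B^\perp_{\gamma,\alpha}$ are supported on $S_\gamma$, and the sets $\{S_\gamma\}$ partition the index set of the matrix. Cross-$\gamma$ orthogonality therefore comes for free: for $\gamma \ne \gamma'$, any matrix indexed by $\gamma$ and any matrix indexed by $\gamma'$ have disjoint support, so their Frobenius inner product vanishes. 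This reduces the orthogonality claim to a fixed $\gamma$.

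For fixed $\gamma$, I would prove $\M B_\gamma \perp \M B^\perp_{\gamma,\alpha}$ by the direct computation $\langle \M B_\gamma, \M E_{\alpha,\beta} \rangle = 2/C_\gamma$, which I would show holds uniformly in $(\alpha,\beta)$ with $\alpha+\beta=\gamma$. The delicate point is the case $\alpha=\beta$: the extra factor of $2$ in $\M E_{\alpha,\alpha} = 2\ve_\alpha \ve_\alpha^{\transpose}$ is exactly compensated by the fact that $\M B_\gamma$ receives only a single diagonal contribution $\ve_\alpha \ve_\alpha^{\transpose}/C_\gamma$ rather than the two symmetric off-diagonal contributions obtained when $\alpha \ne \beta$. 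Since $\M B^\perp_{\gamma,\alpha} = \M E_{\bar\alpha,\bar\beta} - \M E_{\alpha,\beta}$ is a difference of such $\M E$'s, the inner product telescopes to zero.

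Finally, I would establish the basis property by a dimension count. The symmetric matrices supported on $S_\gamma$ form a subspace of dimension $|T_\gamma|$ with basis $\{\M E_{\alpha,\beta}\}_{\{\alpha,\beta\}\in T_\gamma}$. The set $\{\M B_\gamma\} \cup \{\M B^\perp_{\gamma,\alpha}\}_\alpha$ has exactly $|T_\gamma|$ elements (one $\M B_\gamma$, plus one $\M B^\perp$ per unordered pair other than the canonical $\{\bar\alpha,\bar\beta\}$). Expressing each in the $\M E$-basis, each $\M B^\perp_{\gamma,\alpha}$ is uniquely identified by its $-\M E_{\alpha,\beta}$ coefficient, so the $\M B^\perp$'s are independent among themselves, while $\M B_\gamma$ lies outside their span by the orthogonality just established. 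Hence these $|T_\gamma|$ matrices form a basis of the $|T_\gamma|$-dimensional subspace. Summing over $\gamma$ gives $\sum_\gamma |T_\gamma| = \binom{s(r,n)+1}{2}$, the dimension of all symmetric $s(r,n)\times s(r,n)$ matrices, since each unordered index pair in $\mathbb{N}^n_r$ belongs to a unique $T_\gamma$ (with $\gamma = \alpha+\beta$).

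The only real subtlety I anticipate is the diagonal/off-diagonal bookkeeping induced by the factor of $2$ in $\M E_{\alpha,\alpha}$; once that is treated carefully so that the identity $\langle \M B_\gamma, \M E_{\alpha,\beta}\rangle = 2/C_\gamma$ is proven in both cases, the remaining argument is essentially a mechanical dimension count exploiting the block structure indexed by $\gamma$.
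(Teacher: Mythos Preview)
Your proposal is correct. The orthogonality half is essentially identical to the paper's: both observe disjoint support for $\gamma \ne \gamma'$ and then compute $\langle \M B_\gamma, \M E_{\alpha,\beta}\rangle = 2/C_\gamma$ uniformly so that the difference $\M E_{\bar\alpha,\bar\beta}-\M E_{\alpha,\beta}$ pairs to zero. Your explicit discussion of the diagonal case $\alpha=\beta$ is in fact more careful than the paper, which glosses over that bookkeeping.

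For the basis claim the two arguments are dual. The paper proves \emph{spanning}: it writes $\M E_{\bar\alpha,\bar\beta}$ explicitly as a linear combination of $\M B_\gamma$ and the $\M B^\perp_{\gamma,\alpha}$'s (via the identity $C_\gamma \M B_\gamma = \sum_{\alpha+\beta=\gamma}\ve_\alpha\ve_\beta^\transpose$ symmetrized and rearranged), and then recovers every other $\M E_{\alpha,\beta}$ from $\M E_{\bar\alpha,\bar\beta} - \M B^\perp_{\gamma,\alpha}$. You instead prove \emph{linear independence} (each nonzero $\M B^\perp_{\gamma,\alpha}$ is the unique member with a $-\M E_{\alpha,\beta}$ component; $\M B_\gamma$ is orthogonal to their span) and match cardinalities via $|T_\gamma|$ and $\sum_\gamma |T_\gamma| = \binom{s(r,n)+1}{2}$. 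Both are valid routes to ``basis''; your dimension count is arguably cleaner and has the advantage that it simultaneously certifies the set is not redundant, something the paper's spanning argument leaves implicit.
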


\begin{proof}
To show orthogonality, we observe that for $\gamma \neq \gamma'$
$$\left\langle \M B_{\gamma, \alpha}^{\perp}, \M B_{\gamma'} \right\rangle = 0, \forall \alpha,$$ 
which follows from the fact that non-zero entries in the two matrices do not appear at the same locations. % will not appear in the identical locations. 
For $\gamma = \gamma'$:
\begin{equation}
\label{eq:B_orthogonal}
\begin{aligned}
    \left\langle \M B_{\gamma, \alpha}^{\perp}, \M B_{\gamma} \right\rangle &= \langle \M E_{\bar{\alpha}, \bar{\beta}} - \M E_{{\alpha}, {\beta}},  \frac{1}{C_{\gamma}} \sum_{\gamma = \alpha' + \beta'} \M e_{\alpha'} \M e^{\transpose}_{\beta'} \rangle \\
    &= \langle \M e_{\bar{\alpha}}\M e_{\bar{\beta}}^{\transpose} + \M e_{\bar{\beta}}\M e_{\bar{\alpha}}^{\transpose},  \frac{1}{C_{\gamma}} \sum_{\gamma = \alpha' + \beta'} \M e_{\alpha'} \M e^{\transpose}_{\beta'}\rangle \\ & \quad -  \langle \M e_{{\alpha}}\M e_{{\beta}}^{\transpose} + \M e_{{\beta}}\M e_{{\alpha}}^{\transpose},  \frac{1}{C_{\gamma}} \sum_{\gamma = \alpha' + \beta'} \M e_{\alpha'} \M e^{\transpose}_{\beta'} \rangle \\
    & = \frac{2}{C_{\gamma}} - \frac{2}{C_{\gamma}} = 0.
\end{aligned}.
% \begin{aligned}
%     \left\langle B^{\perp}_{\gamma, \alpha}, M_r(x) \right\rangle &= \langle B^{\perp}_{\gamma, \alpha}, \sum_{\gamma} C_\gamma B_{\gamma} x^{\gamma}\rangle = 0  \\
%       & \Rightarrow \left\langle B^{\perp}_{\gamma, \alpha}, B_{\gamma} \right\rangle = 0 \\
% \end{aligned}
\end{equation} % , \gamma
% \tsl{Add a remark that $B$ and $B^{\perp}$ are sufficient to generate any matrix.} \tsl{$B$ and $B^{\perp}$ are orthogonal to each other and can generate any matrix, thus is an element-wise basis.}
% {\color{red} explain where the simplification in the last line comes from.}
where we observed that $\langle \M e_{{\alpha}}\M e_{{\beta}}^{\transpose},  \frac{1}{C_{\gamma}} \sum_{\gamma = \alpha' + \beta'} \M e_{\alpha'} \M e^{\transpose}_{\beta'} \rangle = 
\langle \M e_{{\alpha}}\M e_{{\beta}}^{\transpose},  \frac{1}{C_{\gamma}} \M e_{{\alpha}}\M e_{{\beta}}^{\transpose} \rangle = \frac{1}{C_\gamma}$ (since any other choice of $\alpha',\beta'$ will produce a matrix $\M e_{{\alpha'}}\M e_{{\beta'}}^{\transpose}$ orthogonal to $\M e_{{\alpha}}\M e_{{\beta}}^{\transpose}$).

To show $\left\{\M B\right\} \cup \left\{\M B^{\perp}\right\}$ form a basis for the set of symmetric matrices in $\mathbb{R}^{s(r, n) \times s(r, n)}$, 
{by the structure of $\{\M{B}\}$ we have:}
\begin{equation}
\label{eq:B_basis}
\begin{aligned}
C_{\gamma}\M B_{\gamma}  &= \sum_{\gamma = \alpha + \beta}\M e_{\alpha}\M e_{\beta}^{\transpose} = 
  \M e_{\bar{\alpha}}\M e^{\transpose}_{\bar{\beta}} + \sum_{\gamma = \alpha + \beta, \alpha \neq \bar{\alpha}}\M e_{\alpha}\M e_{\beta}^{\transpose}.
\end{aligned}
\end{equation}

Adding its transpose to each side of~\eqref{eq:B_basis}, and rearranging:  
\begin{equation}
\begin{aligned}
    \M E_{\bar{\alpha}, \bar{\beta}} &= 2C_{\gamma}\M B_{\gamma} - \sum_{\gamma = \alpha + \beta, \alpha \neq \bar{\alpha}}\M E_{\alpha, \beta} \\
    &= 2C_{\gamma}\M B_{\gamma} - \sum_{\gamma = \alpha + \beta, \alpha \neq \bar{\alpha}}(\M B^{\perp}_{\gamma, \min\{\alpha, \beta\} } - \M E_{\bar{\alpha}, \bar{\beta}}).
\end{aligned}
\end{equation}
 Finally, we show that 
\begin{equation}
\label{eq:E_alpha_beta}
    \M E_{\bar{\alpha}, \bar{\beta}} = -\frac{2C_{\gamma}}{C_{\gamma}^{'} - 1}\M{B}_{\gamma} + \frac{1}{C_{\gamma}^{'} - 1}\sum_{\gamma = \alpha + \beta, \alpha \neq \bar{\alpha}}\M{B}_{\gamma, \min\{\alpha, \beta\}}^{\perp}
\end{equation}
with $C_{\gamma}^{'} = |\{ (\alpha, \beta) | \alpha + \beta = \gamma, \alpha \neq \bar{\alpha} \} |$.

By the definition of $\{\M{B}^{\perp}\}$, $\forall \alpha, \beta$ and $\alpha + \beta = \gamma$, we have:
\begin{equation}
\begin{aligned}\label{eq:appC1}
    \M E_{\alpha, \beta} &= \M E_{\bar{\alpha}, \bar{\beta}} - \M B^{\perp}_{\gamma, \alpha}
    % &=\frac{C_{\gamma}}{(C_{\gamma} + 1)} B_{\gamma} + \frac{1}{(C_{\gamma} + 1)}\sum_{\gamma = \alpha' + \beta'}B^{\perp}_{\gamma, \alpha'} - B_{\gamma, \alpha}^{\perp}
\end{aligned}.
\end{equation}
Substituting \eqref{eq:E_alpha_beta} into \eqref{eq:appC1}, we show that any  $\M{E}_{\alpha, \beta}$, can be represented by as a linear combination of elements in $\left\{\M B\right\} \cup \left\{\M B^{\perp}\right\}$. % \tsl{Need to check the constant.}
Since the matrices $\M{E}_{\alpha, \beta}$ (for any $\alpha, \beta \in \mathbb{N}_{r}^n$) form a basis for the set of symmetric matrices in $\mathbb{R}^{s(r, n) \times s(r, n)}$, this proves our claim.
\end{proof}
% }
% =============================================================

\section{Proof of \Cref{lemma:exist_sos_xi}}
\label{apx:proof:exist_sos_xi}
% \lemmaexistsosxi*
{
\begin{restatable}[Equivalent SOS Polynomial Descriptions]{lemmma}{lemmaexistsosxi}
% \begin{lemma}[Equivalent polynomial]
    \label{lemma:exist_sos_xi}
    Let $\M{Y}_1, \M{Y}_2$ be two symmetric matrices such that $\vv^{\transpose}_r(\M x) \M{Y}_1 \vv^{\transpose}_r(\M x)=\vv^{\transpose}_r(\M x) \M{Y}_2 \vv^{\transpose}_r(\M x), \forall \M x$, i.e., the polynomials corresponding to $\M{Y}_1$ and $\M{Y}_2$ are identical. Then, there exists a matrix $\M{\xi}$ such that $\M{Y}_1=\M{Y}_2+{\M{\xi}}$,
    where $\M{\xi}=\sum_{i} \mu_{i} \M{B}^{\perp}_{i}, \mu_i \in \mathbb{R}$. %\tsl{Does need to be PSD.} % \tsl{This lemma does not require $\M Y\ge 0$.}
    % and $\xi = \sum_{(\gamma, \alpha)}\mu_{\gamma, \alpha}B_{\gamma, \alpha}^{\perp}$. % with $\left\langle{\M{\xi}}, {\M{B}}_{{\gamma}}\right\rangle=0$ for all ${\gamma}$ such
% \end{lemma}
\end{restatable}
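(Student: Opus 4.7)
The plan is to show that the difference $\M{\Delta} := \M{Y}_1 - \M{Y}_2$ lies in the subspace spanned by the matrices $\{\M{B}^{\perp}_i\}$. Since $\M{Y}_1$ and $\M{Y}_2$ are symmetric and define the same polynomial, $\M{\Delta}$ is symmetric and satisfies $\vv^{\transpose}_r(\M{x}) \M{\Delta} \vv_r(\M{x}) = 0$ for all $\M{x} \in \mathbb{R}^n$. Using the trace identity, this is equivalent to $\langle \M{\Delta}, \M{M}_r(\M{x}) \rangle = 0$ for every $\M{x}$.

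Next, I would expand the moment matrix using the identity $\M{M}_r(\M{x}) = \sum_{\gamma \in \mathbb{N}^n_{2r}} C_\gamma \M{B}_\gamma \M{x}^\gamma$ recalled in Appendix~\ref{apx:aux_mat}. Substituting this into the inner product gives
\begin{equation*}
    \sum_{\gamma \in \mathbb{N}^n_{2r}} C_\gamma \langle \M{\Delta}, \M{B}_\gamma \rangle \, \M{x}^\gamma = 0 \quad \text{for all } \M{x}.
\end{equation*}
Since the monomials $\{\M{x}^\gamma\}_{\gamma \in \mathbb{N}^n_{2r}}$ are linearly independent as functions of $\M{x}$, and each $C_\gamma > 0$, this forces $\langle \M{\Delta}, \M{B}_\gamma \rangle = 0$ for every $\gamma \in \mathbb{N}^n_{2r}$.

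To conclude, I would invoke \Cref{lemma:B_orthogonal}, which states that $\{\M{B}\} \cup \{\M{B}^{\perp}\}$ forms an orthogonal basis for the space of symmetric matrices in $\mathbb{R}^{s(r,n) \times s(r,n)}$. Thus the symmetric matrix $\M{\Delta}$ admits a unique expansion $\M{\Delta} = \sum_\gamma c_\gamma \M{B}_\gamma + \sum_i \mu_i \M{B}^{\perp}_i$. Taking the inner product with any $\M{B}_{\gamma'}$ and using the orthogonality of the two families yields $c_{\gamma'} \|\M{B}_{\gamma'}\|_F^2 = \langle \M{\Delta}, \M{B}_{\gamma'} \rangle = 0$, so all coefficients $c_{\gamma'}$ vanish. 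Hence $\M{\Delta} = \sum_i \mu_i \M{B}^{\perp}_i =: \M{\xi}$, giving $\M{Y}_1 = \M{Y}_2 + \M{\xi}$ as claimed.

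I do not anticipate a substantive obstacle: the argument is essentially a change of basis combined with linear independence of monomials, and all the heavy lifting (orthogonality and spanning) is already packaged in \Cref{lemma:B_orthogonal}. The only care needed is to ensure the expansion of $\M{M}_r(\M{x})$ in the $\M{B}_\gamma$ basis is correctly indexed over $\mathbb{N}^n_{2r}$ and to justify linear independence of monomials in $n$ variables of degree up to $2r$, which is standard.
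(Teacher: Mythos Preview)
Your proposal is correct and follows essentially the same route as the paper's proof: rewrite the vanishing quadratic form as $\langle \M{\Delta}, \M{M}_r(\M{x})\rangle = 0$, expand $\M{M}_r(\M{x})$ in the $\M{B}_\gamma$ basis, use linear independence of monomials to obtain $\langle \M{\Delta}, \M{B}_\gamma\rangle = 0$ for all $\gamma$, and then invoke \Cref{lemma:B_orthogonal} to conclude $\M{\Delta}\in\operatorname{span}\{\M{B}^{\perp}_i\}$. Your write-up is slightly more explicit than the paper's in justifying the last step via the inner-product computation, but the argument is the same.
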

}
\begin{proof}
We prove \Cref{lemma:exist_sos_xi} using the result of \Cref{lemma:B_orthogonal}:
        \begin{align}
        &\M v^{\transpose}_r(\M x) (\M Y_1-\M Y_2) \M v_r( \M x)=0, \quad \forall \M x \Leftrightarrow  \\
        &\left\langle \M Y_1-\M Y_2, \M v_r( \M x)\M v^{\transpose}_r(\M x)\right\rangle=0, \quad \forall \M x \Leftrightarrow \\
        &\langle \M Y_1-\M Y_2, \sum \M x^{\gamma}\M B_{\gamma}\rangle=0, \quad \forall \M x \Rightarrow \\
        &\left\langle \M Y_1-\M Y_2, {\M B}_{{\gamma}}\right\rangle= 0, \quad \forall \gamma
        \end{align}
% This last implication holds in the reverse direction: if a polynomial $p(\mathbf{z})$ has all zero coefficients, then it must be the zero polynomial.
As $\left\{\M B^{\perp}\right\}$ and $\left\{\M B\right\}$  are orthogonal and form a basis for symmetric matrices with compatible size as $\M{Y}_1, \M{Y}_2$, there exists an $\M\xi$ which can be constructed from $\{\M B^{\perp}\}$ such that
    \begin{equation}
    \label{eq:exist_of_xi}
        \M \xi := \M Y_1 - \M Y_2 = \sum_{i} \mu_{i} \M B^{\perp}_{i}
    \end{equation}
\end{proof}
This proof is equivalent to the version in \cite{cosse2021stable} but with the application of $\left\{ \M B^{\perp} \right\}$. 

\section{Example of Extended Polynomial System \\ and Its Affine Form}
\label{appx:momcond_linear}
We provide an example of the extended polynomial system and transform it to the affine form \eqref{eq:gmm_cond} used in \eqref{eq:BPUE}:

Consider a linear measurement model with state $\M x \in \mathbb{R}^2$, measurements $\M y$, and measurement noise $\M w$:
\begin{equation}
\begin{aligned}
    y_1 - x_1 &= w_1 \\
    y_2 - x_2 &= w_2
\end{aligned}
\end{equation}
The extended system at $r=2$ is: 
\begin{equation}
\M \phi_r(\M y - \M x) =  \M \phi_r(\M w)
\end{equation}
where
\begin{equation}
\begin{aligned}
     \M \phi_r(\M y - \M x) &= \begin{bmatrix}
            y_1 - x_1\\
            y_2 - x_2\\
            (y_1-x_1)(y_2-x_2)\\
            (y_1-x_1)^2\\
            (y_2-x_2)^2
        \end{bmatrix}\\ &= \begin{bmatrix}
            y_1 - x_1\\
            y_2 - x_2\\
            y_1y_2 + x_1x_2 - y_1x_2 - y_2x_1\\
            y_1^2  + x_1^2 - 2y_1x_1\\
            y_2^2  + x_2^2 - 2y_2x_2
        \end{bmatrix}
\end{aligned}
\end{equation}
and
\begin{equation}
    \begin{aligned}
        % \M \phi_r(\M y) = \begin{bmatrix}
        %     y_1\\y_2\\y_1y_2\\y_1^2\\y_2^2
        % \end{bmatrix},
        % \M \phi_r(\M x) = \begin{bmatrix}
        %     x_1\\x_2\\x_1x_2\\x_1^2\\x_2^2
        % \end{bmatrix},
        \M \phi_r(\M w) = \begin{bmatrix}
            w_1\\w_2\\w_1w_2\\w_1^2\\w_2^2
        \end{bmatrix}
    \end{aligned}.
\end{equation}

We can use an affine function to express the moment condition $ \M \phi_r(\M y - \M x) - \mathbb{E}[ \M \phi_r(\M w) ] =  \M \phi_r(\M w) - \mathbb{E}[ \M \phi_r(\M w) ]$:
\begin{equation}
    \vb(\M y) - \M{A}(\M y)\M \phi_r(\M x) = \vv, \quad \M V = \operatorname{Cov}[\vv],
\end{equation}
where
\begin{equation}
\begin{aligned}
    \M b(\M y) &= \begin{bmatrix}
        y_1 - \mathbb{E}[w_1]\\
        y_2 - \mathbb{E}[w_2]\\
         y_1y_2 - \mathbb{E}[w_1w_2]\\
        y_1^2 - \mathbb{E}[w_1^2]\\
        y^2_2 - \mathbb{E}[w_2^2]\\
    \end{bmatrix},     \M v = \begin{bmatrix}
        w_1 - \mathbb{E}[w_1]\\
        w_2 - \mathbb{E}[w_2]\\
         w_1w_2 - \mathbb{E}[w_1w_2]\\
        w_1^2 - \mathbb{E}[w_1^2]\\
        v^2_2 - \mathbb{E}[w_2^2]\\
    \end{bmatrix}, \\
    \M A(\M y) &= \begin{bmatrix}
        1&0&0&0&0\\
        0&1&0&0&0\\
        y_2&y_1&-1&0&0\\
        2y_1&0&0&-1&0\\
        0&2y_2&0&0&-1
    \end{bmatrix}, 
    \M \phi_r(\M x) = \begin{bmatrix}
        x_1\\x_2\\x_1x_2\\x_1^2\\x_2^2
        \end{bmatrix}.
\end{aligned}
\end{equation}

\section{Proof of \Cref{prop:blue-quad}}
% \propbluequad*
\begin{proof}
 The proof proceeds by simple inspection. We first develop the term $||\M A \M x - \M b||_{\M{V}^{-1}}^2$ as:
    %In BLUE, we assume that $\M{A}$ has full column rank \cite{humpherys2012fresh}. Then we expand the cost of BLUE as:
    \begin{equation}\label{eq:appF1}
    \begin{aligned}
        ||\M A \M x - \M b||_{\M{V}^{-1}}^2 &= \M x^\transpose \M A^\transpose \M{V}^{-1} \M A \M x - 2\M x^\transpose \M A^\transpose\M{V}^{-1}\M b + \M b ^\transpose \M{V}^{-1}\M b.
    \end{aligned}
    \end{equation}
    Now let us substitute $\hat{\M x} = ( \M A^\transpose \M{V}^{-1} \M A)^{-1} \M A^\transpose \M{V}^{-1} \M b$ 
    and its covariance $\M{\Sigma} = (\M A^\transpose \M{V}^{-1} \M A)^{-1}$ (which is invertible under the assumption that $\M A$ is full column rank), 
    back into the term $||\M x - \hat{\M x}||_{\M{\Sigma}^{-1}}^2$ and develop the square:
    \begin{equation}
        \begin{aligned}
            &\quad||\M x - (\M A^\transpose \M{V}^{-1} \M A)^{-1}\M A^\transpose \M{V}^{-1} \M b||^2_{\M A^\transpose \M{V}^{-1} \M A}\\
            &= \left(\M x - (\M A^\transpose \M{V}^{-1} \M A)^{-1}\M A^\transpose \M{V}^{-1} \M b\right)^\transpose\M A^\transpose \M{V}^{-1} \M A\\
            &\quad\cdot\left(\M x - (\M A^\transpose \M{V}^{-1} \M A)^{-1}A^\transpose \M{V}^{-1} \M b\right)\\
            &=(\M x^\transpose(\M A^\transpose \M{V}^{-1} \M A) - \M b^\transpose \M{V}^{-1} \M A) \\
            &\quad \cdot (\M x - (\M A^\transpose \M{V}^{-1} \M A)^{-1}\M A^\transpose \M{V}^{-1} \M b) \nonumber
        \end{aligned}
    \end{equation}
    \begin{equation}\label{eq:appF2}
        \begin{aligned}
            &= \M x^\transpose\M A^\transpose \M{V}^{-1} \M A\M x - 2\M x^\transpose \M A^\transpose\M{V}^{-1}\M b\\
            &\quad+ \M b^\transpose\M{V}^{-1} \M A(\M A^\transpose \M{V}^{-1} \M A)^{-1}\M A^\transpose \M{V}^{-1} \M b.
        \end{aligned}
    \end{equation}

By comparing~\eqref{eq:appF1} and~\eqref{eq:appF2} we conclude that:
\begin{equation}
    \begin{aligned}
         \| \M{A}\M{x} - \M{b} \|^2_{\M{V}^{-1}} &= \| \M x - \hat{\M x} \|^2_{\M{\Sigma}^{-1}} + \hat{\rho},
         \end{aligned}
\end{equation}
where $\hat{\rho} =: \M b^\transpose \M{V}^{-1} \M b - \M b^\transpose\M{V}^{-1}\M A(\M A^\transpose \M{V}^{-1} \M A)^{-1}\M A^\transpose \M{V}^{-1} \M b$.
Now we note that $\hat{\rho}$ is the optimal objective of~\eqref{eq:BLUE}: this can be seen by substituting the estimate $\hat{\M x}$ (which is optimal by construction) back into the \eqref{eq:BLUE} objective:
\begin{equation}
    \begin{aligned}
         &\| \M{A} (\M A^\transpose \M{V}^{-1} \M A)^{-1}\M A^\transpose \M{V}^{-1} \M b - \M{b} \|^2_{\M{V}^{-1}} \\
        =& 
        \M b^\transpose \M{V}^{-1} \M A (\M A^\transpose \M{V}^{-1} \M A)^{-1} \M{A}^\transpose
        \M{V}^{-1} \M{A} (\M A^\transpose \M{V}^{-1} \M A)^{-1}\M A^\transpose \M{V}^{-1} \M b \\ 
        & + \M{b}^\transpose \M{V}^{-1} \M{b} - 2 \M b^\transpose \M{V}^{-1} \M{A} (\M A^\transpose \M{V}^{-1} \M A)^{-1}\M A^\transpose \M{V}^{-1} \M b 
        \\
        =&
        \M b^\transpose \M{V}^{-1} \M A (\M A^\transpose \M{V}^{-1} \M A)^{-1}\M A^\transpose \M{V}^{-1} \M b \\ 
        & + \M{b}^\transpose \M{V}^{-1} \M{b} - 2 \M b^\transpose \M{V}^{-1} \M{A} (\M A^\transpose \M{V}^{-1} \M A)^{-1}\M A^\transpose \M{V}^{-1} \M b 
        \\
        =&
        \M{b}^\transpose \M{V}^{-1} \M{b} - \M b^\transpose \M{V}^{-1} \M{A} (\M A^\transpose \M{V}^{-1} \M A)^{-1}\M A^\transpose \M{V}^{-1} \M b = \hat{\rho}.
    \end{aligned}
\end{equation}
which concludes the proof.
\end{proof}

\section{Proof of \Cref{theorem:SDP=POP}}
\label{apx:SDP=SOS}
% \theoremSDPPOP*
% 0Now we show that, under the rank-1 assumption, the SoS form induced from the optimality condition can be used to construct the prior factor in the moment matching problem. 

% Now, it suffices to show the equivalence of the SDP primal solution and the duality certificate.
% \tsl{Need to show why it is sufficient to construct $\xi$ by $B^{\perp}$ .} \tsl{Need some basis theorem maybe.}
% \begin{theorem}[Dual certificate as the prior factor]
%     % The \ref{SDP} has rank-1 optimal solutions if and only if the SOS global optimality condition is satisfied and $\sigma_0(x)$ is an prior factor of the moment matching problem.
%     % Suppose that the solution
%     % The dual SOS $\sigma_0(x)$ is in the form of prior factor if and only if the first-order optimality condition is satisfied. 
%     Suppose the unique optimal solution of \ref{prob:pop} is $\bar{x}$. The \ref{prob:pop_sdp} corresponding to \ref{prob:pop} has rank-1 optimal solution $v_r(\bar{x})v_r(\bar{x})^{\transpose}$. Then the dual certificate is of the form \mbox{$\sigma_0(x) = \| v_r(\M x) - v_r(\bar{x}) \|_{W}^2$}, $W>0$, iff the first order optimality condition \ref{def:kkt_sdp} is satisfied. 
% \end{theorem}
\begin{proof}
    Before the main proof, we first convert the optimality condition 1) in \Cref{def:kkt_sdp} to their polynomial form.  Taking inner products on both sides of the stationary condition 1) in \Cref{def:kkt_sdp} with ${\M X}:= \M v_r(\M x)\M v^{\transpose}_r(\M x)$:
    $$\langle\M Y, \M{X} \rangle= \langle \M C + \sum_{j} \lambda_{j} \M G_{j} + \big( \sum_{i}\mu_{i} \M B^{\perp}_{i} - \rho \M e_1\M e_1^{\transpose} \big), \M{X} \rangle$$
    we have the optimality condition in the polynomial form \cite{lasserre2001global, lasserre2015introduction}:
\begin{equation}
    \M v^{\transpose}_r(\M x)\M Y\M v_r(\M x) = p(\M x) + \sum_j h_j(\M x)g_j(\M x) - \rho,
\end{equation}
where the term corresponding to $\M B^{\perp}$ vanishes by construction, and $h_i(\M x)$ are the polynomials defined by the localizing matrix $\M G_i$ and associated multiplier $\lambda_i$. Now we start the main proof. 

    $\Longrightarrow:\;$ 
    We prove that when the primal and dual solutions take the form described in \Cref{theorem:SDP=POP}, then they must satisfy the optimality conditions in~\Cref{def:kkt_sdp}.
    Suppose the dual solution (written as a polynomial) has the form:
    \begin{equation}
        \begin{aligned}
        \M v^{\transpose}_r(\M x)\hat{\M{Y}}\M v_r(\M x) &= \|\M \phi_r(\M x) - \M \phi_r(\hat{\M x})\|^2_{\M{\Sigma}^{-1}}, \quad  \M{\Sigma} \succ 0 
                    % &= p(\M x) + \sum_{k=1}^m h_i(\M x)g_i(\M x) - \rho, \M{\Sigma}^{-1} > 0 
        \end{aligned}.
    \end{equation}
    % {\color{red} are we assuming the shape of Y here? also: is the product below giving zero?}\tsl{Yes, the shape needs to be compatible with $\phi(x)$. The next equation tells how to obtain the SOS belief from a matrix.}
    with: % Can be expanded as a $s(r, n)\times s(r, n)$ to serve as the dual certificate. 
    % \begin{equation}
    %     \begin{aligned}
    %         & \M v^{\transpose}_r(\M x)\hat{\M{Y}}\M v_r(\M x) \\
    %         &=\begin{bmatrix}
    %         1 \\
    %         \M \phi_r(\M x)
    %     \end{bmatrix}^{\transpose}
    %     \begin{bmatrix}
    %          \M \phi_r(\hat{\M x})^{\transpose}\M{\Sigma}^{-1} \M \phi_r(\hat{\M x}) & -(\M{\Sigma}^{-1}\M \phi_r(\hat{\M x}))^{\transpose} \\
    %         -\M{\Sigma}^{-1}\M \phi_r(\hat{\M x}) & \M{\Sigma}^{-1}
    %     \end{bmatrix}\begin{bmatrix}
    %         1 \\
    %         \M \phi_r(\M x)
    %     \end{bmatrix}
    %     \end{aligned}.
    % \end{equation}
    % The dual solution can be constructed as 
    $$\hat{\M{Y}}:= \begin{bmatrix}
        \M \phi_r(\hat{\M x})^{\transpose}\M{\Sigma}^{-1}\M \phi_r(\hat{\M x}) & -(\M{\Sigma}^{-1}\M \phi_r(\hat{\M x}))^{\transpose} \\
            -\M{\Sigma}^{-1}\M \phi_r(\hat{\M x}) & \M{\Sigma}^{-1}
        \end{bmatrix},$$
    and:
    $$
        \hat{\M{X}}:= \begin{bmatrix}
            1 \\
            \M \phi_r(\hat{\M x})
        \end{bmatrix}\begin{bmatrix}
            1 \\
            \M \phi_r(\hat{\M x})
        \end{bmatrix}^{\transpose}.
    $$
    First, we observe that $\hat{\M X}, \hat{\M Y} \succeq 0$ and that ---by simple inspection--- $\langle \hat{\M X}, \hat{\M Y} \rangle = 0$; hence these matrices satisfy condition 3) in \Cref{def:kkt_sdp}.
    Second, it is straightforward to verify that $\operatorname{Rank}(\hat{\M{X}}) = 1$, $\operatorname{Rank}(\hat{\M{Y}}) = s(r, n) - 1$ hence the matrices satisfy condition 4) in \Cref{def:kkt_sdp}. By the rank condition, the null space of $\hat{\M Y}$ must be in the span of $\M {v}_r(\hat{\M x})$. Thus $\hat{\M{X}}$ is the unique moment matrix that satisfies the conditions 3) and 4). 
    Third, we note that the dual solution exists and is bounded. Thus the primal feasibility must be satisfied to ensure the multiplier not going to infinity. Then we conclude that $\hat{\M X}$ is a feasible solution, hence it satisfies condition 2) in \Cref{def:kkt_sdp}. % \tsl{To verify the primal feasibility, we may need to point out that the dual solution is bounded. A result from conic optimization.} , and $\langle \hat{\M{X}}, \hat{\M{Y}} \rangle = 0$,

    % Third, we note that $\hat{\M X}$ is feasible, hence it satisfies condition 2) in \Cref{def:kkt_sdp}.
    To verify the satisfaction of the optimality condition 1), we write
     the optimality condition in polynomial form:
    \begin{equation}\label{eq:appG1}
    \M v^{\transpose}_r(\M x)\hat{\M{Y}}\M v_r(\M x) = \M v^{\transpose}_r(\M x)\left(\M C -\rho \M e_1\M e_1^{\transpose} + \sum_{j} \lambda_{j} \M G_{j}\right)\M v_r(\M x). 
    \end{equation}
    By Lemma \ref{lemma:exist_sos_xi}, there must exist $\M \xi = \sum_{i} \mu_{i} \M B^{\perp}_{i}$ that relates the matrices on the left and right-hand side of~\eqref{eq:appG1}: 
    $$\hat{\M{Y}} = \M C - \rho \M e_1\M e_1^{\transpose} + \sum_{j} \lambda_{j} \M G_{j} + \sum_{i} \mu_{i} \M B^{\perp}_{i},$$
    which shows that the dual solution $\hat{\M{Y}}$ also satisfies the optimality condition 1) in \Cref{def:kkt_sdp}. 
    
     $\Longleftarrow:\;$
     We prove that any primal-dual pair that satisfies the optimality conditions in \Cref{def:kkt_sdp} must be in the form described in \Cref{theorem:SDP=POP}.  
    If the SDP has a rank-1 solution and satisfies the strict complementary condition in \Cref{def:kkt_sdp}, we have $\langle \hat{\M{X}}, \hat{\M{Y}}\rangle = 0$, with $\operatorname{Rank}(\hat{\M{X}}) = 1, \operatorname{Rank}(\hat{\M{Y}}) = s(r, n) - 1$. By SVD of $\hat{\M{Y}}$, we have:

    \begin{equation}
        \begin{aligned}
            \hat{\M{Y}} &= \sum_{i=1}^{s(r, n) - 1} \delta_i \M f_i\M f_i^{\transpose} \\
              &= \begin{bmatrix}
                  \sqrt{\delta_1}\M f^{\transpose}_1 \\
                  \vdots \\
                  \sqrt{\delta_{s(r, n) - 1}}\M f^{\transpose}_{{s(r, n) - 1}} \\
              \end{bmatrix}^{\transpose}
              \begin{bmatrix}
                  \sqrt{\delta_1}\M f^{\transpose}_1 \\
                  \vdots \\
                  \sqrt{\delta_{s(r, n) - 1}}\M f^{\transpose}_{{s(r, n) - 1}} \\
              \end{bmatrix} \\
              &=: \M F\M F^{\transpose},
        \end{aligned}
    \end{equation}
    where $\M f_i \in \mathbb{R}^{s(r, n)\times 1}$ are mutually orthogonal and $\delta_i > 0$ are non-trivial singular values. Similarly, by SVD of $\hat{\M{X}}$ and $\langle \hat{\M{X}}, \hat{\M{Y}} \rangle = 0$, we have:
    \begin{equation}
        \hat{\M{X}} =  \begin{bmatrix}
            1 \\
            \M \phi_r(\hat{\M x})
        \end{bmatrix}\begin{bmatrix}
            1 \\
            \M \phi_r(\hat{\M x})
        \end{bmatrix}^{\transpose}, 
        \text{ and }
        \begin{bmatrix}
            1 \\
            \M \phi_r(\hat{\M x})
        \end{bmatrix}^{\transpose}\M f_i = 0, \forall i.
    \end{equation}
Let us partition $\M F$ by:
    \begin{equation}
    \begin{aligned}
        \M F &= \begin{bmatrix}
            \M f \\
            \bar{\M F}
        \end{bmatrix}
    \end{aligned}
    \end{equation}
where $\M f\in\mathbb{R}^{1\times (s(r,n) - 1) }, \bar{\M F} \in \mathbb{R}^{(s(r, n) - 1) \times (s(r, n) - 1) }$. By the strict complementary condition $\langle \hat{\M{X}}, \hat{\M{Y}} \rangle = 0$, we have
    \begin{equation}
    \begin{aligned}
        \M F^{\transpose}\begin{bmatrix}
            1 \\
            \M \phi_r(\hat{\M x})
        \end{bmatrix} &= \begin{bmatrix}
            \M f^{\transpose}, \bar{\M F}^{\transpose}
        \end{bmatrix}\begin{bmatrix}
            1 \\
            \M \phi_r(\hat{\M x})
        \end{bmatrix} \\
        &= \M f^{\transpose} + \bar{\M F}^{\transpose}
            \M \phi_r(\hat{\M x}) = 0.
    \end{aligned}
    \end{equation}
    \begin{equation}
        \M f^{\transpose} = - \bar{\M F}^{\transpose}\M \phi_r(\hat{\M x})
    \end{equation}
    As ${\M f}^{\transpose}$ is a linear combination of the rows of $\bar{\M F}$, the matrix $\bar{\M F}$ must have full rank to ensure that $\operatorname{Rank}(\M F) = s(r, n) - 1$. We can then obtain the SOS belief via:
    \begin{equation}
    \begin{aligned}
        &\sigma(\hat{\M{x}}, \M\Sigma)  \\
        &:= \begin{bmatrix}
            1 \\
            \M \phi_r(\M x)
        \end{bmatrix}^{\transpose}\M F\M F^{\transpose}\begin{bmatrix}
            1 \\
            \M \phi_r(\M x)
        \end{bmatrix} \\
        &= \begin{bmatrix}
            1 \\
            \M \phi_r(\M x)
        \end{bmatrix}^{\transpose}
        \begin{bmatrix}
            \M f\M f^{\transpose} & \M f\bar{\M F}^{\transpose} \\
            \bar{\M F}\M f^{\transpose} & {\bar{\M F}\bar{\M F}^{\transpose}}\\
        \end{bmatrix}
        \begin{bmatrix}
            1 \\
            \M \phi_r(\M x)
        \end{bmatrix} \\
    \end{aligned} \nonumber
    \end{equation}
    \begin{equation}
    \begin{aligned}
        &= \begin{bmatrix}
            1 \\
            \M \phi_r(\M x)
        \end{bmatrix}^{\transpose}
        \begin{bmatrix}
            \M \phi^{\transpose}_r(\hat{\M x}){\bar{\M F}\bar{\M F}^{\transpose}}\M \phi_r(\hat{\M x}) & -\M \phi^{\transpose}_r(\hat{\M x}){\bar{\M F}\bar{\M F}^{\transpose}}\\
            -{\bar{\M F}\bar{\M F}^{\transpose}}\M \phi_r(\hat{\M x}) & {\bar{\M F}\bar{\M F}^{\transpose}}\\
        \end{bmatrix}
        \begin{bmatrix}
            1 \\
            \M \phi_r(\M x)
        \end{bmatrix} \\
        & = \|\M \phi_r(\M x) - \M \phi_r(\hat{\M x})\|^2_{{\bar{\M F}\bar{\M F}^{\transpose}}} \\
        % &= \|\M \phi_r(\M x) - \M \phi_r(\hat{\M x})\|^2_{\M{\Sigma}^{-1}}.
    \end{aligned}
    \end{equation}

    As $\bar{\M F}$ is full rank thus is bijective, $ \M s^{\transpose}\bar{\M F}\bar{\M F}^{\transpose}\M s = 0 \Leftrightarrow \bar{\M F}^{\transpose}\M s = 0 \Leftrightarrow \M s \in \operatorname{Null}(\bar{\M F}^{\transpose}) \Leftrightarrow \M s = 0 \Leftrightarrow {\bar{\M F}\bar{\M F}^{\transpose}}\succ 0$. Then we let $ \M{\Sigma} = ({\bar{\M F}\bar{\M F}^{\transpose}})^{-1}$ and have the SOS belief: $$\|\M \phi_r(\M x) - \M \phi_r(\hat{\M x})\|^2_{\M{\Sigma}^{-1}}, \quad \M{\Sigma} \succ 0.$$
    Furthermore, can see that the dual solution takes the form:
    \begin{equation}
        \hat{\M{Y}} = \begin{bmatrix}
            \M \phi^{\transpose}_r(\hat{\M x}){\bar{\M F}\bar{\M F}^{\transpose}}\M \phi_r(\hat{\M x}) & -\M \phi^{\transpose}_r(\hat{\M x}){\bar{\M F}\bar{\M F}^{\transpose}}\\
            -{\bar{\M F}\bar{\M F}^{\transpose}}\M \phi_r(\hat{\M x}) & {\bar{\M F}\bar{\M F}^{\transpose}}\\
        \end{bmatrix}.
    \end{equation}
    which matches the form in \Cref{theorem:SDP=POP} with $ \M{\Sigma} = ({\bar{\M F}\bar{\M F}^{\transpose}})^{-1}$.
\end{proof}

% {\color{red}
\section{Derivation of the Optimal Solution to BPUE under Constraints}
\label{appx:bpue-sos-cons}
\begin{theorem}[SOS Belief of Constrained BPUE]
In the constrained case, where $\M x_k \in \mathcal{K}$, the SOS belief includes additional polynomial multipliers:
\begin{equation*}
    % \begin{aligned}
        \|\M \phi_r(\M x) - \M \phi_r(\hat{\M x})\|^2_{\M \Sigma^{-1}} + \rho = \|\M A \phi_r(\M x) - \M b\|^2_{\M V^{-1}} + \sum_j h_j(\M x)g_j(\M x),
\end{equation*}
with $\M{\Sigma}\succ 0$ and where $h_j(\M{x})$ are the multipliers. For a feasible point $\M x_k \in \mathcal{K}$, we have $g_j(\M x_k) = 0$, thus $\M{\Sigma}\succ 0$ still behaves as an approximated covariance on the support $\mathcal{K}$. 
 % \tsl{How about the constrained case? We need to restrict the system on the support $\mathbb{K}$. }
\end{theorem}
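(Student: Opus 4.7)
The plan is to leverage the Lagrangian-based polynomial identity that already underlies \Cref{theorem:SDP=POP}, whose optimality conditions in \Cref{def:kkt_sdp} explicitly accommodate the localizing matrices $\M G_j$ associated with the polynomial equality constraints $g_j(\M x) = 0$ defining $\mathcal{K}$. First, I would invoke the stationarity condition of \Cref{def:kkt_sdp} for the moment relaxation of the constrained \eqref{eq:BPUE}, which reads
\begin{equation*}
\hat{\M Y} = \M C + \sum_j \lambda_j \M G_j + \sum_i \mu_i \M B^{\perp}_i - \rho\, \M e_1 \M e_1^{\transpose}.
\end{equation*}
Taking the quadratic form $\M v_r^{\transpose}(\M x)(\cdot)\M v_r(\M x)$ on both sides, using \Cref{lemma:B_orthogonal} to eliminate the $\M B^{\perp}_i$ contributions (since $\langle \M B^{\perp}_i, \M v_r(\M x)\M v_r^{\transpose}(\M x)\rangle = 0$ by construction), and recognising that $\langle \M C, \M v_r(\M x)\M v_r^{\transpose}(\M x)\rangle = \|\M A \M \phi_r(\M x) - \M b\|^2_{\M V^{-1}}$, I would obtain
\begin{equation*}
\M v_r^{\transpose}(\M x)\hat{\M Y}\M v_r(\M x) = \|\M A \M \phi_r(\M x) - \M b\|^2_{\M V^{-1}} + \sum_j h_j(\M x) g_j(\M x) - \rho,
\end{equation*}
where the $h_j(\M x)$ are the polynomials induced by the multipliers $\lambda_j$ and the entries of $\M G_j$.

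Next, under \Cref{assump:rank1sdp}, I would invoke \Cref{theorem:SDP=POP} to conclude that the dual optimum $\hat{\M Y}$ has the block structure for which $\M v_r^{\transpose}(\M x)\hat{\M Y}\M v_r(\M x) = \|\M \phi_r(\M x) - \M \phi_r(\hat{\M x})\|^2_{\M \Sigma^{-1}}$ with $\M \Sigma \succ 0$. Substituting this expression into the preceding identity and rearranging the constant $\rho$ delivers precisely the claimed factorisation. Restricting attention to $\M x \in \mathcal{K}$, for which $g_j(\M x) = 0$ for every $j$, makes the localising term $\sum_j h_j(\M x) g_j(\M x)$ vanish pointwise on the feasible set, so the SOS belief agrees with the \eqref{eq:BPUE} objective (up to the constant $\rho$) on $\mathcal{K}$, supporting the interpretation of $\M \Sigma$ as an approximate covariance on the support.

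The main obstacle I anticipate is not in the algebraic manipulation above, but in verifying that the rank conditions required by \Cref{theorem:SDP=POP} (rank-$1$ primal and rank-$(s(r,n)-1)$ dual) continue to pin down the block structure of $\hat{\M Y}$ when nontrivial $\M G_j$ are present, and in particular that the extracted $\M \Sigma$ remains positive definite despite the additional $\sum_j \lambda_j \M G_j$ contribution to $\hat{\M Y}$. Since the proof in \Cref{apx:SDP=SOS} already carries the localising multipliers through the stationarity condition, this essentially amounts to checking that the SVD-based extraction of $\bar{\M F}$ in that proof goes through verbatim, with rank condition (4) of \Cref{def:kkt_sdp} guaranteeing invertibility of the principal $(s(r,n)-1)\times(s(r,n)-1)$ block and hence positive definiteness of $\M \Sigma$.
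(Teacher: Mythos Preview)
Your proposal is correct and follows essentially the same approach as the paper: both start from the stationarity condition in \Cref{def:kkt_sdp}, take the quadratic form $\M v_r^{\transpose}(\M x)(\cdot)\M v_r(\M x)$ to obtain the polynomial identity (with the $\M B^{\perp}_i$ terms vanishing), and then invoke the block structure of $\hat{\M Y}$ from \Cref{theorem:SDP=POP} under \Cref{assump:rank1sdp} to identify the left-hand side as the SOS belief with $\M\Sigma\succ 0$. The paper additionally writes out the explicit block partition of $\hat{\M Y}$ (with the $\M G_{c,j},\M G_{l,j},\M G_{Q,j}$ contributions) to give closed-form expressions for $\M\Sigma^{-1}$ and the estimate, and then sketches the noiseless-limit covariance argument by setting $\M\mu=\M 0$, $\M\lambda=\M 0$ as in \Cref{sec:theory-covariance}; your last paragraph correctly anticipates that the SVD extraction in \Cref{apx:SDP=SOS} carries through verbatim since the rank conditions in \Cref{def:kkt_sdp} are agnostic to whether $\M G_j$ terms are present.
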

\begin{proof}
Under \Cref{assump:rank1sdp}, 
% \Cref{theorem:SDP=POP}, we can construct the SOS belief by: % reformulate the objective function of \eqref{eq:BPUE} as an SOS belief: 
by the stationary condition 1) in \Cref{def:kkt_sdp} with ${\M X}:= \M v_r(\M x)\M v^{\transpose}_r(\M x)$, we have
\begin{equation}
\begin{aligned}
    % L(\M x, \M \mu, \rho) 
\M v^{\transpose}_r( \M x) \hat{\M Y} \M v_r(\M x) 
    &= \langle \hat{\M Y}, \M X  \rangle\\
    &=\|\M b - \M A\M \phi_r(\M x)\|^2_{\M{V}^{-1}} - \rho \\
    &  + \sum_i\mu_i\left\langle \M B_i^{\perp}, \M v_r( \M x)\M v^{\transpose}_r(\M x) \right \rangle \\
    & + \sum_{j} \lambda_{j} \left\langle \M G_{j}, \M v_r( \M x)\M v^{\transpose}_r(\M x)  \right\rangle,
\end{aligned}
\end{equation}

Consider the partition of the matrix $\M{B}^{\perp}$ and $\M{G}$:
\begin{equation}
    \M B^{\perp} =: \begin{bmatrix}
        0 & \M B^{\perp \intercal}_{l} \\
        \M B^{\perp}_{l}& \M B^{\perp}_{Q} 
    \end{bmatrix},    \M G =: \begin{bmatrix}
        \M{G}_c & \M G^{\intercal}_{l} \\
        \M G_{l}& \M G_{Q} 
    \end{bmatrix}.
\end{equation}
% {\color{red} where we noticed that neither matrices involve the constant (unit) term at the top left of the moment matrix.}\tsl{$G$ needs a constant term. Will fix!}
% Using the multipliers obtained in the \eqref{prob:pop_sdp}, we have the \textit{dual solution} $\hat{\M Y}$:
Then, the \textit{dual solution} $\hat{\M Y}$ takes the following form:
\begin{equation}
\label{eq:dual_sol_belief}
\hat{\M Y} = \begin{bmatrix} \M Y_c&\M Y_l^\transpose\\\M Y_l & \M Y_Q 
    \end{bmatrix}
\end{equation}
where
\begin{equation*}
\begin{aligned}
\M Y_c &= \M b^{\transpose}\M{V}^{-1}\M b - {\rho} + \sum_j\lambda_j\M G_{c,j} \\
\M Y_l &= -{\M A}^{\intercal}\M{V}^{-1}{\M b} + \sum_i{\mu}_i \M B^{\perp}_{l, i}+ \sum_j{\lambda}_j \M G_{l, j}\\
\M Y_Q &= {\M A}^{\intercal}\M{V}^{-1}{\M A} + \sum_i {\mu}_i\M B_{Q,i}^{\perp} + \sum_j{\lambda}_j \M G_{Q, j}
\end{aligned}
\end{equation*}
By the structure of the SOS belief in \Cref{theorem:SDP=POP}, we have:
\begin{equation}
    \M{\Sigma}^{-1} = {\M A}^{\intercal}\M{V}^{-1}{\M A} + \sum_i {\mu}_i\M B_{ Q,i}^{\perp} + \sum_j{\lambda}_j \M G_{Q, j}.
\end{equation}
and we can further quantify the estimation error by substituting $\M{b} = \M{A}\M{\phi}_r(\bar{\M{x}}) + \M{v}$:
\begin{equation}
\label{eq:BPUE_estimate_cons}
\begin{aligned}
   % &\M P\M \phi_r(\hat{\M x}) - {\M A}^{\intercal}\M{V}^{-1}{\M b} + \sum_i{\mu}_i \M B^{\perp}_{l, i} = 0 \\
    \quad\M \phi_r(\hat{\M x}) &= -\M{Y}_{Q}^{-1}\M{Y}_l \\
    % &\quad \color{gray} \text{\% $- \M{\Sigma}^{-1} \M \phi_r(\hat{\M x}) = \M Y_l $ by \Cref{theorem:SDP=POP}} \\
    &= {\M \Sigma} ({\M A}^{\intercal}\M{V}^{-1}{\M b} - \sum_i{\mu}_i \M B^{\perp}_{l, i} - \sum_j{\lambda}_j \M G_{l, j}) \\
    % &\quad \color{gray} \text{\% substituting $\M b = {\M A}\M \phi_r(\bar{\M x}) + \M v$ from \eqref{eq:gmm_cond}}\\
            &= {\M \Sigma}{\M A}^{\intercal}\M{V}^{-1}{\M A}\M \phi_r(\bar{\M x}) + {\M \Sigma}{\M A}^{\intercal}\M{V}^{-1}\M v \\
            & \quad - \sum_i{\mu}_i \M B^{\perp}_{l, i} -  \sum_j{\lambda}_j \M G_{l, j}
\end{aligned}
\end{equation}
From which we define the same  $\delta\M{\phi}$ as in the unconstrained case: $$\delta\M{\phi} := \M{\Sigma}\M{A}^{\transpose}\M{V}^{-1}\M{v}.$$
% For the derivation of the covariance approximation, in the constrained case that $\M x_k \in \mathcal{K}$, it is obvious to see that if we substitute $\M A = \bar{\M A}$ and $\M b = \bar{\M b} = \bar{\M A}\M \phi_r(\hat{\M x})$ to \eqref{eq:BPUE_dual_sol} and set $\M \mu = 0$ and $\M \lambda = 0$, the rest of the proof stays identical.

For the derivation of the approximated covariance under \Cref{assump:observability}, we follow the same steps in \Cref{sec:theory-covariance} to substitute $\M A = \bar{\M A}$ and $\M b = \bar{\M b} = \bar{\M A}\M \phi_r(\bar{\M x})$ in the noiseless case and set $\M \mu = 0$ and $\M \lambda = 0$. 
This allows concluding that (as in the unconstrained case) $\bar{\M\Sigma} = \bar{\M{A}}^{\transpose}\M{V}^{-1}\bar{\M{A}}$ is an approximated covariance for the limiting case of noiseless measurements.
\end{proof}
% }

\section{Connection to Kalman Filtering}
\label{apx:linearkf}

% \tsl{TODO: Just mention QP and give the result.}
\subsection{Recovering the BLUE}
% \tsl{few sentences.}
For unconstrained~\eqref{eq:BPUE} with Gaussian noise and with $r = 1$, problem~\eqref{eq:BPUE} simplifies to the same linear least squares problem as~\eqref{eq:BLUE}, hence the optimal solution of~\eqref{eq:BLUE} becomes the familiar least squares estimate, which is an optimal estimator in this setup. 
% to BLUE and we can extract the linear dependency of $\M{v}$. According to \eqref{eq:appF2}, we have the gain as:
% \begin{equation}
%     \M{K} = ( \M A^\transpose \M{V}^{-1} \M A)^{-1} \M A^\transpose \M{V}^{-1}
% \end{equation}

% \tsl{Low Priority: Derive the result given the certificate. }
% We hereby compare the BPUE with BLUE and KF in the linear system without extended noise. Consider the linear dependency of $\M{v}$ term in \eqref{eq:BPUE_estimate2} at \Cref{sec:BPUE_cov}, we can generalize the gain of BLUE to the polynomial system considering the multipliers obtained from \eqref{prob:pop_sdp}:
% \begin{definition}[Gain of BPUE]
%     $$\M K_{uncon} = (\M A^{\transpose}\M{V}^{-1}\M A + \sum_i \mu_i \M B^{\perp}_{Q, i})^{-1}\M A^{\transpose}\M{V}^{-1}$$
%     $$\M K_{cons} = (\M A^{\transpose}\M{V}^{-1}\M A + \sum_i \mu_i \M B^{\perp}_{Q, i} + \sum_i \lambda_i \M G^{\perp}_{Q, i})^{-1}\M A^{\transpose}\M{V}^{-1}.$$
% \end{definition}
% For unconstrained cases with $r=1$, all the multiplier vanishes and the gain reduces the classical BLUE.

\subsection{Recovering the Kalman Filter}

Here we show that in the linear Gaussian case, using GMKF with $r=1$ produces the standard Kalman filter.

{\bf Update Step and Kalman Gain.}
Consider the optimization in the update step of the GMKF, with an SOS belief obtained from the previous step and a linear observation with Gaussian noise. Let us choose $r = 1$ and assume $\mathcal{K} = \mathbb{R}^n$:
\begin{equation}
    \min_{\M x}\quad \|{\M x} -\hat{\M x}^{-} \|^2_{\M{\Sigma}^{-1}} + \|\M b - \M A\M{x}\|^2_{\M{V}^{-1}}.
\end{equation}
Via the optimality conditions in \Cref{def:kkt_sdp} and  using \Cref{theorem:SDP=POP}, the dual solution with the same partition as \eqref{eq:dual_sol_belief} becomes:
\begin{equation}
\label{eq:dual_sol_kf}
\begin{aligned}
% \M Y_c &= \M{\phi}^{\transpose}_r(\hat{\M x})\M{\Sigma}^{-1}\M{\phi}_r(\hat{\M x}) + \M b^{\transpose}\M{V}^{-1}\M b - {\rho} \\
\M Y_l &=  -\M{\Sigma}^{-1}\hat{\M{x}}^{-}  -{\M A}^{\intercal}\M{V}^{-1}{\M b}\\
\M Y_Q &= \M{\Sigma}^{-1} + {\M A}^{\intercal}\M{V}^{-1}{\M A}
\end{aligned}
\end{equation}
Thus, the estimate becomes:
\begin{equation}
\begin{aligned}
    \hat{\M x}^{+} &= -\M{Y}_Q^{-1}\M{Y}_l \\
                   &= \M{Y}_Q^{-1}(\M{\Sigma}^{-1}\hat{\M{x}}^{-} + \M{A}^{\transpose}\M{V}^{-1}( \M{A}\hat{\M{x}}^{-}  + \M{b} - \M{A}\hat{\M{x}}^{-} )) \\
                   & = \hat{\M{x}}^{-} + \M{Y}_Q^{-1}\M{A}^{\transpose}\M{V}^{-1}(\M{b} - \M{A}\hat{\M{x}}^{-}) % \M{K} (\M{A}^{\transpose}\M{V}^{-1}\M{A} + \M{\Sigma}^{-1})
\end{aligned}
\end{equation}
% By substituting $\M{b} = \M{A}\M{\phi}_r(\bar{\M{x}}) + \M{v}$ and extract the linear dependency of $\M{v}$, 
% \tsl{We resemble that...compare the structure.}
The structure of the estimate resembles the result of the update step of the standard Kalman filter, and indeed inspecting the matrix multiplying the innovation term $\M{b} - \M{A}\hat{\M{x}}^{-}$, we find that it is exactly the Kalman gain:
% \begin{definition}[Kalman gain]
    $$\M K = (\M A^{\transpose}\M{V}^{-1}\M A + \M{\Sigma}^{-1})^{-1}\M A^{\transpose}\M{V}^{-1}.$$
% \end{definition}
% For unconstrained cases with $r=1$, all the multiplier vanishes and the gain reduces the classical Kalman gain. 

% \subsection{Covariance Update}
% We find that for unconstrained cases, the result reduces to BLUE with the classical Kalman gain. If we choose the operating point with noiseless measurement and prior, we have the equivalent propagation of covariance as:

Now we can recover the covariance given the dual solution in \eqref{eq:dual_sol_kf}. Let $\hat{\M\Sigma}^{+}_k = \M{Y}_Q^{-1}$ and $\M\Sigma = \hat{\M\Sigma}_k^{-}$, we have: % the equivalent propagation of covariance considering the multiplier $\M{\mu}$ and $\M{\lambda}$ for the moment and localizing constraints: 
\begin{equation}
    (\hat{\M{\Sigma}}^{+}_{k})^{-1} = (\hat{\M{\Sigma}}^{-}_{k})^{-1} + \M A^{\transpose}\M V_k^{-1}\M A. % + \M{\xi}. % \sum_i \mu_i\M B^{\perp}_{Q, i}
\end{equation}
% For the constrained case:
% \begin{equation}
%     \M{\xi}_{uncon} = \sum_i \mu_i\M B^{\perp}_{Q, i} + \sum_j \lambda_j\M G^{\perp}_{Q, j}.
% \end{equation}
% For the unconstrained case:
% \begin{equation}
%     \M{\xi}_{uncon} = \sum_i \mu_i\M B^{\perp}_{Q, i}.
% \end{equation}
% For the unconstrained case with $r = 1$, $\M{B}^{\perp}$ vanishes and we have the result in the classical Kalman filter 
Using the matrix inversion lemma, we further have:
\begin{equation}
    \hat{\M{\Sigma}}^{+}_k = \hat{\M{\Sigma}}^{-}_k - \hat{\M{\Sigma}}^{-}_k\M A^{\transpose}(\M V_k + \M{A}\hat{\M{\Sigma}}^{-}_k\M A^{\transpose})^{-1}\M A\hat{\M{\Sigma}}^{-}_k.
\end{equation}
% Via the matrix inversion lemma, we further have:
% \begin{equation}
%     \hat{\M{\Sigma}}^{+}_k = \hat{\M{\Sigma}}^{-}_k - \hat{\M{\Sigma}}^{-}_k((\M A^{\transpose}\M V_k^{-1}\M A + \M{\xi})^{-1} + \hat{\M{\Sigma}}^{-}_k)^{-1}\hat{\M{\Sigma}}^{-}_k.
%     \label{Cov_Update}
% \end{equation}
which matches the covariance update in the  Kalman filter. 
% \subsection{Propagation Steps}

{\bf Prediction and Marginalization.}
Consider the prediction step with linear dynamics 
$$\M{x}_{k+1} - \M{F}\M{x}_{k} - \M{G}\M{u}_k = \M{w}_k.$$
We write the prediction step with $r = 1$ as:
\begin{equation}
    \min_{\M{x}_{k+1}, \M{x}_k} \|\M{x}_{k} - \hat{\M x}^{+}_{k}\|^2_{\M{\Sigma}^{-1}_{k}} + \| \M{x}_{k+1} - \M{F}\M{x}_{k} - \M{G}\M{u}_k \|^{2}_{\M{Q}^{-1}}.
\end{equation}
Then we factorize the cost as the quadratic form:
\begin{equation}
    \begin{bmatrix}
        1 \\
        \M{x}_k \\
        \M{x}_{k+1}
    \end{bmatrix}^{\transpose}
    \begin{bmatrix}
        \M{Y}_c & \M{Y}_l^{\transpose} \\
        \M{Y}_l & \M{Y}_Q
         % *  & \M{\Sigma}^{-1}_k\hat{\M{x}}^{-1} - \M{F}^{\transpose}\M{Q}^{-1}(\M{F}\bar{\M{x}}^{+} + \M{G}\M{u}_k) & -\M{Q}^{-1}\M{F}^{\transpose}\hat{\M{x}}^{-1} - \M{Q}^{-1}(\M{F}\bar{\M{x}}^{+} + \M{G}\M{u}_k)\\
         % *  & \M{\Sigma}^{-1}_{k} & -\M{F}^{\transpose}\M{Q}^{-1} \\
         % *  & \M{Q}^{-1}\M{F} & \M{Q}^{-1}
    \end{bmatrix}
    \begin{bmatrix}
        1 \\
        \M{x}_k \\
        \M{x}_{k+1}
    \end{bmatrix}
\end{equation}
with 
% $$\M{Y}_l = \begin{bmatrix}
%     (\M{\Sigma}^{-1}_{k} + \M{F}^{\transpose}\M{Q}^{-1}\M{F})\hat{\M{x}}^{+} - \M{F}^{\transpose}\M{Q}^{-1}\M{G}\M{u}_k   \\
%     -\M{Q}^{-1}\M{F}^{\transpose}\hat{\M{x}}^{-} + \M{Q}^{-1} \M{G}\M{u}_k
% \end{bmatrix}$$
$$\M{Y}_Q = \begin{bmatrix}
         \M{\Sigma}^{-1}_{k} + \M{F}^{\transpose}\M{Q}^{-1}\M{F} & -\M{F}^{\transpose}\M{Q}^{-1} \\
         -\M{Q}^{-1}\M{F} & \M{Q}^{-1}
    \end{bmatrix}$$
$$\M{Y}_l = -\M{Y}_Q\begin{bmatrix}
     \hat{\M{x}}_k^{-} \\
     \M{F}\hat{\M x}^{-}_k + \M{G}\M{u}_k
\end{bmatrix}$$
Then via Schur complement, we have 

$$\M{Y}_Q^{-1} = \begin{bmatrix}
    \M{\Sigma}_k & \M (\M{Y}_{Q}^{-1})_c^{\transpose} \\
    \M (\M{Y}_{Q}^{-1})_c & \M{F}\M{\Sigma}_k\M{F}^{\transpose} + \M{Q}
\end{bmatrix}, $$ % can use a separate notation
where the block for cross terms $\M (\M{Y}_{Q}^{-1})_c = (\M{F}\M{\Sigma}_k\M{F}^{\transpose} + \M{Q}) ^{-1}
\M{Q}^{-1}\M{F}
(\M{\Sigma}^{-1}_{k} + \M{F}^{\transpose}\M{Q}^{-1}\M{F})^{-1} $. 
The block of $\M{Y}_Q^{-1}$ corresponding to $\M{x}_{k+1}$ becomes: 
\begin{equation}
\label{eq:appI-pred1}
\M{\Sigma}_{k+1} := \M{F}\M{\Sigma}_k\M{F}^{\transpose} + \M{Q}. % (\M{Y}_Q^{-1})_{k+1}
\end{equation}
We can further find that the solution of $\M{x}_{k+1}$ becomes:
\begin{equation}
\label{eq:appI-pred2}
\hat{\M{x}}^{-}_{k+1} = \M{F}\hat{\M{x}}^{+}_{k} + \M{G}\M{u}_k.
\end{equation}
It can be easily seen that~\eqref{eq:appI-pred1} and~\eqref{eq:appI-pred2} recover the estimate and covariance prediction in the classical Kalman filter.
% Now we have shown that the proposed prediction steps are equivalent to the result in classical Kalman Filter.
% \begin{equation}
%     \M{Y}_Q^{-1} = \begin{bmatrix}
%         \M{\Sigma}_k &  \M{\Sigma}_k\M{F}^{\transpose}\M{Q}^{-1}\M{F}^{\transpose}\M{\Sigma}_k\M{F} + \M{\Sigma}_k\M{F}^{\transpose} \\
%                     *&       \M{F}^{\transpose} \M{\Sigma}_k\M{F} + \M{Q}
%     \end{bmatrix}       
% \end{equation}
\end{appendices}
\clearpage

}
% \bibliography{references}

\end{document}